\documentclass[11pt,letterpaper]{article}
\usepackage{fullpage}
\usepackage{url}

\usepackage{amssymb,amsmath,amsfonts}
\usepackage{amsmath}
\usepackage{graphicx}
\usepackage{times}
 \usepackage{ifpdf}
\usepackage{amsthm}
\usepackage{algorithm}
\usepackage{wrapfig}
\usepackage[algo2e, ruled, vlined]{algorithm2e}

\newtheorem{thm}{Theorem}[section]
\newtheorem{theorem}{Theorem}[section]

\newtheorem{lemma}[thm]{Lemma}

\newtheorem{example}[thm]{Example}

\newtheorem{corollary}[thm]{ Corollary}

\newcommand{\ignore}[1]{}

\newcommand\E{\textsc{E}}
\newcommand\V{\textsc{V}}

\newcommand{\I}{\texttt{I}}
\newcommand{\eA}{\hat{{\sf A}}}
\newcommand{\emA}{\widehat{{\sf mA}}}
\newcommand{\M}{\texttt{M}}
\newcommand{\bphi}{\boldsymbol{\phi}}
\newcommand{\boldf}{\boldsymbol{f}}
\newcommand{\btheta}{\boldsymbol{\theta}}

\newcommand{\Var}{\mathop{\sf Var}}

\newcommand{\OPT}{\textsc{OPT}}
\newcommand{\Average}{\mathop{{\texttt{{\sf Ave}}}}}
\newcommand{\Median}{\mathop{\texttt{{\sf Median}}}}

\newcommand{\tReachonearg}[2]{\texttt{R}^{#1}(#2)}

\newcommand{\RReach}{\texttt{R}} 
\newcommand{\CReach}{\texttt{Reach}} 
\newcommand{\VReach}{\texttt{VReach}} 


\usepackage[utf8]{inputenc} 
\usepackage[T1]{fontenc}    
\usepackage{hyperref}       
\usepackage{url}            
\usepackage{booktabs}       
\usepackage{amsfonts}       
\usepackage{nicefrac}       
\usepackage{microtype}      

\newcommand{\notinproc}[1]{#1}
\newcommand{\onlyinproc}[1]{}

\usepackage{xcolor} 

\SetCommentSty{mycommfont}

\title{Sample Complexity Bounds for Influence Maximization}


\author{Gal Sadeh, Tel Aviv University,\texttt{galsdh@gmail.com} \and
        Edith Cohen, Google Research and Tel Aviv University, \texttt{edith@cohenwang.com}\and
        Haim Kaplan, Google Research and Tel Aviv University, \texttt{haimk@tau.ac.il}}
 
\begin{document}
\maketitle

\begin{abstract}
Influence maximization (IM) is the problem of finding for a given $s\geq 1$ a
set $S$ of $|S|=s$ nodes in a network with maximum influence.
With stochastic diffusion models, the influence of a set $S$ of seed nodes is
defined as the expectation of its {\em reachability} over {\em
  simulations}, where each simulation specifies a deterministic reachability function.
Two well-studied special cases are the {\em Independent Cascade (IC)} and
the {\em Linear Threshold} (LT) models of Kempe, Kleinberg, and Tardos
\cite{KKT:KDD2003}.  The influence function in stochastic diffusion is unbiasedly estimated by averaging reachability values over i.i.d.\ simulations. We study the IM {\em sample complexity}: the number of 
simulations needed to determine a $(1-\epsilon)$-approximate maximizer
with confidence $1-\delta$.  Our main result is a surprising upper bound of 
$O( s \tau \epsilon^{-2} \ln \frac{n}{\delta})$ for a broad class of
models that includes IC and LT models and their mixtures, where $n$ is the
number of nodes and $\tau$ is the number of diffusion steps.  Generally $\tau \ll n$, so this significantly improves over the generic upper bound of
$O(s n \epsilon^{-2} \ln \frac{n}{\delta})$. 
Our sample complexity bounds are derived
  from novel upper bounds on the variance of the reachability that allow for small relative error for influential sets and
  additive error when influence is small.
  Moreover, we provide a data-adaptive method that can detect and utilize fewer simulations on models where it
  suffices.   Finally, we provide an efficient greedy design that computes an $(1-1/e-\epsilon)$-approximate
maximizer from simulations and applies to any submodular stochastic diffusion model 
that satisfies the variance bounds.

\end{abstract}

\section{Introduction}

Models for the spread of information among networked entities were 
studied for decades in sociology and 
economics~\cite{ThresholdModels:1978,EasleyKleinbergBook:2010,JacksonNetworks:Book2010}.
A diffusion process is initiated from a seed set of 
nodes (entities) and progresses in steps:  Initially, only the seed nodes are
activated.  In each step additional nodes may become active
based on the current set of active nodes.  The progression can be deterministic or stochastic.
The $t$-stepped influence of a seed set $S$ of nodes is then
defined as its expected reachability (total number of active nodes) in
$t$ steps.

{\em Influence maximization} (IM) is the problem of finding a set $S$
of nodes of specified cardinality $|S|=s$ and
maximum {\em influence}.  The IM problem was formulated nearly two decades ago by Richardson and
Domingos \cite{RichardsonDomingos:KDD2001,RichardsonDomingos:KDD2002}
and inspired by the application of viral marketing. 
In a seminal paper, Kempe,  Klienberg, and Tardos
\cite{KKT:KDD2003} studied stochastic diffusion models and introduced two elegant special cases, the {\em Independent Cascade
  (IC)} and {\em Generalized Threshold (GT)}  diffusion 
 models.  Their work sparked extensive followup
research and large scale implementations
\cite{MosselR:STOC2007,CWY:KDD2009,JHC:ICDM2012,NguyenTD:TON2017}.
Currently IM is applied in multiple domains with
linked entities for tasks as varied as diversity-maximization
(the most representative subset of the population) and sensor
placement that maximize coverage \cite{Leskovec:KDD2007,ChenLakshmananCastillo_book:2013,MirzasoleimanKSK:NIPS2013}.

\begin{wrapfigure}{R}{0.5\textwidth}
\vspace{-10pt}
\includegraphics[width=0.5\textwidth]{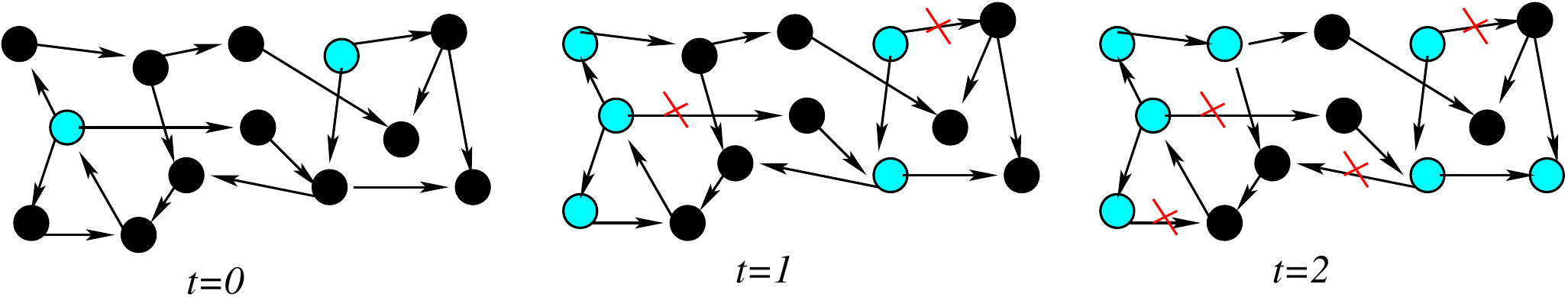}
\caption{\small A 2-step cascade from two seed nodes.
}\label{cascade:ex}
\vspace{-10pt}
\end{wrapfigure}
We consider
{\em stochastic diffusion models (SDM)}  $\mathcal{G}(V)$ over $|V|=n$ 
nodes that are specified by a distribution $\bphi \sim \mathcal{G}$ over sets
$\bphi := \{\phi_v\}_{v\in V}$
of monotone non-decreasing boolean {\em
  activation functions} 
\[\phi_v:2^{V\setminus \{v\}}\rightarrow \{0,1\} .\]
A diffusion process starts with a seed set $S \subset V$ of nodes and $\bphi \sim \mathcal{G}$.
At step $0$ we activate the seed nodes $\CReach^0(\bphi,S) := S$.
The diffusion then proceeds deterministically:
At step $t>0$ all active nodes remain active and we activate any
inactive node $v$ where
$\phi_v(\CReach^{t-1}(\bphi,S))=1$:
\[
\CReach^{t+1}(\bphi,S) := \{v\in V \mid \phi_v(\CReach^t(\bphi,S))=1\} .\]
The {\em $\tau$-steps reachability set} of a seed set $S$ is
the random variable $\CReach^\tau(\bphi,S)$ for $\bphi\sim
\mathcal{G}$ and
respectively  the $\tau$-steps {\em reachability}, $\RReach^\tau(S)$, is the
random variable that is the number of active nodes $|\CReach^\tau(\bphi,S)|$ for $\bphi\sim \mathcal{G}$.
Finally, the influence value of $S$ is defined to be the expectation
\[ \I^\tau(S) := \E[\RReach^\tau(S)] = \E_{\bphi\sim\mathcal{G}}[|\CReach^\tau(\bphi,S)|]  .\]  We
refer to the case where the diffusion is allowed to progress until there is no
growth as {\em unrestricted} diffusion  and this corresponds to
$\tau=n-1$.  The influence  $\I^\tau(S)$ is a monotone set function.
We say that an SDM is {\em submodular} when the influence function is
submodular and that it is {\em independent} if the activation functions  $\phi_v$ of different nodes are independent random variables. 
 The IM problem for seed set size $s$ and $\tau$ steps is 
to find \[\arg\max_{S: |S|\leq k}\I^\tau(S). \] 
  
 The reader might be more familiar with well-studied special cases of
 this general formulation.
{\em Live-edge} diffusion models
$\mathcal{G}(V,\mathcal{E})$ are specified by a graph $(V,\mathcal{E})$
with $|V|=n$ nodes and $|\mathcal{E}|=m$ directed edges and a
distribution $E\sim \mathcal{G}$ over subsets $E \subset \mathcal{E}$
of "live" edges.  When expressed as an SDM, the activation functions that correspond to $E$ have
$\phi_v(T)=1$  if and only if there is an edge from a node in $T$ to $v$ in the graph $(V, E)$.
Live-edge models are always submodular:  This because
$|\CReach^\tau(E,S)|$, which
\ignore{

  }
is the number of nodes reachable from $S$ in $(V,E)$ by
paths of length at most $\tau$, is a coverage function and hence 
monotone and submodular.  Therefore, so is the influence function $\I^\tau(S)$,
which is an expectation of a distribution over coverage functions.
A live-edge model is independent if we only have dependencies between incoming edges to the same node.
The Independent Cascade (IC) model is the special case of an
independent live-edge model where all
edges $e\in \mathcal{E}$ are independent Bernoulli random variables selected with probabilities $p_e$ ($e\in
\mathcal{E}$).

Another well-studied class are {\em generalized threshold} (GT) models
\cite{KKT:KDD2003, MosselR:STOC2007}.  A GT model
$\mathcal{G}(V,\boldf)$ is specified by a set $\boldf := \{f_v\}_{v\in V}$
of monotone functions $f_v:2^V\rightarrow [0,1]$.   The
randomization is specified by a set of threshold values
$\btheta\sim \mathcal{G}$ where
$\btheta :=\{\theta_v\}_{v\in V}$.  
The corresponding activation functions to $\btheta$ are
\[ \phi_v(T) := \text{Indicator}( \theta_v \leq f_v(T)) .\]
A well-studied subclass are {\em Independent GT (IGT)}  where we
require that $\boldf$ are
submodular and nodes $v\in V$ have independent threshold values
$\theta_v \sim U[0,1]$. 
Mossel and Roch \cite{MosselR:STOC2007,0612046p29:online} proved that
IGT models are submodular, which is surprising because the functions
$|\CReach^\tau(\bphi,S)|$ are generally not submodular.
Their proof was provided for unrestricted diffusion but extends to the case where we stop the process
after $\tau$ steps.
Finally, Linear threshold (LT) models \cite{ThresholdModels:1978,KKT:KDD2003}  are a
special case of IGT where we
have an underlying directed graph and each edge $(u,v)$ is associated
with a fixed weight value $b_{u v}\geq 0$ so that for all $v\in V$
$\sum_u b_{uv} \leq 1$ and the functions are defined as the sums
$f_v(A) := \sum_{u\in A \cap N(v)} b_{u v}$.  Kempe et al showed \cite{KKT:KDD2003} that each LT model is equivalent
to an independent live-edge model.

One of the challenges brought on by the IM formulation is
computational efficiency.
Kempe et al \cite{KKT:KDD2003} noted that the IM problem generalizes the classic Max Cover problem even with
$\tau=1$ and a live-edge model with a fixed set of live edges ($p_e =1$ for all $e\in \mathcal{E}$).  Therefore, IM inherits Max Cover's hardness
of approximation for ratio
better than $1-(1-1/s)^s \geq 1-1/e$ \cite{feige98} for a cover with $s$ sets.  
On the positive,
with submodular models, an approximation ratio of $1-(1-1/s)^s$ can be 
achieved by the first $s$ nodes of a greedy sequence generated by sequentially adding a node with maximum marginal value \cite{submodularGreedy:1978}.
A challenge of applying Greedy with stochastic models, however, is that even point-wise evaluation of the influence
function can be computationally intensive. Exact evaluation even for IC models is 
 \#P hard \cite{CWW:KDD2010}.   As for approximation,
Kempe et al proposed to work with {\em averaging oracles} 
\[\eA^\tau (T)  :=
\frac{1}{\ell} \sum_{i=1}^\ell |\CReach^\tau(\bphi_i,T)|\]
that
average the reachability values
obtained from a set $\{\bphi_i\}_{i=1}^\ell$ of i.i.d.\
simulations.   Recall that in the general SDM formulation, a simulation is
specified by a set $\bphi$ of node activation functions.
For live-edge models, a simulation is simply a set of
concurrently live edges $E$.  In GT models, a simulation is specified
by a set of thresholds $\btheta$.  

\begin{wrapfigure}{r}{0.20\textwidth}
  \vspace{-10pt}
\begin{example}\label{polysimu:ex}
\includegraphics[width=0.20\textwidth]{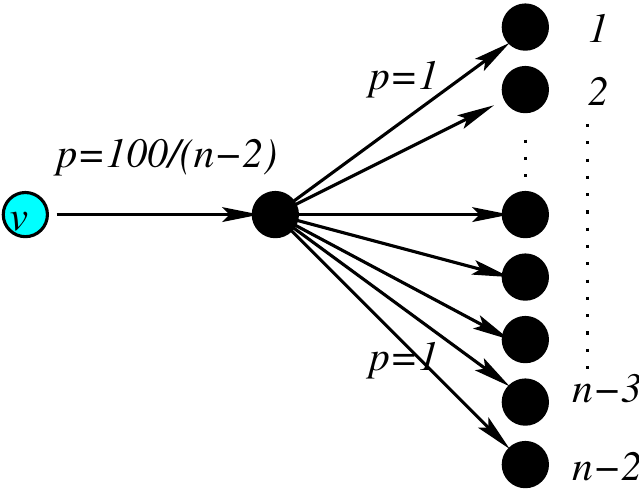}
{\small Node $v$ has influence $\I^{\tau=2}(v)=100$ but variance 
  $\approx 100n$. 
}
\end{example}
\vspace{-10pt}
\end{wrapfigure}
The averaging oracle has some appealing properties:  First, it is
robust compared to estimators tailored to models that satisfy specific assumptions (see
related work section) in that
for any diffusion model $\mathcal{G}$, also with complex and unknown dependencies
(between activation functions of different nodes or between edges in
live-edge models), for any set $S$, $\eA(S)$ is an unbiased
estimate of the exact influence value $\I^\tau(S)$ and estimates are
accurate as long as the variance of $\RReach^\tau(S)$ is
"sufficiently" small. Second, in terms of practicality, the oracle is directly available from
simulations and does not require learning or inferring the underlying diffusion
model that generated the data
\cite{SaitoNK2008,GoyalBonchi:wsdm2010,GRLK:KDD2010}.  Therefore, the
results are not sensitive to modeling assumptions and learning
accuracy~\cite{Chen:KDD2016,HeKempe:KDD2016}.  Often, estimation of model
parameters requires a large number of simulations: Even for IC models,
Example~\ref{tinymatter:ex} shows that edges with tiny probabilities that require many
simulations to estimate can be critical for IM accuracy.
 Third, in terms of computation, 
when the reachability functions $\CReach^\tau(\bphi,T)$ are monotone and submodular 
(as is the case with live-edge models), so is their average $\eA$, and hence 
the oracle optimum can be approximated by the greedy algorithm.  Prior
work addressed the efficiency of working with averaging oracles by
improving the efficiency of greedy maximization
\cite{Leskovec:KDD2007,CELFpp:WWW2011} and applied
sketches \cite{ECohen6f} to 
efficiently estimate $\eA(S)$ values
\cite{CWY:KDD2009,binaryinfluence:CIKM2014}.

The fundamental 
question we study here is the {\em sample complexity}
of IM, that is, the number of i.i.d.\
simulations needed to recover an approximate maximizer of the influence function $\I^\tau$. Formally, for parameters $(\epsilon,\delta)$, identify a seed set $T$ of size $s$ so that 
  $\Pr\left[\I^\tau(T)\geq (1-\epsilon) \OPT^\tau_s\right]\geq 1-\delta$, 
where $\OPT^\tau_s := \max_{S \mid |S|\leq s} \I^\tau(S)$ is the exact
maximum.  Note that the recovery itself is generally computationally hard
and the sample complexity only considers the information we can glean from a set of simulations.

Kempe et al provided an upper bound of
\begin{equation}\label{naive:eq}
  O\left(\epsilon^{-2} s n \log \frac{n}{\delta}\right)\  
\end{equation}
on the sample complexity of the harder {\em Uniform Relative-Error
  Estimation (UREE)} problem where for a given $(\epsilon,\delta)$  we bound the number of simulations so that with
 probability 
$1-\delta$, for all subsets $S$ such that $|S|\leq s$, $\eA(S)$ approximates $\I^\tau(S)$ within relative error of $\epsilon$.  The sample complexity of UREE upper bounds that of IM because the oracle maximizer $\arg\max_{S \mid |S|\leq s} \eA(S)$ must be an approximate maximizer.
We provide the argument for \eqref{naive:eq} here because it is basic and broadly applies to all
SDMs: The reachability values $\CReach^\tau(\bphi,S)$, and hence their expectation, $\I^{\tau}(S)$
have values in $[1,n]$. Using the multiplicative Chernoff bound (with values divided by $n$) 
we obtain that 
$O(\epsilon^{-2} n \ln \delta^{-1})$
simulations guarantee a relative error of $\epsilon$ with probability at least
$(1-\delta)$ for the estimate of any particular set 
$S$.  Interestingly, this bound is tight for point-wise estimation even for IC models:
Example~\ref{polysimu:ex} shows a family of models where $\tau=2$ and 
 $\Omega(\epsilon^{-2} n)$ simulations are required for estimating the influence value of a 
 single node. 
The UREE sample complexity bound \eqref{naive:eq} follows from applying a
union bound over all $\binom{n}{s}=O(n^s)$ subsets.
\begin{wrapfigure}{r}{0.2\textwidth}
\vspace{-18pt}  
\begin{example}\label{tinymatter:ex}
\includegraphics[width=0.2\textwidth]{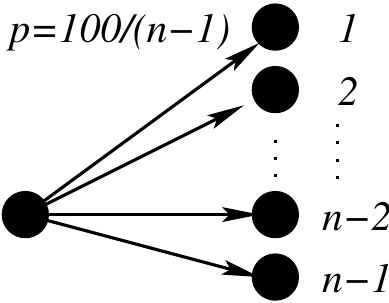}
{\scriptsize Star graph:  The center node has influence value $101$ and all other 
  nodes have influence $1$.}
\end{example}
\vspace{-16pt}
\end{wrapfigure}

The generic upper bound has prohibitive linear dependence on the
number of nodes $n$ (that Example~\ref{polysimu:ex} shows is
unavoidable for UREE even for IC models).  
A simple example shows that we can not hope for an umbrella improvement for IM:
Consider the star graph family of Example~\ref{tinymatter:ex} when
edges are dependent so that
either all edges are live or none is. Clearly $n/100$ simulations are
necessary to detect a 1-step approximate maximizer (which must be the actual
maximizer).
The remaining hope is that we can obtain stronger bounds on the IM sample
complexity for models with weaker or no dependencies such as the IC
and IGT models.
This question eluded researchers for nearly two
decades.

\ignore{
Influence maximization from averaging oracles $\eA$ has practical appeal 
also because it does not require learning a model and hence is robust to modeling or inference errors.
The starting point in applications is typically raw activity data of interacting
entities. When performing the optimization on a model, it needs to first be learned or inferred~\cite{SaitoNK2008,GoyalBonchi:wsdm2010,GRLK:KDD2010}.
Simulations (sets of simultaneously live edges), on the other hands, can be gleaned directly as
activity snapshots of  the network or as aggregated activity over time windows.
The model inference and optimization pipeline is sensitive to modeling assumptions and accuracy of estimating model parameters~\cite{Chen:KDD2016,HeKempe:KDD2016}.  The phenomenon generating the data may have complex dependencies between edge random variables that are lost in a simplistic model (e.g.\ an IC model can not capture dependencies) and  requires a massive amount of data to model properly. Even estimating marginal edge probabilities $p_e$ requires a large amount of data: Edges with tiny, polynomially small probabilities, can be critical
for the accuracy of influence maximization  (see Example~\ref{tinymatter:ex})
but a polynomial number of
$\Omega(1/p_e)$ independent ``observations'' of the state of the edge is required in order to accurately estimate each $p_e$. The large amount of raw data required to produce a "sufficiently accurate" model
may not be available or can be costly to obtain.}

\subsection*{Contributions and overview}


  We study the sample complexity of influence maximization from averaging oracles computed from i.i.d.\
  simulations. 
  One of our main contributions is an upper bound of
  \begin{equation}
  O\left(\epsilon^{-2} s \tau \log \frac{n}{\delta}\right)
  \end{equation}
  on the IM sample complexity of independent strongly submodular SDMs.
  Informally, strong submodularity means that the
  influence function of any
  ``reduced'' model (model derived from original one by setting a
  subset $T\subset V$ of nodes as active) is submodular.
 The IC and IGT models are special cases of strongly submodular
 independent SDMs.

 Interestingly,  we provide similar sample complexity bounds for
 natural families of models that are not independent:  Mixtures of small number of 
 strongly submodular SDMs
and what we call
 {\em  $b$-dependence} live-edge models that allow for positive dependence of small groups
 of edges with a shared tail node.

   Our bound improves over prior work by replacing the prohibitive linear dependence in the number of nodes
  $n$ in \eqref{naive:eq}  with the typically much
  smaller value $\tau$.   While on worst-case instances unrestricted
  diffusions may require $\Omega(n)$ steps, understanding
  the sample complexity in terms of $\tau$ is important:
 First,  IM  with explicit step limits 
\cite{CLZ:AAAI2012,LiuCZ:ICDM2012,Gomez-RodriguezBS:ICML2011,DSGZ:nips2013,timedinfluence:2015},
is studied for applications where activation time matters.  Moreover, 
due to the 
``small world'' phenomenon \cite{TraversMilgram:1969}, in ``natural''
settings we can expect most
activations (even with unrestricted diffusions) to occur within a small
number of steps.  In the latter case, unrestricted influence values are
approximated well by corresponding step-limited influence with $\tau\ll n$.

Our improvement is surprising as generally a linear-in-$n$ number of
simulations is necessary for estimating influence values of some
nodes or to estimate essential model parameters (for example, the edge
probabilities in IC models), and this is the case even when $\tau$ is
very small.  This shows that the maximization problem is in an
information sense inherently
easier and can circumvent these barriers.

 We overview our results and implications --  complete proofs can be
 found in the appendix. We review related work in
 Section~\ref{related:sec} and place it in the context of our results.
 In Section~\ref{prelim:sec} we formulate quality measures for influence
 oracles and relate unrestricted and step-limited influence.
In particular, we observe that for IM it suffices that the oracle
provides good estimates (within a small relative error) of larger
influence values.  This allows us to circumvent the
lower bound  for point-wise relative-error estimates shown in Example~\ref{polysimu:ex}.

 In Section~\ref{varbounds:sec} we state our main technical result that
 upper bounds $\Var[\RReach^\tau(T)]$ by 
$\tau \I^\tau(T) \max_{v \in V 
  \setminus T} \I^{\tau-1}( v)$
 for independent strongly submodular SDMs.   This variance upper bound facilitates
 estimates with small
relative error for sets with larger influence values and
additive error for sets with small influence values.
We also provide a family of
IC models that shows that the linear dependence on $\tau$ in the
variance bound is necessary.  We derive similar variance bounds to 
mixtures of strongly submodular independent SDMs and $b$-dependence models.    All our
subsequent sample complexity bounds apply generically to any SDM
 (submodular/independent or not)  that satisfies variance bounds of this form. In Section~\ref{averaging:sec} we review {\em averaging oracles} and bound the sample complexity using variance upper bounds.
In section~\ref{moa:sec} we present our
{\em median-of-averages} oracle 
that amplifies the confidence guarantees of the averaging oracle and facilitates a tighter sample complexity bound.
\ignore{
 obtaining them
using $O(\epsilon^{-2} \tau \log
\delta^{1})$ i.i.d. simulations.  The construction uses the ``median
trick'' \cite{ams99} and organizes the simulations as
$O(\log \delta^{-1})$
pools of $O(\epsilon^{-2}\tau)$ simulations, returning the median of
the averages.
Our main result then follows by applying our amplified oracle with
$\delta $ adequate to provide a uniform error bound for all $\binom{n}{s}$ subsets of
cardinality at most $s$, which requires $O(\log({\delta^{-1}}) s \tau
\epsilon^{-2} \log n$ i.i.d. simulations.
}
In Section~\ref{adaptive:sec} we provide a data-adaptive framework that provides guarantees while avoiding the worst-case sample complexity upper bounds on models when a smaller number of simulations suffices.
\ignore{
Our simulation bounds are worst-case and in practice we can expect
that a much smaller number of simulations suffices.
Moreover, our oracles are such that ``validation'' of influence for a smaller
number of can be done
with a much smaller number of simulations.  This allows us to 
adaptively increase the number of simulations when the validation fails.
}
In Section~\ref{greedy:sec} we consider computational efficiency and present a greedy
maximization algorithm based on our median-of-averages oracles that
returns a $(1-(1-1/s)^s -\epsilon$ approximate maximizer with
probability $1-\delta$.  The
design generically applies to any SDM with a submodular
influence function that satisfies the variance bounds.
\ignore{
\\
***
Intuition for Theorem 4.1
\\
In section \ref{varbounds:sec} we show an optimal bound on the variance of the reachability of each set of nodes $T$ in IC model by:
\[ \Var[\RReach^\tau(T)] \leq \tau \I^\tau(T) \max_{v \in V 
  \setminus T} \I^{\tau-1}( v)   .\]

The main idea of the proof is inductively simulate the diffusion from $T$. For every step we define a discrete random variable $A$ that says who are the nodes we activate in the next step. We use the \textit{total variance formula} to express the reachability of $T$ as: 
\[
\Var[\tReachonearg{\tau}{T}] = {\Var}_{A}[\E[\tReachonearg{\tau}{T | A}]] + \E_{A }[\Var[\tReachonearg{\tau}{T | A}]]
 .\]

We show that thanks to the monotonicity and submodularity of the influence we can bound the first term by $\I^\tau(T) \max_{v \in V \setminus T}$. We use the second term to perform an induction step and construct a new IC model with one less step than before, the seed set in the new IC model is the value of $A$ that maximizes the variance of the reachability. Repeating this process $\tau$ times gives us the required bound.    
***
}

\section{Related work} \label{related:sec}


\begin{wrapfigure}{r}{0.4\textwidth}
\vspace{-22pt}
\begin{example}\label{dependence:ex}
\includegraphics[width=0.4\textwidth]{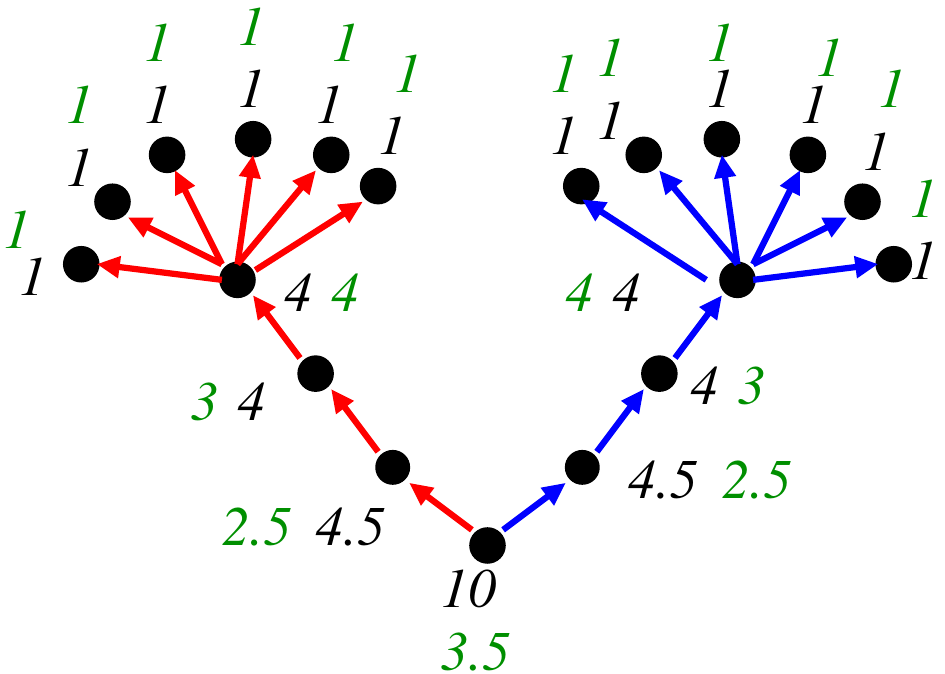}
{\scriptsize Model where with probability $\frac{1}{2}$ all red edges are active and otherwise all blue edges are active.  The influence values $\I^4(v)$ are shown in black. Simulation averages and RR samples with full simulations provide unbiased estimates of influence values $\E[\hat{\I}^4(v)]=\I^4(v)$. However, "efficient" RRS, which works with the marginal edge probabilities ($p_e=\frac{1}{2}$) or with decomposed simulations is biased, with $\E[\hat{\I}^4(v)]$ shown in green.  We can see that the bias induces large errors and also yields an erroneous maximizer.}
\end{example}
\vspace{-20pt}
\end{wrapfigure}
Our focus here is on influence estimates obtained from averages of
i.i.d.\ simulations of a model.  We note that alternative approaches 
can be more
effective for specific families of models.
In particular, for IC models,  state of the art large-scale greedy
approximate maximization algorithms 
 \cite{TXS:sigmod2014,TSX:sigmod2015,Nguyen:SIGMOD2016,HuangWBXL:VLDB2017}
are not based on simulation averages.  The estimates are also obtained by
building for each node a sample of its "influence set" but instead they use a finer building block of i.i.d.\ {\em Reverse Reachability (RR)
  searches}.
The random RR search method was proposed in \cite{ECohen6f} to 
estimate size of reachability sets in graphs 
and Borg et al \cite{BBCL:SODA2014} adapted it to 
IC models.

The method can be applied in principle for any live-edge model:
A basic RR search is conducted by selecting a node $v\in V$ uniformly at random and performing a BFS search on 
reversed edges that is pruned at length $\tau$. The search "flips" edges
as their head node is reached according to conditional  distribution on $\mathcal{G}$. The index number of the RR search is
then added to the sample set of each node that is "reached" by the search. 
Influence of a subset $S$ can then be unbiasedly estimated from the cardinality of
the union of the samples of nodes $v\in S$ and the greedy algorithm
can be applied to the sets of samples for approximate maximization. To obtain an approximate influence maximizer we need to perform RR searches 
until some node has a sample of size $O\left(\epsilon^{-2} s \log (n/\delta)\right)$.   In the worst
case,  this requires $O\left(\epsilon^{-2} s n\log (n/\delta)\right)$ RR searches. 
For general live-edge models, an independent RR search can
always be obtained
from a simulation $E\sim \mathcal{G}$ by randomly drawing
a node and performing a reverse search from it using edges $E$. The
same simulation, however, can not generally be reused to generate
multiple independent RR searches. This way of obtaining RR searches
works for general live-edge models (with arbitrary dependencies) but requires $O(\epsilon^{-2} n s \log (n/\delta))$ simulations, which does not
improve over the generic upper bound \eqref{naive:eq}.

The appeal of the RR searches method is that it can be implemented
very efficiently for independent live-edge (including IC or LT) models.
The total work performed requires only
$O(\epsilon^{-2} m s (\log (n/\delta)))$  "edge flips" that can be
easily performed using specified edge probabilities $p_e$ for IC models.
Moreover, the basic building block of RR searches are local
simulations of sets of incoming edges of specified nodes and the full
computation requires at most $O(\epsilon^{-2} s \log (n/\delta))$
local simulations for each node.   
 When we have
full simulations generated by an independent live-edge model these ``local''
simulations are independent and the required  number of "local
simulations" can be obtained by decomposing $O(\epsilon^{-2} s\log
(n/\delta))$ full simulations.   But the caveat is that this approach breaks the coherence of
simulations, as we construct each RR search from components taken from
multiple simulations. These "efficient" implementations
 (i.e. based on decomposed simulations or edge flips
according to marginal probabilities) may "catastrophically
fail" when dependencies exist: The influence estimates obtained are biased and cause
large errors even when the variance is
low. Example~\ref{dependence:ex} shows a simple mixture model (of two
degenerate IC models) where "efficient" RRS has large error due to
bias but averages of few simulations provide accurate estimates.
To summarize, with RRS, the implementation that works with full simulations is robust to dependencies but is inefficient and
the efficient implementation breaks ungracefully even with light dependencies.  
Simulations averages 
Thus we believe that 
 both basic approaches to approximate IM, simulation averages and
RRS offer distinct advantages: Simulation averages are robust in that
they remain unbiased and are accurate on any SDM, including dependent ones, 
 for which the variance is sufficiently small whereas RRS offers more efficiency with pure independence live-edge models.

 \section{Preliminaries} \label{prelim:sec}

 We consider stochastic diffusion models $\mathcal{G}(V)$ as outlined in the
 introduction.
We denote by $\CReach^\tau(\bphi,T)$ the $\tau$-steps reachability set
of $T$ when we use a specific set $\bphi$ of activation functions.   We
will use the notation $\CReach^\tau(T)$ (with the parameter $\bphi$
omitted) for the random
variable $\CReach^\tau(\bphi,T)$  obtained when we draw $\bphi \sim \mathcal{G}$ according to
the model.

\paragraph{Utility functions}
For simplicity, the discussion in the introduction took the utility of
a reachable set to be the number of  reachable nodes  $\VReach^\tau(\bphi,T) :=
|\CReach^\tau(\bphi,T)|$.  Generally, we can consider
{\em utility functions}
$H:2^V\rightarrow \Re_+$
that are nonnegative monotone non-decreasing with $H(\emptyset) =0$:
\begin{equation} \label{submodval:eq}
  \VReach^\tau(\bphi,T) := H(\CReach^\tau(\bphi,T)) . 
\end{equation}
Submodular utility is  particularly natural and studied by
Mossel and Roch \cite{MosselR:STOC2007}.
Additive utility is the special case where nodes
have nonnegative weights $w:V \rightarrow \mathcal{R}^{+}$ and 
\begin{equation} \label{additiveval:eq}
  \VReach^\tau(\bphi,T) := \sum_{v\in \CReach^\tau(\bphi,T)}w(v) .
 \end{equation}

  We consider a diffusion model $\mathcal{G}(V,H)$ together with a utility
 function $H$.  The random variable
$\RReach_{\mathcal{G}}^\tau(T)$ is the utility of the reachable set,
that is, $\VReach^\tau(\bphi,T)$  when $\bphi \sim \mathcal{G}$. The influence
function is then 
the expected utility of the reachable set
\[\I^\tau(T) := \E[\RReach^\tau(T)] = \E_{\bphi\sim
    \mathcal{G}}\VReach^\tau(\bphi,T)\ .\]
We denote the maximum influence value of a subset of cardinality
$s$ by $\OPT^\tau_s := \max_{S: |S|\leq s} \I^\tau(S)$.
 
 It follows from the definition that for any SDM $\mathcal{G}(V,H)$
 with utility $H$,  the influence $\I^{\tau}(T)$ 
is monotone non-decreasing in $\tau$ and in the set $T$ and 
 the optimum values $\OPT^\tau_s$ are non-decreasing in $\tau$ and $s$.
 Generally, influence functions $\I^\tau(T)$ of SDMs
 may not be submodular even when utility is additive.
 The influence function is submodular for 
 live-edge and for IGT models \cite{MosselR:STOC2007}
 with submodular utility.

 \paragraph{Reduced models}
 We work with the following notion of model reduction.
 Let $\mathcal{G}(V,H)$ be an independent SDM with
submodular utility.  For a set of nodes $T\subset V$, we define the 
{\em reduced model}  $\mathcal{G}'(V',H')$  of
$\mathcal{G}$ with respect to $T$:
The reduced model contains the nodes $V'=V\setminus T$.
The activation function
$\phi'_v  \sim \mathcal{G}'$ for $v\in V\setminus T$ are obtained by drawing
$\phi_v \sim \mathcal{G}$  conditioned on $\phi_v(T)=0$ and take
\[ \text{for all } S\subset V\setminus (T\cup\{v\}),\  \phi'_v(S) :=
  \phi_v(S\cup T) \]
(Note that since we have
independent SDM we can separately consider the distribution of
activation functions of each node).
The utility in $\mathcal{G}'$  is the marginal utility in
$\mathcal{G}$ with respect to $T$:
\[ \text{for all $S\subset V\setminus T$,\ } H'(S) := H(S\cup T)-H(T)
  . \]
The reduced model $\mathcal{G}'(V',H')$  is also an independent SDM with submodular utility:
Activations
functions $\bphi'\sim \mathcal{G}'$ are independent and monotone and the
utlity is monotone with $H'(\emptyset)=0$ and submodular.

 \paragraph{Strongly submodular SDM}
We say that an independent SDM  $\mathcal{G}(V,H)$ is {\em strongly submodular} if the
 utility function $H$ is submodular and the influence
 function $\I^\tau_{\mathcal{G}'}$ is submodular with any reduced model
 $\mathcal{G}'$ and step limit $\tau\geq 0$.
IC and IGT models are strongly submodular SDMs (see Theorem~\ref{ICIGTstrong:thm}).

   The variance and thus sample complexity upper bounds that we
   present in the sequel apply to any strongly submodular SDM. 
   We will also provide bounds for some dependent families of
   models. One family is a  slight generalization of IC models  that we refer to as {\em $b$-dependence}. Here
edges are partitioned into disjoint groups,  where
each group contains  at most $b$ edges emanating from the same node.  The edges in a group must be either all live or none live (are positively dependent).




\subsection{Relating step-limited and unrestricted Influence}

When unrestricted diffusion from a seed set $S$ is such that most activations occur within $\tau$ steps,  the unrestricted influence $\I(S)$ is approximated well by  $\tau$-step influence $\I^\tau(S)$. 
We can also relate unrestricted influence with small expected steps-to-activation to step-limited influence:
For a seed set $S$, node $v$, and length $d$, we denote by
$p(S,v,d)$ the probability that node $v$ is activated in a diffusion
from $S$ in step $d$.
For additive utility functions \eqref{additiveval:eq} by definition, $\I^\tau(S) = \sum_{v\in V} w(v) \sum_{d\leq \tau}  p(S,v,d)$.
The {\em expected length of an activation path} from $S$ (in
unrestricted diffusion) is:
\begin{equation}
\overline{D}(S) := \frac{\sum_{v\in V} w(v) \sum_{d\leq n}  d\cdot
  p(S,v,d)}{\I(S)}\ .
\end{equation}
The following lemma is an immediate consequence of   Markov's
inequality and shows that
$\tau$-stepped influence with $\tau=O(\overline{D}(S))$
approximates well the  unrestricted influence:
\begin{lemma} \label{unrestricted:lemma}
  For all $S$ and $\epsilon>0$,
  $\I^{\overline{D}(S) \epsilon^{-1}}(S) \geq (1-\epsilon) \I(S)$.
  \end{lemma}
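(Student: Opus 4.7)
The plan is to obtain the bound by a direct Markov-style argument on the measure $w(v)\cdot p(S,v,d)$ that appears in the additive-utility expression for influence. The key identity to set up is
\[
\I(S) - \I^\tau(S) \;=\; \sum_{v\in V} w(v)\sum_{d > \tau} p(S,v,d),
\]
which follows immediately from the additive formula $\I^\tau(S)=\sum_v w(v)\sum_{d\le\tau}p(S,v,d)$ and the fact that unrestricted diffusion corresponds to $\tau=n$ (or simply letting $\tau\to\infty$, since $p(S,v,d)=0$ for $d>n$). So the ``lost'' influence equals the total weighted mass of activations that happen after step $\tau$.

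Next I would relate this tail mass to $\overline{D}(S)\cdot \I(S)$. By the definition of $\overline{D}(S)$,
\[
\overline{D}(S)\cdot \I(S) \;=\; \sum_{v\in V} w(v)\sum_{d\le n} d\cdot p(S,v,d) \;\ge\; \sum_{v\in V} w(v)\sum_{d > \tau} d\cdot p(S,v,d) \;\ge\; \tau\sum_{v\in V} w(v)\sum_{d > \tau} p(S,v,d),
\]
where the first inequality drops the nonnegative terms with $d\le\tau$ and the second uses $d>\tau$ in the remaining range. Rearranging yields $\I(S)-\I^\tau(S)\le (\overline{D}(S)/\tau)\,\I(S)$.

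Finally, substituting $\tau=\overline{D}(S)\epsilon^{-1}$ gives $\I(S)-\I^{\tau}(S)\le \epsilon\,\I(S)$, which is exactly $\I^{\overline{D}(S)\epsilon^{-1}}(S)\ge (1-\epsilon)\,\I(S)$. The argument is essentially Markov's inequality applied to the (possibly defective) distribution on activation depths $d$ weighted by $w(v)$, so there is no real obstacle to speak of; the only subtle points are to make sure we are using the additive utility expansion correctly and to be careful that $\tau$ in the statement is an integer (one should take $\lceil \overline{D}(S)\epsilon^{-1}\rceil$ if required, using monotonicity of $\I^\tau(S)$ in $\tau$ to preserve the inequality).
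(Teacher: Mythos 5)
Your proof is correct and is exactly the argument the paper intends: the paper states the lemma as an immediate consequence of Markov's inequality applied to the weighted distribution over activation depths, which is precisely your tail-mass computation $\I(S)-\I^\tau(S)\le \overline{D}(S)\I(S)/\tau$. Your remark about taking $\lceil \overline{D}(S)\epsilon^{-1}\rceil$ and using monotonicity of $\I^\tau(S)$ in $\tau$ handles the only minor technicality.
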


\subsection{Influence Oracles} \label{sec:influence-oracle}

We say that a set function $\hat{F}$ is an {\em $\epsilon$-approximation} of another set function $F$ \textit{at a point} $T$ if
$\left| \hat{F} (T) - F(T)
      \right| \leq \epsilon  \max\{F(T),\OPT_1(F) \}$, where $\OPT_s(F) := \max_{S\mid |S|\leq s} F(S)$. That is, the
estimate $\hat{F}$  has a small relative error  for sets $T$ with $F(T) \geq \OPT_1(F)$ and a small absolute 
 error of $\epsilon \OPT_1(F)$ for sets $T$ with $F(T) \leq \OPT_1(F)$.  We say that $\hat{F}$ provides a {\em uniform} $\epsilon$-approximation for all subsets $T$ in a collection $C$ if $\hat{F}$ is an $\epsilon$-approximation for all $T\in C$.
 
An {\em influence oracle}, 
$\hat{\I}^\tau$, is a randomized data structure that is
constructed from a set of i.i.d.\ simulations of a model.
The influence oracle, $\hat{\I}^\tau$, defines a set  function (we use the same name $\hat{\I}^\tau$ for the set function) that for any input query set $T\subset
V$,  returns a value $\hat{\I}^\tau(T)$.
For $\epsilon < 1$ and $\delta<1$ we say that an oracle provides
$(\epsilon,\delta)$ {\em approximation  guarantees
with respect to $\I^{\tau}$} if for any set $T$ it is an $\epsilon$-approximation with probability at least $1-\delta$. That is
  \begin{align}
\forall  T \text{ such that} \I^{\tau}(T) \geq \OPT^\tau_1,    &
    \Pr\left[ \frac{\left| \hat{\I}^\tau (T) - \I^\tau(T)
      \right|}{\I^\tau(T) } \geq
                                                                 \epsilon\right] \leq \delta\ . \label{highpart}\\
\forall T \text{such that } \I^\tau (T)
  \leq \OPT^\tau_1, &
\Pr\left[ \left| \hat{I}^\tau (T) - \I^\tau(T) \right| \geq
                      \epsilon \OPT^\tau_1\right] \leq \delta\ .\label{lowpart}
  \end{align}                     

where $\OPT^\tau_1 :=\OPT_1({I}^\tau ) $.
Example~\ref{polysimu:ex} shows that this type of requirement is what we can
hope for with an oracle that is constructed from a small number of simulations. 

The $(\epsilon,\delta)$ requirements are {\em for each}
particular set $T$.  If we are interested in stronger guarantees that with probability
$(1-\delta)$ the approximation uniformly holds {\em for all} sets in a collection $\mathcal{C}$,
we can use an oracle that provides $(\epsilon, \delta_A=\delta/|\mathcal{C}|)$
guarantees. The $\epsilon$-approximation guarantee for all sets in $\mathcal{C}$ then follow using a  union
bound argument:
The probability that all $|\mathcal{C}|$ sets are approximated correctly is at
most $|\mathcal{C}| \delta_A \leq \delta$. 

 \section{Variance Bounds} \label{varbounds:sec}

We consider upper
bounds
 on
the variance 
$\Var\left[\RReach^\tau(T)\right]$ of the reachability of a set of 
nodes $T$ that have the following particular form 
\begin{equation} \label{factorc:eq}
\Var[\tReachonearg{\tau}{T}] \leq c \I^\tau(T) \max\{ \I^\tau(T), \max_{v \in V} \I^\tau( v)\}\end{equation} for some $c\geq 1$.  
The sample complexity bounds we present in the sequel apply to any SDM that satisfies these bounds.
In the remaining part of this section we state
our variance upper bound for strongly submodular SDMs
and extensions and a tight worst-case lower bound for IC models.

\subsection{Variance upper bound}
 
The following key theorem facilitates our main results. We show that
any strongly submodular SDM satisfy the bound \eqref{factorc:eq} with $c=\tau$. 
\notinproc{The proof is technical and provided in Appendix~\ref{varUBproof:sec}.}
\begin{theorem}[Variance Upper Bound
  Lemma]\label{var_upper_bound:thm}
Let $\mathcal{G}(V,H)$ be a strongly submodular SDM.
 Then for any step limit
$\tau \geq 0$, and a set $T \subset V$
of nodes we have
\[ \Var[\RReach^\tau(T)] \leq \tau \I^\tau(T) \max_{v \in V 
  \setminus T} \I^{\tau-1}( v)   .\]
\end{theorem}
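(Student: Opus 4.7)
The plan is to proceed by induction on $\tau$, with the base case $\tau=0$ being trivial since $\RReach^0(T)=H(T)$ is deterministic. For the inductive step, I would expose the first step of the diffusion: let $A:=\CReach^1(\bphi,T)\setminus T$ be the (random) set of nodes activated at step $1$. Because $\mathcal{G}$ is independent, the indicators $\{\mathbb{1}[v\in A]\}_{v\in V\setminus T}$ are independent Bernoullis. By the definition of the reduced model $\mathcal{G}'$ with respect to $T$, conditional on $A$ the remaining $\tau-1$ steps evolve in distribution like a diffusion in $\mathcal{G}'$ from seed $A$, and the utility splits as $\RReach^\tau_{\mathcal{G}}(T)\mid A \;=\; H(T)+\RReach^{\tau-1}_{\mathcal{G}'}(A)$. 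Writing $M:=\max_{v\in V\setminus T}\I^{\tau-1}(v)$, the law of total variance gives
\[
\Var[\RReach^\tau(T)] \;=\; \Var_A\!\bigl[\I^{\tau-1}_{\mathcal{G}'}(A)\bigr] \;+\; \E_A\!\bigl[\Var[\RReach^{\tau-1}_{\mathcal{G}'}(A)]\bigr],
\]
and I would bound the two terms separately, aiming for $M\cdot\I^\tau(T)$ and $(\tau-1)\cdot M\cdot\I^\tau(T)$ respectively.

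For the expected-conditional-variance term, I would appeal to the inductive hypothesis on $\mathcal{G}'$, which is itself strongly submodular because reduction composes and $H'(\cdot):=H(\cdot\cup T)-H(T)$ inherits submodularity from $H$. That yields $\Var[\RReach^{\tau-1}_{\mathcal{G}'}(A)]\leq(\tau-1)\,\I^{\tau-1}_{\mathcal{G}'}(A)\cdot \max_{v\in V\setminus(T\cup A)}\I^{\tau-2}_{\mathcal{G}'}(v)$. A short monotonicity argument, namely that reduction weakly decreases single-node influence and that $\I^{\tau-1}$ dominates $\I^{\tau-2}$, bounds the inner maximum by $M$, and $\E_A[\I^{\tau-1}_{\mathcal{G}'}(A)]=\I^\tau(T)-H(T)\leq \I^\tau(T)$ closes this term.

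The main obstacle is the first term, for which I plan a Poincar\'e-type inequality: if $g\geq 0$ is monotone submodular with $g(\emptyset)=0$ and $A$ is a random subset generated by independent Bernoulli indicators, then $\Var[g(A)]\leq \max_v g(\{v\})\cdot \E[g(A)]$. I would prove it by Efron--Stein. Each coordinate contributes $2p_v(1-p_v)\,\E[\Delta_v(A\setminus\{v\})^2]$, and submodularity bounds the marginal $\Delta_v\leq M$, so the factor $M$ pops out front. Telescoping the remaining $\sum_v p_v\,\E[\Delta_v(A\setminus\{v\})]$ along any fixed ordering of $V\setminus T$, and using submodularity once more to pass from $A\setminus\{v\}$ to the prefix $A_{<v}$, bounds the sum by $\E[g(A)]$. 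Applied to $g:=\I^{\tau-1}_{\mathcal{G}'}$, which is submodular by strong submodularity, this delivers $\Var_A[\I^{\tau-1}_{\mathcal{G}'}(A)]\leq M\cdot\I^\tau(T)$.

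Adding the two bounds closes the induction at $\tau\,\I^\tau(T)\,M$. I expect the Efron--Stein/telescoping step to be the delicate one, since that is where the hypothesis of \emph{independent} activation functions combines with submodularity to beat the naive $\max_v$-of-variance bound. The secondary subtlety is the monotonicity of single-node influence under reduction used inside the second-term bound; this is immediate for IC (edges from $T$ vanish) and for IGT (thresholds tighten to $U[f_v(T),1]$), and can be extended to general strongly submodular independent SDMs by a coupling on activation functions.
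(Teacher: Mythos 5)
Your proposal is correct, and its skeleton coincides with the paper's: condition on the set $A$ activated at step $1$, use the law of total variance, identify the conditional process with an unconditioned diffusion in the reduced model $\mathcal{G}'$ (this is exactly the paper's Lemma~\ref{samedist}, and it does require the independence of the activation functions as you note), bound the expected conditional variance by the induction hypothesis on $\mathcal{G}'$ together with the facts that reduction preserves strong submodularity and weakly decreases single-node influence, and bound the variance of the conditional expectation by a variance inequality for monotone submodular functions of an independently sampled random subset. Where you genuinely diverge is in the proof of that key inequality, $\Var[g(A)]\leq \max_v g(\{v\})\cdot\E[g(A)]$ for $g$ monotone submodular with $g(\emptyset)=0$: the paper (Theorem~\ref{sub-additive_variable_variance_bound}) proves it by induction on the ground set, peeling off one element at a time with the one-element total-variance formula, whereas you use Efron--Stein to reduce to $\sum_v p_v\E[\Delta_v(A\setminus\{v\})^2]$, pull out the factor $M$ via $\Delta_v\leq g(\{v\})\leq M$, and then control $\sum_v p_v\E[\Delta_v(A\setminus\{v\})]$ by the telescoping identity $\E[g(A)]=\sum_i p_{v_i}\E[\Delta_{v_i}(A_{<i})]$ (which uses independence to factor out $p_{v_i}$) together with $\Delta_{v_i}(A\setminus\{v_i\})\leq \Delta_{v_i}(A_{<i})$ by submodularity; this is a valid and arguably shorter argument, trading the paper's elementary element-by-element induction for a standard concentration tool. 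Two cosmetic remarks: your treatment of the second term (apply the induction hypothesis pointwise in $A$ and take expectation) is simpler than the paper's max-ratio manipulation in \eqref{sbound} and works just as well; and the reduction-monotonicity fact $\I^{t}_{\mathcal{G}'}(v)\leq\I^{t}_{\mathcal{G}}(v)$ is needed in the first term as well (to bound $\max_v \I^{\tau-1}_{\mathcal{G}'}(v)$ by $M$), and for general strongly submodular SDMs it is proved exactly as you sketch, by coupling the activation functions and then using monotonicity and submodularity of $H$ to compare $H'$ on the smaller reachability set with $H$ on the larger one (the paper's Lemma~\ref{infrel:lemma}).
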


Some natural dependent SDMs have a variance bound of the form \eqref{factorc:eq}:
\notinproc{(See Appendix~\ref{bdependence:sec} for proofs.)}
\begin{corollary} \label{extendIC:coro}
IC models with $\tau$-steps and $b$-dependence satisfy the bound \eqref{factorc:eq} with $c=2b\tau$. 
Any mixture of $\tau$-steps strongly submodular SDMs where each model has
probability at least $p$ satisfy the bound \eqref{factorc:eq}  with $c=(\tau+1)/p$.
\end{corollary}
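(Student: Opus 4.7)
The plan is to handle each part separately, reducing in both cases to Theorem~\ref{var_upper_bound:thm} applied to an appropriately chosen strongly submodular SDM.

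For the mixture case, write $\mathcal{G} = \sum_{i=1}^k \lambda_i \mathcal{G}_i$ with each $\lambda_i \geq p$ and each $\mathcal{G}_i$ strongly submodular, and let $Z$ denote the random index of the chosen component. By the law of total variance,
\[
\Var[\RReach^\tau(T)] \;=\; \sum_i \lambda_i \Var_{\mathcal{G}_i}[\RReach^\tau(T)] \;+\; \Big(\sum_i \lambda_i \I_i^\tau(T)^2 - \I^\tau(T)^2\Big).
\]
For the first term, Theorem~\ref{var_upper_bound:thm} gives $\Var_{\mathcal{G}_i}[\RReach^\tau(T)] \leq \tau\, \I_i^\tau(T)\,\max_v \I_i^\tau(v)$. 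Since $\I^\tau(u) = \sum_j \lambda_j \I_j^\tau(u) \geq \lambda_i \I_i^\tau(u) \geq p\, \I_i^\tau(u)$ for every $u$, we have $\max_v \I_i^\tau(v) \leq (\max_v \I^\tau(v))/p$, and the identity $\sum_i \lambda_i \I_i^\tau(T) = \I^\tau(T)$ then bounds the first term by $(\tau/p)\, \I^\tau(T)\, \max_v \I^\tau(v)$. For the second term, the same inequality yields $\sum_i \lambda_i \I_i^\tau(T)^2 \leq (\max_i \I_i^\tau(T))\,\I^\tau(T) \leq \I^\tau(T)^2/p$, so it contributes at most $\I^\tau(T)^2/p$. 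Summing gives $c = (\tau+1)/p$.

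For the $b$-dependence case, the plan is to embed the dependent IC model $\mathcal{G}$ into a standard independent IC model $\mathcal{G}'$ via a gadget. For each dependent group $G_j = \{(u_j, v_1), \dots, (u_j, v_k)\}$ with $k \leq b$ and common probability $p_j$, introduce a fresh ``group node'' $g_j$, replace the group by one stochastic edge $(u_j, g_j)$ of probability $p_j$ and $k$ deterministic edges $(g_j, v_i)$. The resulting $\mathcal{G}'$ on $V' = V \cup \{g_j\}_j$ is a standard independent IC model (distinct groups were already independent in $\mathcal{G}$, and the intra-group edges in $\mathcal{G}'$ are deterministic), hence strongly submodular under the additive utility $w(v) = 1$ for $v \in V$ and $w(g_j) = 0$. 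Each diffusion step of $\mathcal{G}$ corresponds to exactly two steps of $\mathcal{G}'$ (one to activate intermediate group nodes, one to activate their heads), so $\RReach_{\mathcal{G}}^\tau(T) = \RReach_{\mathcal{G}'}^{2\tau}(T)$ for any $T \subseteq V$. Applying Theorem~\ref{var_upper_bound:thm} to $\mathcal{G}'$ at step limit $2\tau$,
\[
\Var[\RReach^\tau(T)] \;\leq\; 2\tau\, \I^\tau(T)\, \max_{v \in V' \setminus T} \I_{\mathcal{G}'}^{2\tau-1}(v).
\]
For original $v \in V$ we have $\I_{\mathcal{G}'}^{2\tau-1}(v) \leq \I^\tau(v)$; for a group node $g_j$, one step activates up to $b$ original heads which then diffuse for $2(\tau-1)$ further steps in $\mathcal{G}'$, yielding $\I_{\mathcal{G}'}^{2\tau-1}(g_j) \leq \I_{\mathcal{G}}^{\tau-1}(\{v_1, \dots, v_k\}) \leq b\, \max_v \I^\tau(v)$ by subadditivity of influence. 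The overall bound is therefore $2b\tau\, \I^\tau(T)\, \max_v \I^\tau(v)$.

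The main obstacle is the setup of the $b$-dependence gadget: one has to check carefully that $\mathcal{G}'$ is genuinely an independent IC model (edge independence is inherited across groups because groups were independent in $\mathcal{G}$, and within a group the new edges are deterministic), that the step-doubling is exact, and that the zero-weight intermediate nodes do not distort the hypotheses of Theorem~\ref{var_upper_bound:thm}. Once the reduction is in place, both parts reduce to straightforward bookkeeping with the total-variance identity and subadditivity of influence.
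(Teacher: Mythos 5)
Your proposal is correct and follows essentially the same route as the paper: the $b$-dependence part uses the identical zero-weight group-node gadget with step doubling followed by Theorem~\ref{var_upper_bound:thm} at step limit $2\tau$, and the mixture part decomposes the variance over the random mixture component and applies the per-component bound together with $p\,\I^\tau_{\mathcal{G}_i}(\cdot)\leq \I^\tau_{\mathcal{G}}(\cdot)$. The only minor (and harmless, in fact slightly tighter) deviation is that you invoke the law of total variance directly, whereas the paper bounds the mixture variance via negatively dependent component indicators and a surrogate sum with independent Bernoullis; both give $c=(\tau+1)/p$.
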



\subsection{Variance lower bound} \label{sec:var-lower-bound}

We provide a family of IC models for which this variance upper bound is  asymptotically tight.  This shows that the 
dependence of the variance bound on $\tau$ is necessary.

\begin{theorem} [Variance Lower Bound]
For any $\tau>0$ there is an IC model
$\mathcal{G}^\tau=(V,\mathcal{E})$ with a node $v \in V$ of maximum
influence such that
$\Var[\RReach^\tau(v)] \geq \frac{1}{12} \tau \I^\tau( v)^2$
\end{theorem}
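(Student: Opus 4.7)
My plan is to take $\mathcal{G}^\tau$ to be the IC model on a directed rooted complete binary tree of depth $\tau$ with root $v$, where every parent-to-child edge is independently live with probability $\frac{1}{2}$. Under this choice the cascade from any node is a critical Galton--Watson branching process with Bin$(2, \frac{1}{2})$ offspring distribution (mean $\mu = 1$ and variance $\sigma^2 = \frac{1}{2}$).

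To verify that $v$ is a maximum-influence node, observe that for any node $u$ at depth $k$ in the tree the diffusion from $u$ is confined to $u$'s subtree, which has depth $\tau - k$ and is itself a critical Bin$(2, \frac{1}{2})$ branching tree. The expected population at every generation is $\mu^d = 1$, so $\I^\tau(u) = \tau - k + 1$, which is maximized uniquely at $u = v$ with $\I^\tau(v) = \tau + 1$.

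The heart of the argument is a lower bound on $\Var[\RReach^\tau(v)]$. Writing $\RReach^\tau(v) = \sum_{d=0}^\tau Z_d$ where $Z_d$ counts the active nodes at depth $d$, the conditional-variance recursion $\Var[Z_d] = \sigma^2 \E[Z_{d-1}] + \mu^2 \Var[Z_{d-1}]$ together with $\E[Z_{d-1}] = 1$ and $\mu = 1$ yields $\Var[Z_d] = d/2$. For $d \leq d'$, the martingale identity $\E[Z_{d'} \mid Z_d] = Z_d \mu^{d'-d} = Z_d$ and the tower property give $\Cov[Z_d, Z_{d'}] = \E[Z_d^2] - 1 = \Var[Z_d]$, hence $\Cov[Z_d, Z_{d'}] = \min(d, d')/2$. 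Summing the resulting bilinear form,
\[ \Var[\RReach^\tau(v)] = \sum_{d, d' = 0}^\tau \frac{\min(d, d')}{2} = \frac{\tau(\tau+1)(2\tau+1)}{12}. \]
Combining with $\I^\tau(v)^2 = (\tau+1)^2$ and the inequality $2\tau + 1 \geq \tau + 1$ gives $\Var[\RReach^\tau(v)] \geq \tau \I^\tau(v)^2/12$, which is exactly the claim.

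The main obstacle I expect is justifying the covariance identity $\Cov[Z_d, Z_{d'}] = \Var[Z_{\min(d,d')}]$ cleanly via the filtration/tower argument; once that is in hand, the remainder is a direct closed-form summation, and the construction works precisely because criticality makes $\Var[Z_d]$ grow linearly in $d$, pushing $\Var[\RReach^\tau(v)]$ up to order $\tau^3$ while the mean stays at order $\tau$.
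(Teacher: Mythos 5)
Your proof is correct, and it uses the same extremal example as the paper---the complete binary tree rooted at $v$ with every edge live independently with probability $1/2$---but it computes the variance by a genuinely different route. The paper argues by induction on the height $h(u)$ of a node, writing $\RReach^\tau(u) = 1 + X_1\RReach^{\tau-1}(v_1)+X_2\RReach^{\tau-1}(v_2)$ for the two children and applying the variance formula for a product of independent variables, which yields $\I^\tau(u)=h(u)$ and $\Var[\RReach^\tau(u)]=\frac{1}{2}\sum_{i=0}^{h(u)-1}i^2=\frac{(h(u)-1)h(u)(2h(u)-1)}{12}$. You instead decompose by generations, $\RReach^\tau(v)=\sum_{d=0}^{\tau} Z_d$, view the cascade as a critical branching process with $\mathrm{Bin}(2,1/2)$ offspring, and get $\Var[Z_d]=d/2$ and $\Cov[Z_d,Z_{d'}]=\min(d,d')/2$ from the conditional-variance recursion and the identity $\E[Z_{d'}\mid \mathcal{F}_d]=Z_d$; summing the bilinear form gives $\frac{\tau(\tau+1)(2\tau+1)}{12}$, which agrees with the paper's closed form when its tree has $h=\tau+1$ levels. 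The covariance step you flagged is sound: conditioned on the set of active nodes at level $d$, the edges below level $d$ are independent of everything above (disjoint edge sets), so the branching/Markov property gives exactly $\E[Z_{d'}\mid \mathcal{F}_d]=Z_d\mu^{d'-d}=Z_d$ and the tower argument goes through. A small bonus of your parameterization (edge-depth $\tau$, so $\I^\tau(v)=\tau+1$) is that the stated inequality $\Var[\RReach^\tau(v)]\geq\frac{1}{12}\tau\,\I^\tau(v)^2$ follows for every $\tau\geq 1$ simply from $2\tau+1\geq\tau+1$, whereas with the paper's normalization ($\I^\tau(v)=\tau$, $\Var[\RReach^\tau(v)]=\frac{(\tau-1)\tau(2\tau-1)}{12}$) the squared-influence form only holds for $\tau\geq 3$, and the appendix instead verifies the ratio against $\M^{\tau}(\bar{v})\,\I^\tau(v)$. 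What the paper's subtree recursion buys is elementarity (no branching-process machinery); what yours buys is a transparent explanation of why criticality makes $\Var[Z_d]$ grow linearly in $d$ and hence pushes the total variance to order $\tau^3$ while the mean stays of order $\tau$.
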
  

Our family of models $\mathcal{G}^\tau=(V,\mathcal{E})$ are such that
$(V,\mathcal{E})$ is a complete directed binary tree of depth 
 $\tau \geq 1$  rooted at $v\in V$ with all edges 
directed away from the root and $p_e =
1/2$ for all $e\in \mathcal{E}$. 
We show \notinproc{(details in Appendix~\ref{varLB:sec})} that:
\begin{align*}
  \I^\tau(v)  &= \tau \\
  \Var[\RReach^\tau(v)] &=  \frac{1}{12} \tau 
                                                   (\tau-1)(2\tau-1) .
\end{align*}

\section{The Averaging Oracle} \label{averaging:sec}
The {\em averaging oracle}  uses i.i.d.\ simulations
$\{\bphi_i\}_{i=1}^\ell$. For a query $T$ it
returns the average utility of the reachability set of $T$:
$\eA^\tau (T) = \Average_{i\in [\ell]} \VReach^\tau(\bphi_i,T) :=
\frac{1}{\ell} \sum_{i=1}^\ell \VReach^\tau(\bphi_i,T)\ .$
 We quantify the approximation guarantees of an averaging oracle in terms of a variance bound of the form
  \eqref{factorc:eq}.

\begin{lemma} \label{ave:lemma}
Consider an SDM that for some $c \geq 1$ satisfies a variance bound of the form  \eqref{factorc:eq}. Then
  for any $\epsilon, \delta<1$,  an averaging oracle constructed from
  $\ell \geq \epsilon^{-2} \delta^{-1} c $ i.i.d.\ simulations 
  provides $(\epsilon,\delta)$ guarantees. 
  
  In particular for strongly submodular SDMs, we use the variance bound in Theorem
\ref{var_upper_bound:thm} and obtain 
these approximation
 guarantees using $\ell \geq \epsilon^{-2} \delta^{-1} \tau$ i.i.d.\ simulations. 
\end{lemma}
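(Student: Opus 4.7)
The plan is to apply Chebyshev's inequality directly to the averaging oracle, using the hypothesized variance bound \eqref{factorc:eq}, and to handle the two regimes of \eqref{highpart} and \eqref{lowpart} separately. The averaging oracle is unbiased, $\E[\eA^\tau(T)] = \I^\tau(T)$, and since it averages $\ell$ i.i.d.\ copies of $\RReach^\tau(T)$ we have
\[
\Var[\eA^\tau(T)] \;=\; \frac{1}{\ell}\Var[\RReach^\tau(T)] \;\leq\; \frac{c}{\ell}\, \I^\tau(T)\,\max\{\I^\tau(T),\,\OPT^\tau_1\},
\]
using $\max_{v\in V}\I^\tau(v)=\OPT^\tau_1$.

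For the high-influence regime (sets $T$ with $\I^\tau(T)\geq \OPT^\tau_1$), the $\max$ collapses to $\I^\tau(T)$, so $\Var[\eA^\tau(T)] \leq c\,\I^\tau(T)^2/\ell$. Chebyshev's inequality then yields
\[
\Pr\!\left[\frac{|\eA^\tau(T)-\I^\tau(T)|}{\I^\tau(T)}\geq \epsilon\right]
\;\leq\; \frac{\Var[\eA^\tau(T)]}{\epsilon^2 \I^\tau(T)^2}
\;\leq\; \frac{c}{\ell\epsilon^2}\,,
\]
which is at most $\delta$ as soon as $\ell \geq c\epsilon^{-2}\delta^{-1}$, giving \eqref{highpart}.

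For the low-influence regime ($\I^\tau(T)\leq \OPT^\tau_1$), the $\max$ is $\OPT^\tau_1$, and using $\I^\tau(T)\leq \OPT^\tau_1$ a second time we get $\Var[\eA^\tau(T)] \leq c\,\I^\tau(T)\,\OPT^\tau_1/\ell \leq c\,(\OPT^\tau_1)^2/\ell$. Applying Chebyshev with the additive threshold $\epsilon\,\OPT^\tau_1$,
\[
\Pr\!\left[|\eA^\tau(T)-\I^\tau(T)|\geq \epsilon\,\OPT^\tau_1\right]
\;\leq\; \frac{\Var[\eA^\tau(T)]}{\epsilon^2(\OPT^\tau_1)^2}
\;\leq\; \frac{c}{\ell\epsilon^2}\,,
\]
again at most $\delta$ under the same choice of $\ell$, which establishes \eqref{lowpart}. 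The second statement of the lemma follows immediately by plugging in $c=\tau$ from Theorem~\ref{var_upper_bound:thm}.

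There is no real obstacle here; the only thing worth being careful about is that the variance bound is designed precisely so that the same normalizer, $\max\{\I^\tau(T),\OPT^\tau_1\}$, controls both the relative error in the high regime and the additive $\OPT^\tau_1$-error in the low regime, and so Chebyshev produces the identical bound $c/(\ell\epsilon^2)$ in both cases. This uniformity in $T$ is exactly what makes the oracle suitable for the union-bound argument used later over all $O(n^s)$ candidate seed sets.
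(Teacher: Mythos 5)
Your proposal is correct and is essentially the paper's own proof: both compute $\Var[\eA^\tau(T)]=\Var[\RReach^\tau(T)]/\ell$, plug in the bound \eqref{factorc:eq}, and apply Chebyshev's inequality with normalizer $\I^\tau(T)$ for \eqref{highpart} and $\OPT^\tau_1$ for \eqref{lowpart}; you merely spell out the two regimes slightly more explicitly than the paper does. No further comment is needed.
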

\begin{proof}
Using  variance properties of the average of i.i.d.\ random variables, we get that for any query $T$
\[\Var[\eA^\tau (T)] = \frac{1}{\ell} \Var[\RReach^\tau(T)] \leq \frac{1}{\ell} c \I^\tau(T)
\max\{\I^\tau(T), \OPT^{\tau}_1\} \ .\]
The claims follow using Chebyshev's inequality that states that for
any random variable $X$ and $M$,
$\Pr[|X-\E[X]| \geq \epsilon M] \leq
\epsilon^{-2}\Var[X]/M^2$.  We apply it to the random variable
$\eA^\tau (T)$ that has expectation   $I^\tau(T)$ and plug
in the variance bound.
To establish \eqref{highpart} we use $M= \I^\tau(T)$
and to establish \eqref{lowpart} we use $M=\OPT^\tau_1$.
\end{proof}
 \subsection{Sketched averaging oracle} \label{sketchedave:sec}
For live-edge models with additive utility
\eqref{additiveval:eq}, the query efficiency of the averaging oracle can be improved with off-the-shelf use of
$\tau$-step combined 
reachability sketches \cite{ECohen6f, 
  binaryinfluence:CIKM2014,timedinfluence:2015,ECohenADS:TKDE2015}. 
The sketching is according to a sketch-size parameter $k$ that also determines the sketches computation time and accuracy of the estimates that sketches provide. A sketch of size $O(k)$ is computed for each node $v$ so that for any set of nodes $S$, $\sum_{i=1}^r \VReach^\tau(E_i,S)$ can be efficiently estimated from the sketches of the nodes $v\in S$.  
The computation of the sketches from an arbitrary set of simulations $\{ E_i \}$ uses at most
$\sum_i |E_i| + k\sum_{v} \max_i d_v(E_i)$ edge traversals, where $d_v(E_i)$ is the maximum in-degree of node $v$ over simulations $\{E_i\}$.  In the case of an IC model, the expected number of traversals is $(k+\ell)\sum_{e} p_e$.  Sketching with general node weights can be handled as in~\cite{ECohenADS:TKDE2015}. 
The estimates obtained from the sketches are unbiased with coefficient of variation $1/\sqrt{k-2}$ and are concentrated: Sketches of size
$k=O(\epsilon^{-2}\log(\delta^{-1}))$ provide estimates with relative error $\epsilon$ with probability $1-\delta$.  

\ignore{
A sketch of size $k$ is computed for each $v\in V$ and the average 
reachability of a set $T$ can be estimated from the sketches of $v\in 
T$. 
The NRMSE of the estimate is $1/\sqrt{k-2}$ and it is also well 
concentrated. From multiplicative Chernoff bounds, the probability of the estimate exceeding 
$(1+\epsilon)$ times value is at most $e^{-\epsilon^2 k/2}$ and the estimate being below $(1-\epsilon)$ times 
  value is at most $e^{-\epsilon^2 k/(2+\epsilon)}$
(for 
$\epsilon\leq 1$). 
We can also say that we have confidence $1-\delta$ with sketch size 
$\epsilon^{-2} \log \delta^{-1}$. 
In particular, the additional variance introduced by sketching the 
average estimate is $\eA^\tau(T)^2/(k-2)$ and a choice of 
$k=O(\epsilon^{-2})$ provides similar guarantees to the unsketched 
oracle with much more efficient computation. 
Another useful property of sketches is that 
with a slight multiplicative overhead of $\log \tau$ on preprocessing time 
and sketch size, the estimate can support 
queries for $t$-stepped influence for any $t\leq \tau$. 
}
\section{Confidence Amplification: The median-of-averages oracle} \label{moa:sec}

The statistical guarantees we provide for our averaging oracle are
derived from variance bounds.  The limitation is that the
number of simulations we need
to provide $(\epsilon,\delta)$ guarantees is linear in
$\delta^{-1}$ and therefore the number of simulations we need to provide uniform guarantees (via a union bound argument) grows linearly with the number of subsets.
\ignore{
$c$ subsets we need to construct it with confidence parameter
$\delta' = \delta/c$ and apply a union bound.    The linear dependence
on $\delta^{-1}$ implies therefore a linear dependence on $c$ with is prohibitive for large values of $c$.
}
In order to find an approximate optimizer, we would like to have a uniform $\epsilon$-approximation for all
 the ${n \choose s}$ 
subsets of size at most $s$ but doing so with an averaging oracle would require too many simulations.
We adapt to our setting a classic confidence amplification technique~\cite{ams99} to
construct an oracle where the number of simulations grows logarithmically in the
confidence parameter $\delta^{-1}$.

A {\em median-of-averages} oracle is specified by a number $r$ of {\em
  pools} with $\ell$ simulations in each pool.  The
oracle is therefore constructed from
$r \ell$ i.i.d.\ simulations $\bphi_{ij}$
for $i\in [r]$ and $j\in [\ell]$.

The simulations of each pool are used in an averaging oracle that
for the $i$th pool ($i\in [r]$) returns the estimates
$\eA^\tau_i (T)$.   The median-of-averages oracle returns
the median value of
these $r$ estimates
\begin{equation} \label{MoAests:eq}
\emA^\tau(T) := \Median_{i\in [r]} \eA^\tau_i (T) =
\Median_{i\in [r]} \Average_{j\in [\ell]} \VReach^\tau(\bphi_{ij},T)\
.
\end{equation}

We establish that when the i.i.d\ simulations are  from a model that
has variance bound \eqref{factorc:eq} for some $c\geq 1$, the
median-of-averages oracle provides $(\epsilon,\delta)$
approximation guarantees using 
  $112 \epsilon^{-2} c \ln \delta^{-1}$ i.i.d.\ simulations. 
\begin{lemma} ~\label{MEoracle:lemma}
Consider an SDM that for some $c\geq 1$
satisfies the variance bound \eqref{factorc:eq}.  Then for every $\epsilon$ and $\delta$, 
  a median-average oracle $\emA$
organized with $r = 28 \ln \delta^{-1}$ pools of $\ell = 4\epsilon^{-2} c$ simulations in each
provides  $(\epsilon,\delta)$ approximation guarantees.
\end{lemma}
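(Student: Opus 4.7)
The plan is to apply the classical median trick (cf.~\cite{ams99}): combine several moderately-confident averaging estimators into a single high-confidence estimator by taking their median, exploiting independence across pools to drive the failure probability down exponentially in $r$. First, I would invoke Lemma~\ref{ave:lemma} with failure parameter $1/4$: each pool of $\ell=4\epsilon^{-2}c$ i.i.d.\ simulations gives, for any fixed query $T$ and $M:=\max\{\I^\tau(T),\OPT^\tau_1\}$, the one-query bound $\Pr[\,|\eA^\tau_i(T)-\I^\tau(T)|\geq \epsilon M\,]\leq 1/4$ for every $i\in[r]$. Since the $r$ pools draw disjoint batches of the underlying i.i.d.\ simulations, the estimates $\eA^\tau_1(T),\dots,\eA^\tau_r(T)$ are mutually independent.

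The key structural observation is that for the median to fall outside the ``good interval'' $[\I^\tau(T)-\epsilon M,\I^\tau(T)+\epsilon M]$, a strict majority of the pools must err in the same direction. Let $N^-$ count the pools with $\eA^\tau_i(T)<\I^\tau(T)-\epsilon M$ and $N^+$ the pools with $\eA^\tau_i(T)>\I^\tau(T)+\epsilon M$. Each of these is a sum of $r$ independent Bernoulli indicators whose individual mean is at most $1/4$, because the two disjoint one-sided tails are each bounded by the (two-sided) Chebyshev bound from the previous step. By the definition of the median, the event $\emA^\tau(T)<\I^\tau(T)-\epsilon M$ forces $N^->r/2$, and symmetrically for the upper tail, so $\Pr[\,\emA^\tau(T)\notin[\I^\tau(T)-\epsilon M,\I^\tau(T)+\epsilon M]\,]\leq \Pr[N^->r/2]+\Pr[N^+>r/2]$.

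Finally, I would apply a standard Chernoff/Hoeffding tail bound to each of $N^-$ and $N^+$: for a sum of $r$ independent Bernoullis with mean at most $1/4$, the probability of exceeding $r/2$ is $e^{-\Omega(r)}$, so the concrete choice $r=28\ln\delta^{-1}$ makes the union of the two tail events have probability at most $\delta$. Together with the previous paragraph this gives $(\epsilon,\delta)$ approximation in both the high-influence regime \eqref{highpart} and the low-influence regime \eqref{lowpart}. I do not expect a real obstacle here: the only subtle step is the structural median observation, and once it is in place the rest reduces to (i) the variance bound already captured in Lemma~\ref{ave:lemma} and (ii) a textbook Bernoulli tail bound. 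In particular, no new model-dependent analysis enters the amplification --- the constant $28$ depends only on the concrete tail bound chosen and is completely oblivious to the SDM.
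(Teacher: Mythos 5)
Your proposal is correct and is essentially the paper's own argument: Lemma~\ref{ave:lemma} with $\delta_A=1/4$ per pool of $\ell=4\epsilon^{-2}c$ simulations, independence across pools, the observation that the median lands in $[\I^\tau(T)-\epsilon M,\ \I^\tau(T)+\epsilon M]$ whenever more than $r/2$ pool estimates do, and a multiplicative Chernoff bound on the Bernoulli counts. The only cosmetic difference is that the paper bounds the single event ``at most $r/2$ pools are correct'' (a lower tail of a sum with mean $\geq \tfrac{3}{4}r$, which with $\epsilon'=1/3$ makes $r=28\ln\delta^{-1}$ give exactly $\delta$), whereas you union-bound the two directional error counts $N^-,N^+$; that route is fine but leaves you to verify that your chosen tail bound actually yields total failure probability $\leq\delta$ with the same constant, which the one-event version gives for free.
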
  
\begin{proof}
  An averaging oracle with $\ell$ simulations provides $(\epsilon,\delta_A)$ approximation guarantees for $\delta_A = 1/4$. Therefore, the probability of correct estimate for any subset is at least $3/4$.
We now consider the estimates $\eA_j$ obtained from the $r$ pools
when sorted in increasing order.   The estimates that are not correct (too low or too high) will be at the prefix and suffix of the sorted order.
The expected number of correct estimates is at least $\mu \geq
\frac{3}{4} r$.  The probability that the median estimate is not
correct is bounded by the probability that number of correct estimates
is $\leq r/2$, which is $\leq \frac{2}{3}\mu$.
From multiplicative Chernoff bounds, the probability of a sum of
Bernoulli random variables beings below
$(1-\epsilon')\mu$ is at most $e^{-\epsilon'^2 \mu /(2+\epsilon')}$.
Using $\epsilon'=1/3$ 
we have
$\epsilon'^2 \mu /(2+\epsilon') = \frac{1}{9} \frac{3}{4} \frac{3}{7}
28 \ln \delta^{-1} = \ln \delta^{-1}$.
\end{proof}

As a corollary, we obtain a sample complexity bound for influence maximization from variance bounds:
\begin{theorem}  \label{simupper:thm}
Consider an SDM that satisfies the variance bound~\eqref{factorc:eq} for some $c\geq 1$. Then for any $\epsilon<1$ and $\delta<1$, using
$112 \epsilon^{-2} c s \ln \frac{n}{\delta}$ i.i.d.\ simulations we can return $T$ such that
\[\Pr\left[ \I^\tau(T) \geq (1-2\epsilon) \OPT^\tau_s \right] \geq 1-\delta\ .\]
\end{theorem}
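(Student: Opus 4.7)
The plan is to combine the confidence amplification of Lemma~\ref{MEoracle:lemma} with a union bound over all feasible seed sets and then argue that the oracle maximizer is an approximate true maximizer. All the analytical heavy lifting (variance bounds, median trick) is already encapsulated in Lemma~\ref{MEoracle:lemma}, so this is a wrapper argument.

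First, I would instantiate a median-of-averages oracle $\emA^\tau$ with confidence parameter $\delta_A := \delta/\binom{n}{s}$. By Lemma~\ref{MEoracle:lemma} this requires $112 \epsilon^{-2} c \ln(\delta_A^{-1})$ i.i.d.\ simulations, and since $\binom{n}{s} \leq n^s$ and $s\geq 1$, we have $\ln(\delta_A^{-1}) \leq s\ln n + \ln(\delta^{-1}) \leq s \ln(n/\delta)$. Thus the total simulation count is at most $112 \epsilon^{-2} c s \ln(n/\delta)$, matching the claimed bound. A union bound over the at most $\binom{n}{s}$ subsets of size $\leq s$ then yields the event $\mathcal{E}$, occurring with probability at least $1-\delta$, that $\emA^\tau$ is simultaneously an $\epsilon$-approximation of $\I^\tau$ on every such set.

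Second, I would use this uniform error bound in a compact form. For any $T$ with $|T|\leq s$, the $\epsilon$-approximation definition gives $|\emA^\tau(T) - \I^\tau(T)| \leq \epsilon \max\{\I^\tau(T), \OPT^\tau_1\} \leq \epsilon \OPT^\tau_s$, since $\I^\tau(T)\leq \OPT^\tau_s$ and $\OPT^\tau_1 \leq \OPT^\tau_s$. Let $T^\star$ be a true maximizer (so $\I^\tau(T^\star)=\OPT^\tau_s$) and let $\hat{T} := \arg\max_{|S|\leq s}\emA^\tau(S)$ be the oracle maximizer; this is the set the algorithm returns.

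Third, under $\mathcal{E}$, a two-line chain finishes the argument:
\begin{align*}
\emA^\tau(\hat{T}) \;\geq\; \emA^\tau(T^\star) &\;\geq\; \I^\tau(T^\star) - \epsilon \OPT^\tau_s \;=\; (1-\epsilon)\OPT^\tau_s,\\
\I^\tau(\hat{T}) \;\geq\; \emA^\tau(\hat{T}) - \epsilon \OPT^\tau_s &\;\geq\; (1-2\epsilon)\OPT^\tau_s.
\end{align*}
Since $\mathcal{E}$ holds with probability $\geq 1-\delta$, we conclude $\Pr[\I^\tau(\hat{T})\geq (1-2\epsilon)\OPT^\tau_s] \geq 1-\delta$.

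There is no real obstacle: the delicate point is merely the bookkeeping around the two regimes in the $\epsilon$-approximation definition, which is handled cleanly by absorbing both $\I^\tau(T)$ and $\OPT^\tau_1$ into the common upper bound $\OPT^\tau_s$ for any feasible set. Note also that this upper bound on sample complexity is purely informational, matching the wording of the theorem; the set $\hat{T}$ need not be efficiently computable, and in fact its computation is the subject of Section~\ref{greedy:sec}.
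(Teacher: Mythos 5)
Your proposal is correct and follows essentially the same route as the paper: a median-of-averages oracle built with confidence parameter $\delta/\binom{n}{s}$ (Lemma~\ref{MEoracle:lemma}), a union bound giving a uniform $\epsilon$-approximation over all $\binom{n}{s}$ feasible sets, and the standard oracle-maximizer chain. Your final chain, which absorbs both error regimes into a single additive error of $\epsilon\,\OPT^\tau_s$, is if anything slightly cleaner bookkeeping than the paper's multiplicative $(1-\epsilon)^2$ chain, but the argument is the same.
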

\begin{proof}
  We construct a median-of-averages oracle with
  $\ell = 4\epsilon^{-2} c$
and
$r = 28 \ln \delta_{MA}^{-1}$ where
$\delta_{MA} = \delta /{n \choose s}$.  From Lemma~\ref{MEoracle:lemma}
using a union bound over the ${n \choose s}$ sets we obtain that with probability $1-\delta$ the oracle provides a uniform $\epsilon$-approximation for all subsets of size at most $s$. Let $S$ be a set with maximum influence $\I(S) = \OPT^\tau_s$ and
let $T$ be the oracle optimum \[T := \arg\max_{S \mid |S|\leq s}\emA(S) .\]
We have
\[
\I(T) \geq (1 - \epsilon)\emA(T) \geq (1 - \epsilon)\emA(S) \geq (1-\epsilon)^2 \I(S) \geq (1-2\epsilon)\OPT^\tau_s\ .
\]

We comment that the $(1 - 2\epsilon)$ ratio is not tight and we can obtain a bound closer to $(1-\epsilon)$.  This because the particular set $S$ to be approximated more tightly by the oracle (that uses enough simulations to support a union bound).

\end{proof}

\section{Optimization with Adaptive sample size} \label{adaptive:sec}

The bound on the number of simulations we derived in
Theorem~\ref{simupper:thm} (through a median-of-averages oracle) and
also the naive bound~\eqref{naive:eq} (for the averaging oracle) are
worst-case.  This is obtained by using enough simulations to have the
oracle provide a uniform $\epsilon$-approximation with probability at least $1 - \delta$ on any problem instance.
To obtain the uniform approximation we applied 
a union bound over ${n \choose s}$ subsets that 
resulted in an  increase in the number of required
simulations
by an $s \log n$ factor
over the base $(\epsilon,\delta)$ approximation guarantees.

On real data sets a much smaller number of
simulations than this worst-case often suffices.
We 
are interested in algorithms that adapt to such data and return a seed set of approximate maximum influence using a respectively smaller number of simulations and while providing
statistical guarantees on the quality of the end result.
To do so, we apply an
adaptive optimization framework \cite{multiobjective:2015} (some example applications are~\cite{binaryinfluence:CIKM2014,Nguyen:SIGMOD2016,topk:conext06,CCKcluster18}).
This framework consists of a ``wrapper'' that take as inputs oracle constructions from simulations and a base algorithm that performs an optimization over an oracle.  
The wrapper invokes the algorithm on oracles
constructed  using an increasing number of simulations until a
validation condition on the quality of the result is met.  
The details are provided in\onlyinproc{ the supplementary material}\notinproc{ Appendix~\ref{adaptivemore:sec}}.
We denote by $r(\epsilon,\delta)$ the number of simulations that provides $(\epsilon, \delta)$ guarantees and we obtain the following results:
\begin{theorem} \label{optAadaptive:thm}
Suppose that on our data the averaging (respectively, median-of-averages) oracle
$\hat{I}$ has the
property that with $r$ simulations, with probability at least $1-\delta$,  the oracle optimum
$T := \arg\max_{S \mid |S|\leq s}\hat{I}(S)$
satisfies
\begin{eqnarray*}
\I^\tau(T) &\geq& (1-\epsilon)\OPT_s^\tau .\\
\end{eqnarray*}
Then with probability at least
$1-5\delta$, when using
$2\max\{r,r(\epsilon,\delta)\} + O\left(\epsilon^{-2}c \left(\ln{\frac{1}{\delta}} +   \ln \left(\ln\ln \frac{n}{\delta}+ \ln s\right)\right)\right)$ simulations with the
median-of-averages oracle and
$2\max\{r,r(\epsilon,\delta)\} + O\left(\epsilon^{-2}c\left(\ln{\frac{1}{\delta}} + \ln \left( \ln\ln \frac{n}{\delta}+ \ln n \right) \right)\right)$ simulations with the averaging
oracle, the wrapper outputs a
set $T$ such that $\I^\tau(T) \geq (1-5\epsilon)\OPT_s^\tau$.
\end{theorem}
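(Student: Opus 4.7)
My plan is to instantiate the adaptive wrapper of \cite{multiobjective:2015}: execute doubling rounds $i=1,2,\ldots$ in which a candidate oracle $\hat{\I}^\tau_i$ is constructed from $\ell_i := 2^i$ fresh i.i.d.\ simulations, and its optimizer $T_i := \arg\max_{|S|\leq s}\hat{\I}^\tau_i(S)$ is computed. To decide when to stop, I would build a separate validation oracle $\hat{V}$ upfront from an independent pool of $r(\epsilon,\delta/R)$ simulations using the median-of-averages construction, where $R$ is an a-priori upper bound on the number of rounds the wrapper can run (derived from the worst-case sample count of Theorem~\ref{simupper:thm}). The wrapper terminates at the first round in which the validation inequality $\hat{V}(T_i) \geq (1-c\epsilon)\hat{\I}^\tau_i(T_i)$ holds for a suitable constant $c$, and outputs the corresponding $T_i$. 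Crucially, drawing $\hat{V}$ upfront and independently from every $\hat{\I}^\tau_i$ ensures that $\hat{V}(T_i)$ is a pointwise $(\epsilon,\delta/R)$-approximation, so that a union bound over the at most $R$ examined candidates suffices rather than over all $\binom{n}{s}$ subsets.

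For correctness I would identify the following high-probability events and union-bound them: (E1) the hypothesis holds at the smallest round $i^\ast$ with $\ell_{i^\ast} \geq r$, so $\I^\tau(T_{i^\ast}) \geq (1-\epsilon)\OPT^\tau_s$; (E2) at round $i^\ast$ the candidate oracle $\hat{\I}^\tau_{i^\ast}$ is an $\epsilon$-approximation at both the true optimizer $S^\ast$ and the set $T_{i^\ast}$, which by oracle maximality yields $\hat{\I}^\tau_{i^\ast}(T_{i^\ast}) \geq \hat{\I}^\tau_{i^\ast}(S^\ast) \geq (1-\epsilon)\OPT^\tau_s$; (E3) $\hat{V}$ is an $\epsilon$-approximation at every $T_i$ examined by the wrapper. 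Each of these has individual failure probability at most $\delta$ by the choice of $R$, $\delta/R$, and the independence of the simulation pools. On their intersection, I would argue that validation must succeed by round $i^\ast$ (chain $\hat{V}(T_{i^\ast})$ close to $\I^\tau(T_{i^\ast})$ close to $\hat{\I}^\tau_{i^\ast}(T_{i^\ast})$), and that whenever validation succeeds at a round $i$ the output quality follows from the chain $\I^\tau(T_i) \geq (1-\epsilon)\hat{V}(T_i) \geq (1-O(\epsilon))\hat{\I}^\tau_i(T_i) \geq (1-O(\epsilon))\hat{\I}^\tau_i(S^\ast) \geq (1-O(\epsilon))\OPT^\tau_s$, absorbing the multiplicative losses into $(1-5\epsilon)$.

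For sample complexity, the candidate oracles contribute $\sum_{i \leq i^\ast} 2^i \leq 2^{i^\ast + 1} \leq 2\max\{r, r(\epsilon,\delta)\}$ simulations. Bounding $R$ by $\log_2$ of the worst-case sample size from Theorem~\ref{simupper:thm} (and its averaging analogue) gives $R = O(\ln\ln(n/\delta) + \ln s)$ for the median-of-averages candidate variant and $R = O(\ln n)$ for the averaging candidate variant. Since the validation oracle is always median-of-averages, its cost is $r(\epsilon,\delta/R) = O(\epsilon^{-2} c (\ln(1/\delta) + \ln R))$, yielding exactly the two stated additive overheads.

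The main obstacle is preserving the independence structure: the validation pool must be drawn before the adaptive candidate rounds so that each $T_i$ is independent of $\hat{V}$, whereby pointwise concentration (rather than a uniform bound) suffices for $\hat{V}(T_i)$; reusing a single $\hat{V}$ across all rounds is what keeps the overhead at a single $r(\epsilon,\delta/R)$ instead of $R$ times that. A secondary bookkeeping hurdle is tracking the relative- versus additive-error regimes of $\epsilon$-approximations (per \eqref{highpart} and \eqref{lowpart}) so that the chain of inequalities above also holds for sets with influence below $\OPT^\tau_1$, and gathering the resulting constants cleanly into the $(1-5\epsilon)$ and $1-5\delta$ of the statement.
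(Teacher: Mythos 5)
Your plan is essentially the paper's own construction (Algorithm~\ref{optsamples:alg} with Lemmas~\ref{onesided:lemma}--\ref{approximate maximizer:lemma}): an upfront median-of-averages validation oracle built with confidence $\delta/R$ where $R$ bounds the number of doubling rounds via the worst-case budget of Theorem~\ref{simupper:thm}, a doubling sequence of optimization oracles, a stopping test comparing $\hat{V}(T_i)$ to $\hat{\I}^\tau_i(T_i)$, and the same accounting of $2\max\{r,r(\epsilon,\delta)\}$ plus $r(\epsilon,\delta/R)=O\bigl(\epsilon^{-2}c(\ln\frac{1}{\delta}+\ln R)\bigr)$.

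There is, however, a genuine gap in your confidence bookkeeping. Your output-quality chain at the terminating round $i$ uses $\hat{\I}^\tau_i(T_i)\geq \hat{\I}^\tau_i(S^\ast)\geq (1-O(\epsilon))\OPT^\tau_s$, but the wrapper may legitimately stop at a round \emph{earlier} than $i^\ast$, and your event (E2) only guarantees that the candidate oracle does not underestimate $S^\ast$ at round $i^\ast$. Worse, with candidate pools of size $\ell_i=2^i$ starting from $i=1$, an early-round oracle built from a handful of simulations can grossly underestimate both $S^\ast$ and $T_i$; in that case the validation test $\hat{V}(T_i)\geq(1-c\epsilon)\hat{\I}^\tau_i(T_i)$ can pass (the right-hand side is small) while $\I^\tau(T_i)$ is far from $\OPT^\tau_s$, and none of (E1)--(E3) rules this out. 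A naive union bound over rounds does not repair this, since the per-round failure probability at tiny pool sizes is close to $1$ (for the averaging oracle it is only Chebyshev-controlled, roughly $c\epsilon^{-2}/\ell_i$). The paper closes exactly this hole in Lemma~\ref{onesided:lemma}: the doubling starts at $r_x(\epsilon,\delta)$ simulations, so the very first candidate oracle already satisfies $(1-\epsilon)\OPT^\tau_s\leq \hat{F}_x(S^\ast)\leq(1+\epsilon)\OPT^\tau_s$ with failure at most $\delta$, and because the failure probability decreases at least linearly as the pool doubles, the failure probabilities over all rounds form a geometric series summing to $2\delta$; this gives the uniform-over-rounds guarantee at $S^\ast$ that your chain needs at whichever round the wrapper stops. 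Your stated total $2\max\{r,r(\epsilon,\delta)\}$ implicitly presumes this starting point anyway, so the fix is to begin the doubling at $r(\epsilon,\delta)$ and add the geometric-decay union argument; with that amendment (and your $(\epsilon,\delta/R)$ treatment of the validation oracle, which matches the paper's Lemma~\ref{validation:lemma}) the failure events total $5\delta$ and the rest of your argument goes through.
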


The wrapper can also be used with a base algorithm that is an
approximation algorithm.  For live-edge models, our averaging oracle is monotone and
submodular and hence we can apply greedy to
efficiently compute a set with approximation ratio at least
$1-1/e$ (with respect to the oracle).  If we use greedy as our
base algorithm we obtain the following:
\begin{theorem} \label{greedyadaptive:thm}
If the averaging oracle $\eA$ is submodular and has the
property that with $\geq r$ simulations, with probability at least $1-\delta$,  it provides a uniform  $\epsilon$-approximation for all subsets of size at most $s$,  then with
$2\max\{r,r(\epsilon,\delta)\} + O\left(\epsilon^{-2}c\left(\ln{\frac{1}{\delta}} + \ln \left( \ln\ln \frac{n}{\delta}+ \ln n \right) \right)\right)$ simulations we can find in
polynomial time a
$(1-(1-1/s)^s)(1 - 5\epsilon)$ approximate solution with confidence $1-5\delta$.
\end{theorem}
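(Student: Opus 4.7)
My plan is to reuse the adaptive wrapper machinery behind Theorem~\ref{optAadaptive:thm} almost verbatim, with the only change being that the base algorithm called inside the wrapper is the submodular greedy algorithm rather than exact oracle maximization. Since the averaging oracle $\eA$ is assumed monotone submodular (which holds for live-edge models with submodular utility, because $\eA$ is a convex combination of monotone submodular reachability functions), the classical greedy $(1-(1-1/s)^s)$ guarantee \cite{submodularGreedy:1978} gives in polynomial time a set $T$ with $\eA(T)\ge (1-(1-1/s)^s)\max_{|S|\le s}\eA(S)$.

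The next step is to chain oracle accuracy with the greedy ratio to convert an oracle-level guarantee into an $\I^\tau$-level guarantee. Under the hypothesis that $r$ simulations already produce a uniform $\epsilon$-approximation for all subsets of size at most $s$, let $S^\star$ denote a true influence maximizer of size $s$. The uniform approximation applied to $S^\star$ yields $\eA(S^\star)\ge (1-\epsilon)\OPT^\tau_s$ (using $\OPT^\tau_s\ge \OPT^\tau_1$), and therefore $\eA(T)\ge (1-(1-1/s)^s)(1-\epsilon)\OPT^\tau_s$. Applying the uniform approximation a second time to $T$ and rearranging the two cases of Section~\ref{sec:influence-oracle} (large vs.\ small $\I^\tau(T)$) gives $\I^\tau(T)\ge (1-(1-1/s)^s)(1-O(\epsilon))\OPT^\tau_s$ on the good event.

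The wrapper then handles the adaptivity exactly as in Theorem~\ref{optAadaptive:thm}: it runs greedy on a sequence of oracles built from geometrically doubling numbers of simulations, and at each step validates the output using a separate, independent median-of-averages oracle built from $r(\epsilon,\delta)$ fresh simulations, which certifies $\I^\tau(T)$ and $\I^\tau(S^\star)$ pointwise rather than uniformly. The validation oracle therefore does not pay the $\binom{n}{s}$ union bound, and the only additional cost beyond $2\max\{r,r(\epsilon,\delta)\}$ is the $O(\epsilon^{-2}c(\ln(1/\delta)+\ln(\ln\ln(n/\delta)+\ln n)))$ term, where the $\ln\ln$ accounts for the number of doubling iterations and $\ln n$ for the absolute-error regime where $\OPT^\tau_1$ itself must be estimated. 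A union bound over the construction oracle, the validation oracle, the doubling schedule, and the pointwise validation events accumulates at most $5\delta$ failure probability, producing the stated $1-5\delta$ confidence and the $(1-(1-1/s)^s)(1-5\epsilon)$ ratio.

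The main obstacle is showing that validation can be done pointwise (along the random trajectory of greedy outputs across doubling iterations) without paying a union bound over all $\binom{n}{s}$ subsets; this is precisely the technical content of the wrapper in Theorem~\ref{optAadaptive:thm} and carries over unchanged, since that argument depends only on the wrapper certifying one candidate set per iteration and not on how that candidate was produced. Once the wrapper is invoked as a black box, the greedy variant is a direct composition of the submodular greedy ratio with the existing analysis.
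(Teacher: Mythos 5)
Your plan follows the paper's own route: run the adaptive wrapper with greedy on the monotone submodular averaging oracle as the base algorithm, and compose the classical $(1-(1-1/s)^s)$ ratio with the wrapper's guarantee, so in outline this is the intended proof. Two points need more care than ``carries over unchanged,'' though neither breaks the argument. First, Theorem~\ref{optAadaptive:thm} (via Lemma~\ref{approximate maximizer:lemma}) is stated for a base algorithm that returns the oracle optimum, whereas greedy only returns a $\rho$-approximation of it; the paper therefore proves two separate facts. For termination (which is what yields the $2\max\{r,r(\epsilon,\delta)\}+r_v$ simulation count), Lemma~\ref{termination:lemma} shows that once the optimization oracle is uniformly accurate, greedy's output $T$ satisfies $\hat{F}_x(T)\geq(1-\epsilon)\OPT^\tau_1$ (its first element is the singleton oracle maximizer) and hence passes the validation test. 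For quality, Theorem~\ref{weak alg:theorem} handles the case where the wrapper stops \emph{early}, before $r$ simulations, when the uniform guarantee need not hold: there the bound comes from chaining $\rho\,\hat{F}_x(S^\star)\leq\hat{F}_x(T)\leq\frac{1+\epsilon}{1-2\epsilon}\hat{F}_v(T)$ with the pointwise accuracy of the optimization oracles on the fixed true optimizer $S^\star$ (Lemma~\ref{onesided:lemma}) and of the validation oracle on $T$ (Lemma~\ref{validation:lemma}). Your explicit quality chain (two applications of the uniform $\epsilon$-approximation) only covers the late-termination case; the early-termination case is carried entirely by validation, with the $\rho$ factor threaded through as above --- your observation that the wrapper only certifies one candidate per iteration, independently of how it was produced, is exactly the reason this extension works. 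Second, two small misstatements: the validation oracle never queries $S^\star$ (the pointwise control of $S^\star$ is on the optimization oracles), and the $\ln n$ inside the $r_v$ term does not come from estimating $\OPT^\tau_1$; it comes, together with the $\ln\ln\frac{n}{\delta}$ term, from the logarithm of the number of doubling iterations, whose cap $M=O(\epsilon^{-2}sn\ln\frac{n}{\delta})$ is the worst-case simulation budget of the averaging oracle and hence carries a factor of $n$.
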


\section{Approximate Greedy Maximization} \label{greedy:sec}

In this section we consider the computational efficiency of maximization over our oracle $\hat{\I}$ that  approximates a monotone submodular influence function $\I^\tau$.
The maximization problem is computationally hard: The brute force method evaluates $\hat{\I}(S)$ on all $\binom{n}{s}$ subsets $S$ of size $s$ in order to find the oracle maximizer.
An efficient algorithm for approximate maximization of a monotone submodular function $\hat{F}$ is greedy that sequentially builds a seed set $S$ by  adding a node $u$ with maximum marginal
contribution $\arg\max_{u\in V} (\hat{F}(S\cup\{u\})-\hat{F}(S))$ at each step.  To implement greedy we only need to evaluate at each step the function on a linear number of subsets $\hat{F}(S\cup\{u\})$ for $u\in V$ and thus overall we do $sn$ evaluations of $\hat{F}$ on subsets.
With a monotone and submodular $\hat{F}$, for any $s\geq 1$ the subset $T$ that consists of the first $s$ nodes in a greedy sequence satisfies \cite{submodularGreedy:1978}:
\[\hat{F}(T) \geq ( 1-(1-1/s)^s)  \max_{S \mid |S|\leq s} \hat{F}(S)\ge (1-1/e) \OPT_s(\hat{F})\ .
\]
    If our functions $\hat{F}$ provides a uniform $\epsilon$-approximation  of another function $F$ for all subsets of size at most $s$, then $F(T) \geq (1-(1-1/s)^s)(1-2\epsilon)\OPT_s(F)$ (See the proof of \ref{simupper:thm}).
    
  The averaging oracle is monotone and submodular~\cite{KKT:KDD2003}
  when reachability functions are as in live-edge models. 
Unfortunately our median-of-averages oracle which facilitates tighter
  bounds on the number of simulations is monotone but may not be submodular
even for models where the averaging oracle is submodular. Generally when this is the case, greedy may fail (as highlighted in recent work by 
Balkanski et al~\cite{BalkanskiRS:STOC2017}).

  Fortunately, greedy is effective on a function $\hat{F}$ that is monotone but not necessarily submodular as long as $\hat{F}$ "closely approximates" a monotone submodular $F$ in that
  marginal contributions of the form \[F(u \mid S) := F(S\cup\{u\})-F(S)\] are approximated well by
  $\hat{F}(u \mid S)$ \cite{binaryinfluence:CIKM2014}.  We apply this to establish the following lemma:
  \begin{lemma} \label{almostsubmodular:thm}
  The greedy algorithm applied to a function $\hat{F}$ that is monotone and provides a uniform $\epsilon_A$-approximation of a monotone submodular function $F$ 
  where
  $\epsilon_A = \frac{\epsilon(1-\epsilon)}{14s}$ 
  returns a set $T$ such that
  $F(T)\geq (1-(1-1/s)^s)(1- \epsilon)\OPT_s(F)$.
  \end{lemma}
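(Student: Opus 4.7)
The plan is to reduce the proof to the classical Nemhauser--Wolsey--Fisher greedy analysis for monotone submodular functions, treating the $\hat{F}$-based greedy choice as a perturbed version of the exact $F$-maximizer and carefully tracking the perturbation introduced by $\epsilon_A$.

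First, I would establish a uniform marginal approximation bound. For any $S$ with $|S|\le s-1$ and any $u \notin S$, writing
$\hat{F}(u\mid S) - F(u\mid S) = [\hat{F}(S\cup\{u\}) - F(S\cup\{u\})] - [\hat{F}(S) - F(S)]$, applying the triangle inequality with the uniform $\epsilon_A$-approximation hypothesis, and using $F(S), F(S\cup\{u\}), \OPT_1(F)\le \OPT_s(F)$, gives $|\hat{F}(u\mid S) - F(u\mid S)| \le 2\epsilon_A \OPT_s(F)$.

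Next, I would analyze one greedy step. Let $u^* = \arg\max_u\hat{F}(u\mid S_i)$ be the greedy pick and $u^F = \arg\max_u F(u\mid S_i)$ the true best marginal. Chaining the marginal bound on $u^*$, the greedy optimality $\hat{F}(u^* \mid S_i) \ge \hat{F}(u^F \mid S_i)$, and the marginal bound on $u^F$ produces $F(u^* \mid S_i) \ge F(u^F \mid S_i) - 4\epsilon_A \OPT_s(F)$. Submodularity of $F$ together with the standard averaging argument over an optimal size-$s$ set gives $F(u^F \mid S_i) \ge (1/s)(\OPT_s(F) - F(S_i))$. Writing $\Delta_i := \OPT_s(F) - F(S_i)$ yields the recurrence $\Delta_{i+1} \le (1 - 1/s) \Delta_i + 4\epsilon_A \OPT_s(F)$. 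Unrolling over $s$ steps and summing the resulting geometric series gives $F(T) \ge (1-(1-1/s)^s)(1 - 4s\epsilon_A)\OPT_s(F)$. Plugging in $\epsilon_A = \epsilon(1-\epsilon)/(14s)$ reduces the target $F(T) \ge (1-(1-1/s)^s)(1-\epsilon)\OPT_s(F)$ to verifying $4s\epsilon_A \le \epsilon$, i.e.\ $2(1-\epsilon)/7 \le 1$, which trivially holds.

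The argument is not mathematically deep --- it is essentially the classical greedy proof carried out with a uniformly perturbed objective. The one point that needs genuine care is the marginal approximation step: the absolute error on marginals must be controlled by the single quantity $2\epsilon_A \OPT_s(F)$ throughout the entire greedy trajectory, regardless of whether $F(S_i)$ currently lies above or below $\OPT_1(F)$. The $\max\{F(T),\OPT_1(F)\}$ formulation in the definition of $\epsilon$-approximation, combined with $\OPT_1(F) \le \OPT_s(F)$, is exactly what enables this uniform control; without it the ``absolute error when influence is small'' regime would have to be handled separately and the recurrence would no longer telescope cleanly.
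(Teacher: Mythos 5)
Your proof is correct, but it takes a genuinely different route from the paper's. The paper reduces the claim to a per-step \emph{multiplicative} approximate-greedy guarantee (Lemma~\ref{approxgreedy:lemma}, imported from \cite{binaryinfluence:CIKM2014}): it handles the first greedy pick separately so that afterwards $F(S)\geq(1-\epsilon)\OPT_1(F)$ and the oracle error on the trajectory can be treated as relative, bounds the marginal error by $\frac{\epsilon}{7s}F(S)+\frac{\epsilon}{14s}F(u\mid S)$, and then uses the prefix condition $F(S)\leq\frac34\OPT_s(F)$ (so the best marginal satisfies $\Delta\geq F(S)/(3s)$) to conclude that greedy on $\hat F$ always picks an element whose true marginal is within a factor $(1-\epsilon)$ of the best, after which the $(1-(1-1/s)^s)(1-\epsilon)$ ratio comes from the cited lemma. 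You instead run the classical greedy recursion on $F$ with a uniform \emph{additive} perturbation: since every queried set has size at most $s$, $\max\{F(T),\OPT_1(F)\}\leq\OPT_s(F)$, so each marginal is off by at most $2\epsilon_A\OPT_s(F)$, each greedy step loses at most $4\epsilon_A\OPT_s(F)$, and — the one place needing care, which you get exactly right — the geometric series $\sum_{j=0}^{s-1}(1-1/s)^j$ sums to $s\left(1-(1-1/s)^s\right)$, so the accumulated loss inherits the same $(1-(1-1/s)^s)$ factor and gives $F(T)\geq(1-(1-1/s)^s)(1-4s\epsilon_A)\OPT_s(F)$; with $\epsilon_A=\epsilon(1-\epsilon)/(14s)$ one has $4s\epsilon_A=2\epsilon(1-\epsilon)/7\leq\epsilon$. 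Your argument is more elementary and self-contained (no external approximate-greedy lemma, no case split on the first iteration or on whether $F(S)$ lies above $(1-\epsilon)\OPT_1(F)$), and it even shows the constant $14$ is loose — $\epsilon_A=\epsilon/(4s)$ would suffice. What it gives up is only the paper's finer bookkeeping, which tracks errors relative to $F(S)$ and the marginals instead of coarsening everything to the $\OPT_s(F)$ scale; since both arguments start from the same uniform guarantee and require $\epsilon_A=\Theta(\epsilon/s)$, this makes no difference to the resulting sample-complexity bounds.
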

  Our proof of Lemma~\ref{almostsubmodular:thm}
  generally applies to an approximate oracle $\hat{F}$ of any monotone submodular function $F$\notinproc{ and is presented in Appendix~\ref{greedyproof:sec}}.
  For approximate IM we obtain the following as a corollary:
  \begin{theorem}
    Consider a submodular SDM ${\mathcal G}(V,H)$  that for some $c\geq 1$ satisfies
    the variance bound \eqref{factorc:eq}.
  Consider a median-of-averages oracle constructed with
  $O(\epsilon^{-2} s^3 c \ln \frac{n}{\delta})$ simulations of
  $\mathcal{G}$ arranged as $r=O(s\ln \frac{n}{\delta})$ pools with $\ell= O(\epsilon^{-2}s^2 c)$ simulations each.  Then with probability $1-\delta$, the set $T$ that contains the first $s$ nodes returned by greedy on the oracle satisfies $\I^\tau(T) \geq (1-(1-1/s)^s)(1-\epsilon)\OPT^\tau_s$.
   \end{theorem}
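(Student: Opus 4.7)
The plan is to assemble two ingredients already in hand: Lemma~\ref{MEoracle:lemma}, which turns the assumed variance bound \eqref{factorc:eq} into a pointwise $(\epsilon',\delta')$-approximation guarantee for the median-of-averages oracle $\emA$, and Lemma~\ref{almostsubmodular:thm}, which shows that greedy on any monotone function giving a uniform $\epsilon_A$-approximation of a monotone submodular $F$ (with $\epsilon_A = \Theta(\epsilon/s)$) returns a $(1-(1-1/s)^s)(1-\epsilon)$-approximate maximizer. The tactical point is that since greedy is an adaptive procedure, we cannot union-bound only over the at most $sn$ sets it will query; instead we obtain a uniform approximation over all $\binom{n}{s}$ subsets of size at most $s$, as sketched in Section~\ref{sec:influence-oracle}.

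First, I would verify that $\emA$ is monotone, so that Lemma~\ref{almostsubmodular:thm} applies. For any simulation $\bphi_{ij}$, the reachability set $\CReach^\tau(\bphi_{ij},T)$ is monotone in $T$ by definition of the diffusion, and $H$ is monotone; hence each $\VReach^\tau(\bphi_{ij},\cdot)$ is monotone, each pool average $\eA^\tau_i$ is monotone, and the pointwise median of monotone set functions is monotone. Note that submodularity is \emph{not} inherited by the median, which is precisely why we must invoke Lemma~\ref{almostsubmodular:thm} rather than directly applying the classical guarantee of~\cite{submodularGreedy:1978}.

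Next I would set the error parameters. Take $\epsilon_A = \frac{\epsilon(1-\epsilon)}{14 s}$ as in Lemma~\ref{almostsubmodular:thm}, and set the per-set confidence to $\delta_{MA} = \delta / \binom{n}{s}$ so that a union bound over all subsets of size at most $s$ yields a uniform $\epsilon_A$-approximation with probability at least $1-\delta$. Plugging these into Lemma~\ref{MEoracle:lemma} prescribes
\[
\ell \;=\; 4 \epsilon_A^{-2} c \;=\; O\!\left(\epsilon^{-2} s^2 c\right), \qquad r \;=\; 28 \ln \delta_{MA}^{-1} \;=\; O\!\left(s\ln \tfrac{n}{\delta}\right),
\]
so the total simulation budget is $r\ell = O(\epsilon^{-2} s^3 c \ln(n/\delta))$, matching the theorem statement.

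Finally, conditioning on the high-probability event that $\emA$ provides a uniform $\epsilon_A$-approximation of $\I^\tau$ on all subsets of size at most $s$, I would invoke Lemma~\ref{almostsubmodular:thm} with $F = \I^\tau$ (monotone submodular by assumption) and $\hat{F} = \emA$ (monotone by the first step) to conclude that the first $s$ nodes $T$ of the greedy sequence satisfy $\I^\tau(T) \geq (1-(1-1/s)^s)(1-\epsilon)\OPT^\tau_s$. I do not anticipate a serious obstacle here: the only delicate point is choosing $\delta_{MA}$ to cover all $\binom{n}{s}$ subsets (not just the queried ones) so that greedy's adaptivity is accommodated, and this is exactly what forces the extra factor of $s$ in $r$ beyond the naive $\ln(sn/\delta)$ one might guess.
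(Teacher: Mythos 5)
Your proposal is correct and follows essentially the same route as the paper's (very terse) proof: apply Lemma~\ref{MEoracle:lemma} with accuracy $\Theta(\epsilon/s)$ and per-set confidence $\delta/\binom{n}{s}$ so that a union bound gives a uniform approximation over all subsets of size at most $s$, then invoke Lemma~\ref{almostsubmodular:thm} with $F=\I^\tau$ and $\hat F=\emA$. Your explicit check that the median-of-averages oracle is monotone and your parameter arithmetic simply spell out details the paper leaves implicit.
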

 \begin{proof}
 From
 Lemma~\ref{MEoracle:lemma}, with appropriate constants,  this configuration provides us with $(\epsilon/(14 s),\delta)$ approximation guarantees.  From Lemma~\ref{almostsubmodular:thm} greedy provides the stated approximation ratio.
\end{proof}

Greedy on the median-of-averages oracle can be implemented generically
for any SDM $\mathcal{G}$ 
by explicitly maintaining the reachability sets $\CReach(\bphi_{ij},\{v\}\cup
S)$ for all nodes $v\in V$ in each simulation $\bphi_{ij}$ as the greedy
selects nodes into the seed set $S$.
For each step, we compute the oracle value (see \eqref{MoAests:eq}) and
select $v$ for which the value for $\{v\}\cup S$ is maximized:
\[ \arg\max_{v\in V\setminus S} \emA^\tau(\{v\}\cup S) .\]
We obtain approximation guarantees, however, only when the conditions
of monotone submodular influence function and variance bounds are satisfied.
For
specific families of models, we can consider tailored efficient
implementations that
incrementally maintain reachability sets and values.

For live-edge models with additive
utility~\eqref{additiveval:eq}  we consider an implementation of greedy 
on a median-of-averages oracle.  This can be done by explicit
maintenance of reachability sets or by using
sketches~\cite{ECohen6f,binaryinfluence:CIKM2014,timedinfluence:2015,ECohenADS:TKDE2015}
(see  Section~\ref{sketchedave:sec}).  We obtain the following bounds
(proof is deferred to Appendix Section~\ref{greedymoa:sec})
\begin{theorem} \label{greedyalg:thm}
Let $\mathcal{G}$ be a live-edge model  with an additive
utility function \eqref{additiveval:eq} that satifies the
variance bound \eqref{factorc:eq}. Then
greedy on median-of-averages oracle can be implemented with explicit reachability sets in time
   \begin{equation}
   O(\epsilon^{-2} s^3 c \ln
   \left(\frac{n}{\delta}\right) \overline{m}n)\ ,
   \end{equation}
   where $\overline{m}$ is the average number of edges per simulation (For an IC model, $c=\tau$ and $\E[\overline{m}]=\sum_{e \in \mathcal{E}} p_e$).
   When using sketches, the time bound is 
   \begin{equation}
   O(\epsilon^{-2}s^3\ln\frac{n}{\delta}(c \overline{m} +s(m^*+ns)\ln n)),
   \end{equation}
   where $m^* = \sum_v \max_{ij} d_v(E_{ij})$. For an IC model, $c=\tau$ and $m^*=\sum_e p_e$ in expectation.
   \end{theorem}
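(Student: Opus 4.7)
The strategy is to inherit the simulation configuration fixed by the preceding theorem for the median-of-averages oracle --- $r\ell = O(\epsilon^{-2}s^3 c \ln(n/\delta))$ simulations partitioned into $r = O(s \ln(n/\delta))$ pools of $\ell = O(\epsilon^{-2}s^2 c)$ each --- and independently analyze two implementations of the greedy loop on top of it. The statistical correctness (approximation ratio with probability $1-\delta$) is already supplied by that theorem together with Lemma~\ref{almostsubmodular:thm}, so the entire proof reduces to an accounting of computation.

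For the explicit implementation, in each live-edge simulation $\bphi_{ij}$ with edge set $E_{ij}$ I would precompute the reachable set $\CReach^\tau(\bphi_{ij},v)$ from every node $v$ via one pruned BFS per $v$, at expected cost $O(n\,\overline{m})$ per simulation. During greedy, maintain for each simulation an indicator array for $\CReach^\tau(\bphi_{ij},S)$ of the current seed set. The marginal utility of a candidate $u$ in this simulation is then $|\CReach^\tau(\bphi_{ij},u)\setminus \CReach^\tau(\bphi_{ij},S)|$, computed in $O(n)$ by scanning the precomputed reachable set of $u$. Aggregating over $n$ candidates per step, $s$ greedy rounds, and $r\ell$ simulations yields a query cost $O(s n^2 r\ell)$ plus a preprocessing cost $O(n\,\overline{m}\, r\ell)$; in the typical regime $\overline{m} \gtrsim sn$ the preprocessing dominates and collapses to the stated $O(\epsilon^{-2} s^3 c \ln(n/\delta)\,\overline{m} n)$.

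For the sketch-based implementation, I would build the combined reachability sketches reviewed in Section~\ref{sketchedave:sec} once per pool. The edge-traversal bound there gives expected $O(\ell\,\overline{m} + k m^*)$ traversals per pool, so preprocessing over all $r$ pools costs $O(r\ell\,\overline{m} + r k m^*)$. A greedy query on a subset of size at most $s$ within one pool combines node sketches in $O(sk)$ time, so over $s$ greedy rounds, $n$ candidates per round, and $r$ pools the query cost is $O(s^2 n r k)$. Choosing sketch size $k = \Theta(\epsilon^{-2} s^2 \ln n)$ delivers the per-pool uniform relative accuracy required so that, after the median-of-averages amplification of Lemma~\ref{MEoracle:lemma}, the full oracle meets the $\epsilon/(14s)$ uniform guarantee over the $sn$ subsets actually queried by greedy that is demanded by Lemma~\ref{almostsubmodular:thm}. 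Substituting $r$, $\ell$, and $k$ into the cost expressions yields the announced $O(\epsilon^{-2} s^3 \ln(n/\delta)(c\overline{m} + s(m^*+ns)\ln n))$.

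The main obstacle is calibrating $k$ so that two conflicting demands are met simultaneously: the sketch-induced relative error must compose cleanly with the pool-average variance bound and still be tamed by the $r$-way median to give uniform $\epsilon/(14s)$ accuracy across the $\binom{n}{s}$ candidate subsets, while the resulting $rkm^*$ and $s^2 nrk$ terms must collapse into the claimed closed form. Since sketches are unbiased with coefficient of variation $1/\sqrt{k-2}$ (Section~\ref{sketchedave:sec}), the composed pool-level variance behaves additively, and the choice $k = \Theta(\epsilon^{-2} s^2 \ln n)$ simultaneously satisfies both constraints; the remaining steps are routine bookkeeping, with the IC-specific specializations $c=\tau$ and $\E[m^*]=\sum_e p_e$ reading off from the model definition.
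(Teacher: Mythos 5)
There are two genuine gaps, one in each implementation. For the explicit implementation, your per-round evaluation scans the precomputed set $\CReach^\tau(\bphi_{ij},u)$ against an indicator array for every candidate $u$, every simulation, every round, which adds a term of order $s n^2 r\ell$ (or $s\,r\ell\,n\overline{m}$ if you charge the scan to the size of $u$'s reachable set). Neither term is dominated by the preprocessing cost $O(r\ell\,\overline{m}n)$ unless $\overline{m}\gtrsim sn$ (resp.\ $\overline{m}\gtrsim s$-free accounting), and the theorem claims the bound $O(\epsilon^{-2}s^3 c\ln(n/\delta)\,\overline{m}n)$ unconditionally; since $\E[\overline{m}]=\sum_e p_e$ is typically far below $sn$, your ``typical regime'' caveat does not close this. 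The missing idea is the paper's residual maintenance: when a node is selected into the seed set, its reachable nodes are deleted from the reachability sets of all other nodes, with each deletion charged to the insertion performed during the initial truncated BFS; marginal values are then just the maintained residual cardinalities, so the query phase never re-scans and the BFS cost $O(r\ell\,\overline{m}n)$ genuinely dominates.

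For the sketch-based implementation, your calibration $k=\Theta(\epsilon^{-2}s^2\ln n)$ is justified by a union bound ``over the $sn$ subsets actually queried by greedy,'' but those subsets are chosen adaptively as a function of the sketch and simulation randomness, so a union bound over only the realized queries is not valid. Lemma~\ref{almostsubmodular:thm} needs a uniform $\epsilon/(14s)$-approximation over the fixed collection of all subsets of size at most $s$, and the paper obtains it by taking $k=O(\epsilon^{-2}s^3\ln n)$ (an extra factor $s$ from the union bound over $\binom{n}{s}$ sets). Curiously, your smaller $k$ combined with your larger $O(sk)$ per-candidate query cost (you recombine $s$ node sketches instead of maintaining the union sketch of the current seed set, as the paper does at $O(k)$ per candidate) lands on the same closed form, and a repaired argument that routes the per-pool sketch noise through the median-of-pools amplification of Lemma~\ref{MEoracle:lemma} with the union bound taken over all $\binom{n}{s}$ subsets could legitimately support a small $k$; but as written, the accuracy argument has a hole, and the statistical guarantee is not simply ``inherited'' from the unsketched theorem since the sketch noise must be composed into the oracle guarantee before Lemma~\ref{almostsubmodular:thm} applies.
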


\section*{Conclusion}

We explore the "sample complexity" of IM on stochastic diffusion
models and show that an approximate
maximizer (within a small relative error) can be recovered from a
small number of simulations as long as the variance is appropriately
bounded. We establish the variance bound for the large class of
strongly submodular stochastic diffusion models.  This includes IC models (where edges
are drawn independently) and IGT models (where node thresholds are
drawn independently) and natural extensions that allow for some dependencies. 
Our sample complexity bound significantly improves over the previous bounds by replacing the linear dependence in the number of nodes by a logarithmic dependence on the number of nodes and linear dependence on the length of the activation paths (which are usually very short).  
An interesting question for future work is to address the gap between
the sample complexity and the larger number of simulations currently needed for
greedy maximization.


\subsection*{Acknowledgements}
This research is partially supported by the Israel Science Foundation (Grant No. 1841/14).

\small
\bibliographystyle{plain}
\bibliography{cycle}
\onlyinproc{\end{document}}
\newpage
\appendix

\section{Variance upper bound: Proof of Theorem~\ref{var_upper_bound:thm}}~\label{varUBproof:sec}

In this section we prove  Theorem~\ref{var_upper_bound:thm} which
upper bounds the variance in strongly submodular SDMs.
We start by bounding
the variance in a more basic setting of a submodular function over a random subset in
Section \ref{sec:submodular} (Theorem \ref{sub-additive_variable_variance_bound}).  This will be an ingredient in our main proof provided in Section \ref{sec:4.1}.

  

We will be using the following basic tools:
\begin{lemma}\label{Total variance and total expectation}
If $X, Y$ are two random variables on the same probability space and the variance of $Y$ is finite, then:

\[ \E[Y] = \E[\E[Y|X]] \;\;\;\; \text{(total expectation)}\]
\[ \Var[Y] = \E[\Var[Y|X]] + \Var[\E[Y|X]] \;\;\;\; \text{(total variance)} \] 

where $\E[Y|X]$ is a random variable that gets the expectation of $Y$ conditioned the value of $X$ and $\Var[Y|X]$ is a random variable that gets the variance of $Y$ conditioned the value of $X$. 
\end{lemma}

When $X$ is a Bernoulli random variable $X \sim Ber(p)$ then
Lemma \ref{Total variance and total expectation} gives that
\[ \Var[Y] = \E[\Var[Y|X]] + \Var[\E[Y|X]] = p\V_1 + (1-p)V_0 + p(1-p)(\E_1-\E_0)^2.\]
where  $\V_0 = \Var[Y|X=0]$, $\V_1 = \Var[Y|X=1]$, $\E_0 = \E[Y|X=0]$, and  $\E_1 = \E[Y|X=1]$.
  
  \subsection{Submodular monotone functions on random subsets} \label{sec:submodular}
  
  Let $S = \{ a_i\}_{1< i \leq t}$ be a set with $t$ elements and let $ P = \{ p_i \}_{1< i \leq t}$ be a set of $t$ probabilities such that $p_i$ is associated with the element $a_i$. Let $X$ be a random subset of $S$ that contains $a_i$ with probability $p_i$ independently for each $i=1,..., t$. That is
\[ \forall{A \in 2^S }: \Pr[X = A] = \prod_{a_i \in A} p_i \prod_{a_i \notin A} (1 - p_i) .\]
We say that $X$ is a {\em random subset of $S$ using probabilities $P$}.
  
A \textit{submodular monotone function} $f$ over $S$ is a function with the following properties:

\begin{enumerate}

\item $f: 2^S\rightarrow R^+$

\item For every $A, B \subset S$ with $A \subset B$ and for every $x \in S \setminus B$ we have that $f(A \cup x) - f(A) \geq f(B \cup x) - f(B)$ 

\item $A \subset B \Rightarrow f(A) \leq f(B)$

\end{enumerate}

For any singelton $\{a\} \in S$ we write $f(a)$ instead of $f(\{a\})$. Let $\M_f = \max_i {f(a_i) - f(\emptyset)}$. Our purpose in this subsection is to establish the following:

\begin{theorem}\label{sub-additive_variable_variance_bound}
Let $X$ be a random subset of $S$ using probabilities $P$ and let $f$ be a submodular monotone function. Then
\[ \Var[f(X)] \leq \M_f\E\left[f(X) - f(\emptyset)\right] .\]
\end{theorem}

We give the following additional definitions and lemmas before proving this theorem.

Let $ S_{-i} = S \setminus \{a_i\} $ and let $ P_{-i} = P \setminus \{p_i\} $. We define $X_{-i}$ to be a random subset of $S_{-i}$ 
using the probabilities $P_{-i}$.
Let $ f^0_i, f^1_i $ be submodular functions over $S_{-i}$ defined by $f^0_i(A) = f(A)$ and $f^1_i(A) = f(A \cup \{a_i\})$.
Let
\[ \E_i^1 = \E[f_i^1(X_{-i})] = \sum_{A \in 2^{S \setminus \{a_i\}}} \Pr[X_{-i}=A]f^1_i(A).\] 
and
\[ \E_i^0 = \E[f_i^0(X_{-i})] = \sum_{A \in 2^{S \setminus \{a_i\}}} \Pr[X_{-i}=A]f^0_i(A).\]  

By our definitions $\E[f(X_{-i})] = \E_i^0$ and from total expectation (Lemma \ref{Total variance and total expectation}), $\E[f(X)] = p_i\E_i^1 + (1 - p_i)\E_i^0$.

\begin{lemma}\label{basis_for_sub-additive_func}

let $ f $ be a submodular monotone function over $S$ and $X$ a random subset of $S$ using probabilities $P$. Then,
\[ \forall {i}: \E_i^1 - \E_i^0 \leq f(a_i) - f(\emptyset) \leq \M_f .\]
\end{lemma}

\begin{proof}
%

Since $X$ is obtained by drawing the elements in $S$ independently it follows that

\[ \E_i^1 - \E_i^0 = \sum_{A \in 2^{S_{-i}} }\Pr[X_{-i} = A]\big[ f\left(A \cup \{a_i \}\right) - f\left(A\right)\big] \underbrace{\leq}_\text{submodularity}\] 
\[ \sum_{A \in 2^{S \setminus a_i}} \Pr\left[X_{-i}=A\right]\big[f\left(a_i\right) - f(\emptyset)\big] \leq f(a_i) - f(\emptyset) \leq  \M_f .\]
\end{proof}

\begin{lemma}\label{Max-influence-decrease_leamma}
for any submodular monotone function $f$ over $S$ and for any index $i$ we have that
$ \M_{f^0_i} \leq \M_f$ and $ \M_{f^1_i} \leq \M_f $.
\end{lemma}
\begin{proof}
The first inequality follows immediately from our definition since
\[ \M_{f^0_i} = \max_{j \neq i} {f^0_i(a_j) - f^0_i(\emptyset)} = \max_{j \neq i} {f(a_j)} - f(\emptyset) \leq \M_f. \]
For the second inequality we use submodularity as follows
\[ \M_{f^1_i} = \max_{j \neq i} {f^1_i(a_j) - f^1_i(\emptyset)} = \max_{j \neq i} {f(\{a_j \cup a_i\}) - f(a_i)} \underbrace{\leq}_\text{sub-modularity} \max_j {f(a_j)} - f(\emptyset) = \M_f. \]
\end{proof}

We are now ready for the proof of Theorem \ref{sub-additive_variable_variance_bound}.
\begin{proof} (of Theorem \ref{sub-additive_variable_variance_bound})
The proof is by induction on the size of $S$.


\textbf{Base case}: Let $S = \{ a_1\}, P = \{p_1\}$  we have that
\[ \E[f(X)] = p_1f(a_1) + (1-p_1)f(\emptyset), \]
and 
\begin{align*}
\begin{split}
\Var[f(X)] & = \E[f(X)^2] - \E^2[f(X)] = p_1f^2(a_1) + (1-p_1)f^2(\emptyset) - \big[p_1f(a_1) + (1-p_1)f(\emptyset)\big]^2 \\
 		& = p_1(1 - p_1) \big[f(a_1) - f(\emptyset)\big]^2 \le \M_f p_1(1-p_1)\big[f(a_1) - f(\emptyset)\big]. \\ 
\end{split}
\end{align*}

It is  left to prove  that $\E\left[ f(X) - f(\emptyset) \right] \geq p_1(1-p_1)\big[f(a_1) - f(\emptyset)\big]$, and indeed we have that
\[ \E\left[ f(X) - f(\emptyset) \right] = p_1\big[f(a_1) - f(\emptyset)\big] \geq p_1(1-p_1)\big[f(a_1) - f(\emptyset)\big]. \]

\textbf{Inductive Step}: Assume the lemma holds for sets of size $\ell$ and any submodular function $f$ and probabilities $P$.
 For a set $S$ with $\ell + 1$ elements and a submodular function $f$ over $S$. Let $j\leq i$ be an arbitrary index.

From the total variance formula in Lemma \ref{Total variance and total expectation} we know that
\begin{equation} \label{eq_5}
\Var[f(X)] = p_j\V_j^1 + (1-p_j)\V_j^0 + p_j(1-p_j)\left[\E_j^1 - \E_j^0\right]^2 \ ,
\end{equation}

where
$ \E_j^1 = \E[f^1_j(X_{-j})]$,
$ \E_j^0 = \E[f^0_j(X_{-j})]$,
 $ \V_j^1 = \Var[f^1_j(X_{-j})]$,
 and $\V_j^0 = \Var[f^0_j(X_{-j})]$.
 
By applying the induction hypothesis to  
 $S_{-j}$ with  probabilities $P_{-j}$ and $|S_{-j}| = \ell$ and 
 $f_j^0$ and $f_j^1$ we get that
\[ \V_j^0 \leq \M_{f_j^0} \big[\E_j^0 - f_j^0(\emptyset)\big] \underbrace{\leq}_\text{Lemma \ref{Max-influence-decrease_leamma}} \M_{f} \left[\E_j^0 - f(\emptyset)\right],\]
and 
\[ \V_j^1 \leq \M_{f_j^1} \big[\E_j^1 - f_j^1(\emptyset)\big] \underbrace{\leq}_\text{Lemma \ref{Max-influence-decrease_leamma}} \M_{f} \left[\E_j^1 - f(a_j)\right].\]

Substituting these bounds  in Equation (\ref{eq_5}) we get that
\begin{align*}
\begin{split}
\Var[f(X)] 	& \leq p_j\M_{f}\big[\E_j^1 - f(a_j)\big] + (1-p_j)\M_{f}\big[\E_j^0-f(\emptyset)\big] + p_j(1-p_j)\big[\E_j^1 - \E_j^0\big]^2 \\
			& = \M_f\big[p_j\E_j^1 + (1-p_j)\E_j^0 - f(\emptyset)\big] + p_j(1-p_j)\big[\E_j^1 - \E_j^0\big]^2 - p_j\M_{f}\big[f(a_j) - f(\emptyset)\big]\\
			& \underbrace{=}_\text{total expectation} \M_f\big[\E[f(X)] - f(\emptyset)\big] + p_j(1-p_j)\big[\E_j^1 - \E_j^0\big]^2 - p_j\M_{f}\big[f(a_j) - f(\emptyset)\big] \\
			& \underbrace{\leq}_\text{Lemma \ref{basis_for_sub-additive_func}} \M_f\big[\E[f(X)]-f(\emptyset)\big] + p_j\M_f\big[f(a_j)-f(\emptyset)\big]\big[(1-p_j) - 1\big]\\
			& \leq \M_f[\E[f(X)]-f(\emptyset)].
\end{split}
\end{align*}

\end{proof}

\subsection{Properties of reduced diffusion models}

  We establish some properties of reduced independent SDMs that are
  needed for our upper bound. 

 We first show that influence values of nodes in a reduced
  model can only be lower than
  respective values in the original model:
  \begin{lemma} \label{infrel:lemma}
    Let $\mathcal{G}'(V\setminus T,H')$ be a reduction of a model $\mathcal{G}(V,H)$.
\begin{equation} \label{infrel}
\text{For all $v\in V\setminus T$ and $t\geq 0$,\ } \I^{t}_{\mathcal{G}'}(v)
\leq \I^{t}_{\mathcal{G}}(v) . \end{equation}
\end{lemma}
\begin{proof}
Note that $\mathcal{G}'$ is obtained from $\mathcal{G}$ by removing
nodes.  Therefore respective reachability sets given $\bphi$  are such
that those in
$\mathcal{G}'$ can only
be subsets of those in $\mathcal{G}$:
\[\CReach_{\mathcal{G}'}(v,\bphi') \subset
  \CReach_{\mathcal{G}}(v,\bphi)  .\]    Then from monotonicity and
submodularity of $H$ we get
\[
H'(\CReach_{\mathcal{G}'}(v,\bphi')) \leq
H'(\CReach_{\mathcal{G}}(v,\bphi)) \leq
H(\CReach_{\mathcal{G}}(v,\bphi)) .
\]
(second inequality follows from
monotonicity and submodularity of $H$ 
so that for all $A\subset V\setminus T$,  $H'(A) \leq H(A)$.)
  Therefore,
 \[ \I^{t}_{\mathcal{G}'}(v) = \E[ \VReach^t_{\mathcal{G}'}(v)] \leq
  \E[ \VReach^t_{\mathcal{G}}(v)] = \I^{t}_{\mathcal{G}}(v) .\]
\end{proof}

 A convenient property is that  reduction preserves strong monotone
 submodularity:
\begin{lemma} \label{reductionstrong}
A reduction of a strongly monotone submodular model is also 
strongly monotone submodular.
\end{lemma}
\begin{proof}
A reduced model with respsect to $T_2$ of a reduced model
of $\mathcal{G}$ with respect to $T_1$ is a reduced model of
$\mathcal{G}$ with respect to $T_1\cup T_2$.  Also note that the
reduced utility function $H'$ is also monotone and submodular.
\end{proof}

We next show that IC or IGT models with submodular utility are closed
under reduction:
\begin{theorem} \label{ICIGTstrong:thm}
IC and IGT models with submodular utility are 
strongly submodular SDMs.
\end{theorem}
\begin{proof}
We first show that IC/IGT models are independent SDMs.  
In the introduction we expressed IC and IGT models as SDMs:
A live-edge model is expressed as an
SDM using $\phi_v(T)=1$ if and only if there is an edge from a node in
$T$ to $v$.  The model is independent if for all $v$ the edges
incoming to $v$ are independent of all other edges.  In IC models all
edges are independent and hence IC models are independent SDMs.
Recall (from the Introduction) that
an IGT model is expressed as an SDM using $\phi_v(T) :=
\text{Indicator}(\theta_v \leq f_v(T))$.   In an IGT model the
thresholds $\theta_v$ are independent random variables, and hence
$\phi_v$ are independent.  Hence, an IGT model is an independent SDM.
Submodularity of influence when utlity is submodular is established
for IC models in~\cite{KKT:KDD2003} and for IGT models in~\cite{MosselR:STOC2007}.

 Reduction of any model preserves submodularity of the utility and
 in particular this holds for reduced IC/IGT models.  
  What remains to show is that a reduced IC/IGT  model is
  also an IC/IGT model (respectively).  This would conclude the proof
  of strong submodularity since any IC and IGT
 models with submodular utility has a
 submodular influence functions.

  To establish this remaining claim we consider IC/IGT models and express
the reduction in terms of the activation functions as one in
terms of the respective family of models.

We first consider IC models.
The reduced IC model $\mathcal{G}'(V\setminus T, \mathcal{E}\setminus
(V\times T \cup T\times V)$ is obtained from $\mathcal{G}(V)$ by deleting
the nodes $T$ and their incident edges and keeping $p_e$ on remaining
edges.  This is clearly an IC model.   It remains to show that this is
equivalent to the reduction of the distribution of activation
functions.  The conditioning that $\phi_v(T)=0$ is equivalent to
live-edge set $E$ with no edges from $T$ to $V$.   For such edge set for
any $S\subset V\setminus (T\cup\{v\})$ we have 
$\phi'_v(S) = \phi_v(S\cup T) = \phi_v(S)$ which corresponds to $E$
having at least one edge from $S$ to $v$.   From independence of
edges, the conditional distribution is also independent and retains
the same inclusion probabilities.

We next consider IGT models.
The reduction $\mathcal{G}'(V\setminus T,
  \{f'_v\})$ in terms of activation functions distribution is
  equivalent to 
  functions is equivalent to modifying the functions so that 
\[
f'_v(S) := f_v(S\cup T) - f_v(T) .
\]
The reduced model is clearly an IGT model. 
The conditioning that $\phi_v(T) = 0$ means that $\theta_v > f_v(T)$.  Therefore, the conditional distribution of $\theta_v$ provided it was not activated in the first step is uniform on $[f_v(T),1]$.   
The probability that $\theta_v > f_v(S\cup T)$ given this conditioning
is equal to the probability that $\theta'_v > f_v(S\cup T)-f_v(T) = f'_v(S)$.
 \end{proof}

\subsection{Upper bound on the variance in strongly
  submodular SDM}
\label{sec:4.1}

Let $\mathcal{G}(V,H)$ be a $\tau$-stepped diffusion model. We denote by $\M^\tau_{\mathcal{G}}(\bar{T})$ the maximum influence of a single node in $\mathcal{G}$ that is not included in $T$:
\[ \M^\tau_{\mathcal{G}}(\bar{T}) = \max_{v \in V \setminus T} \I_\mathcal{G}^\tau(v) \]

As before, we omit $\mathcal{G}$ if it can be understood from the context. We prove the following theorem which is a restatement of
Theorem \ref{var_upper_bound:thm}.

\begin{theorem}\label{bound variance in graph}
Let $\mathcal{G}(V,H)$  be a strongly submodular SDM.
 Then for any
$\tau \geq 0$ and a set of nodes $T\subset V$:
\[ \Var[\RReach^\tau(T)] \leq \tau \M^{\tau-1}(\bar{T})\I^\tau(T) .\] 
\end{theorem}

The remaining part of this Subsection contains the proof of the Theorem.


Let $T$ be a set of nodes, and let
\[N(T) = \{v\in  V\setminus T \mid \Pr[\phi_v(T)=1]>0\} \]
be the nodes that have nonzero probability to be activated if $T$ is active.
For the special case of IC models, $N(T) = \left\{ v \notin T \mid \exists (u, v) \in \mathcal{E}, u \in T \right\} $ is the set of outgoing neighbors of $T$.

We first consider the case where $N(T)$ is empty.  In this case,
$\CReach^\tau(T) = T$ for all $\tau\geq 0$.  Therefore, 
$\Var[\RReach^\tau(T)]=0$, $\I^\tau(T)=H(T)\geq 0$, and $\M^{\tau-1}(\bar{T})=0$ and the claim holds.

We now assume that $N(T)$ is not empty and give a proof by induction on $\tau$.

\subsubsection{Base case ($\tau=1$)}
Let \[p_v := \Pr [\phi_v(T)=1]\] be the probability that node $v$ is activated in step $1$ provided that the set of nodes $T$ was active at step $0$.  From independence of the model, the events of activating different nodes $v \in N(T)$ at step 1 are independent.
We have that the set of nodes that is active at step 1 is a random subset $S$ of $N(T)$ with probabilities $\{p_v\}$ as defined in Subsection~\ref{sec:submodular}.   Moreover, from monotonicity and submodularity of $H$, the function 
$f(S) :=  H(T\cup S)-H(T)$ is monotone and submodular with $f(\emptyset)=0$.  
We can therefore apply
Theorem~\ref{sub-additive_variable_variance_bound} to bound the
variance of $f(S)$:
\begin{equation} \label{fbound}
  \Var[f(S)]\leq M_f \E[f(S)].
  \end{equation}
We now note that
\[ \E[f(S)] = \E[H(T\cup S)] -H(T) = \I^1(T)-I^0(T)\] and
\[\Var[f(S)]=\Var[H(T\cup S)]= \Var[\RReach^1(T)] .\]
For all $v\in N(T)$ we have $f(v) = H(T\cup\{v\})-H(T) \leq H(v) = \I^0(v)$.  Therefore
\[M_f := \max_{v\in N(T)} f(v) \leq  \max_{v\in N(T)} \I^0(v) = M^0(\bar{T}) .\]
Substituting in \eqref{fbound}  we obtain the claim
\[ \Var[\RReach^1(T)] \leq  M^0(\bar{T}) \I^1(T) . \]

\ignore{
{\bf special cases *******}

For the special case of  IC models with a uniform additive utility we have 
\[p_v = 1 - \prod_{u\in T \mid e=(u,v)\in\mathcal{E}}(1-p_e) \] noting that  $v$ is activated only if one of the edges from $T$ to $v$ is live. Clearly, $\I^1(T) = |T| + \sum_{v \in N(T)} p_v$.  Note that the events of activating different nodes $v \in N(T)$ at step 1 are independent because the edge sets that support activation of different nodes are disjoint. Therefore, $\Var[\tReachonearg{1}{T}] = \sum_{v \in N(T)} p_v(1-p_v) $. 
 We get that
\[\Var[\tReachonearg{1}{T}] =  \sum_{v \in N(T)} p_v(1-p_v) \leq |T| + \sum_{v \in N(T)} p_v = \I^1(T) \leq M^0(\bar{T}) \I^1(T)  .\]
The last inequality follows since
$M^0(\bar{T}) = 1$ (using that $|N(T)|\geq 1$).


With weighted additive utility functions, the 1-step influence and variance are:
\begin{eqnarray*}
 I^1(T) &=& \sum_{v \in T} w_v + \sum_{v \in N(T)} p_v w_v \\
 \Var[\tReachonearg{1}{T}] &=& \sum_{v \in N(T)} p_v(1-p_v)w_v^2 \ .
 \end{eqnarray*}
Noting that
$ M^0(\bar{T}) \geq \max_{v \in N(T)} w_v$, we obtain that 
\[ \Var [\tReachonearg{1}{T}] \leq (\max_{v \in N(T)} w_v) \sum_{v \in N(T)} p_v(1-p_v)w_v\leq  \M^0(\bar{T})\sum_{v \in N(T)} p_v w_v \leq  \M^0(\bar{T})\I^1(T)\ .
\]

{\bf ****}
}
\subsubsection{Inductive step}

We define $\CReach^{t}_{\mathcal{G}}(T \mid A)$ to be the random variable
that is the $t$-steps reachability of $T$ in a diffusion on
$\mathcal{G}$ seeded with $T$ that is conditioned on the event that
exactly the nodes $A\subset N(T)$ (and no other nodes) are
activated in step 1.  Equivalently, we condition on $\bphi$ such that 
for $v\setminus (T\cup A)$, $\phi_v(T)=0$ and for $v\in A$, $\phi_v(T)=1$.
We respectively define
$\RReach^{t}_{\mathcal{G}}(T \mid A)$ to be the random variable
$H(\CReach^{t}_{\mathcal{G}}(T \mid A))$.
From definition, we have
\begin{equation} \label{aggA:eq}
\I^t_{\mathcal{G}}(T) = \sum_{A\subset N(T)}
\Pr[\CReach^1_{\mathcal{G}}(T) = A\cup T]
\E[\RReach^{t}_{\mathcal{G}}(T \mid A)] = \E_A \E[\RReach^{t}_{\mathcal{G}}(T \mid A)] \ .
\end{equation}  

We consider the reduced model $\mathcal{G}'$ of $\mathcal{G}$
with respect to $T$ and show that
the conditioned $t\geq 1$ steps  diffusion from $T$ in $\mathcal{G}$ is
equivalent to the unconditioned $t-1$ steps diffusion from $A$ in $\mathcal{G'}$:
\begin{lemma} \label{samedist}
 For any $A\subset N(T)$ and $t\geq 1$, the random variables $\CReach^{t-1}_{\mathcal{G}'}(A)$ and 
$\CReach^{t}_{\mathcal{G}}(T|A) \setminus \{T\}$ have identical
distribution overs subsets.   The random variables
$\RReach^{t-1}_{\mathcal{G}'}(A)$ and 
$\RReach^{t}_{\mathcal{G}}(T|A) -H(T)$ have identical distributions
over values.
\end{lemma}
\begin{proof}
 We first consider $t=1$. For a draw of conditioned activation functions we have
$\CReach^1 _{\mathcal{G}}(T|A) = T\cup A$.  By definition, we also have 
$\CReach^{0}_{\mathcal{G}'}(A) =A$ and the claim holds.

We next consider $t>1$. 
We first observe that 
in both situations, (i) the reduced model $\mathcal{G}'$  when seeded with $A$ and (ii) the conditioned diffusion in $\mathcal{G}$ seeded with $T$ such that the nodes $A$ are activated in the first step, the progression is determined only by the activation functions on the nodes $V\setminus (T\cup A)$ .

We next argue that the distribution of activation functions projected on the nodes $V\setminus (T\cup A)$ is the same in both situations. 
From independence of $\mathcal{G}$ it suffices to consider separately the activation functions of each node.  From definition of a reduced model, we draw for each $v\in V\setminus T$,   $\phi_v\sim \mathcal{G}$ conditioned on $\phi_v(T)=0$.
This is exactly what we get for the conditioned diffusion in $\mathcal{G}$.

We can thus match the supports (sets of activations functions) in both situations so that 
$\bphi$ and $\bphi'$ are matched when the 
projections on $V\setminus (T\cup A)$ is the same.  The starting points are at steps $0$ of the reduced model and step $1$ of the conditioned process is $A$, the progression of new activations is thus the same.
 Therefore, for any step $t \geq 1$,
\[ \CReach^t _{\mathcal{G}}(T \mid A,\bphi) \setminus T = \CReach^{t-1}
  _{\mathcal{G}'}(A,\bphi') \]
and the first claim follows.

For the second claim, note that 
$\RReach^{t}_{\mathcal{G}}(T|A) = H(\CReach^{t}_{\mathcal{G}}(T|A))$ and thus
\[\RReach^{t-1}_{\mathcal{G}'}(A) = H'(\CReach^{t-1}_{\mathcal{G}'}(A)) =
H(\CReach^{t-1}_{\mathcal{G}'}(A) \cup T) - H(T) =
H(\CReach^{t}_{\mathcal{G}}(T|A)) - H(T)\] where the equalities are
those of distributions.
\end{proof}  

As immediate corollaries we can relate expectations  and variance of
 as follows:
\begin{equation}\label{removing step}
   \I^{\tau-1}_{\mathcal{G}'}(A)=\E[\RReach^{\tau-1}_{\mathcal{G}'}(A)]=
   \E[H'(\CReach^{\tau-1}_{\mathcal{G}'}(A))]=
   \E\left[H(\CReach^{\tau}_{\mathcal{G}}(T | A))\right] - H(T) =
   \E[\RReach^{\tau-1}_{\mathcal{G}}(T | A)]-H(T) \ .
\end{equation}
\begin{eqnarray}
  \Var[\RReach^{\tau-1}_{\mathcal{G}'}(A)] & = &   \Var[H'(\CReach^{\tau-1}_{\mathcal{G}'}(A))]=
  \Var[H'(\CReach^{\tau}_{\mathcal{G}}(T | A) \setminus
                                                 \{T\})]  \label{varrel}
  \\ &=&
    \Var[H(\CReach^{\tau}_{\mathcal{G}}(T | A)) -
    H(T)]=\Var[H(\CReach^{\tau}_{\mathcal{G}}(T | A))]=
         \Var[\RReach^{\tau}_{\mathcal{G}}(T|A)]  \nonumber \ .
\end{eqnarray}

\paragraph{Total Variance:}
We define the random variable $A$ to be the subset of $N(T)$ which is activated after the first step. Note that $A$ is a random subset of $N(T)$ using probabilities $p_v$ for $v\in N(T)$ as defined in Section~\ref{sec:submodular}.
By the total variance formula we get that
\begin{equation} \label{total variance formula}
\Var[\RReach_{\mathcal{G}}^\tau(T)] = {\Var}_{A}[\E[\RReach_{\mathcal{G}}^\tau(T | A)]] + \E_{A }[\Var[\RReach_{\mathcal{G}}^\tau(T | A)]].
\end{equation}
We bound the total variance by separately  bounding the two terms.

 \paragraph{Bound on the first term of the total variance:}
 We consider the reduced model
 $\mathcal{G}'$ with respect to $T$ and a
restriction of the influence function $\I^{\tau-1}_{\mathcal{G}'}$ to the domain that is subsets $A\subseteq N(T)$:
\[ f(A) := \I^{\tau-1}_{\mathcal{G}'}(A) .\]
 From Lemma~\ref{samedist}, this function represents the expected marginal utility value of nodes
 which are not in $T$ that are activated after $\tau$ steps if we
 activate $T$ at step $0$ and the set $A$ at step $1$.
 
 We first observe that $f$ is monotone and submodular. This because
strong monotone submodularity of our model implies that the 
 reduced model is also strongly monotone and submodular, and a restriction of
 a monotone and submodular function is also monotone and submodular.
 We establish two helpful properties of $f$. First,
\begin{equation} \label{prop2} 
f(\emptyset) = 0\ ,
\end{equation}
which holds for any influence function. Second, using Lemma~\ref{infrel:lemma} we obtain
\begin{equation} \label{prop3} 
\max_{v \in N(T)} f(v) \underbrace{\leq}_\text{\eqref{infrel}} \max_{v \in N(T)}  \I^{\tau-1}_{\mathcal{G}}(v)  \leq \M^{\tau-1}(\bar{T})\ .
\end{equation}

We are now ready to bound the first term of the total variance \eqref{total variance formula}.
Our monotone submodular function $f$ and the random subset
$A$ using probabilities $p_v$ satisfy
the conditions of Theorem~\ref{sub-additive_variable_variance_bound}. 
\begin{equation}\label{V<EM}
\begin{split}
& {\Var}_{A}[\E[\RReach^\tau(T | A)]] = {\Var}_{A}[f(A) + H(T)] = {\Var}_{A}[f(A)] \underbrace{\leq}_{\text{Theorem \;\ref{sub-additive_variable_variance_bound}}} (\max_{v \in N(T)} f(v)) \E_{A}\left[f(A) - f(\emptyset)\right] \\ 
& \underbrace{\leq}_\text{\eqref{prop2}, \eqref{prop3}} \M^{\tau-1}(\bar{T}) \E_{A}\left[ \E[\tReachonearg{\tau}{T | A} - H(T)]\right] = \M^{\tau-1}(\bar{T})\left(\I^\tau_{\mathcal{G}}(T) -H(T)\right)
\end{split}
\end{equation}

\ignore{
**** to complete****

\paragraph{Auxiliary IC model and properties.}
We consider the auxiliary IC model $\mathcal{G}'(V\setminus T, \mathcal{E}\setminus (V\times T \cup T\times V)$ obtained from $\mathcal{G}$ by deleting the nodes $T$ and their incident edges and keeping $p_e$ on remaining edges. We provide some useful relations between the two models. The first property is that for any $A\subset N(T)$ the following two random variables have the same distribution
\begin{equation} \label{samedist}
\RReach^{\tau-1}_{\mathcal{G}'}(A) = \RReach^{\tau}_{\mathcal{G}}(T | A) - |T|\ , 
\end{equation}
this because the reachability is the same given $E\sim \mathcal{E}$ restricted to edges not incident to $T$.  Therefore for any $A\subset N(T)$
\begin{equation}\label{removing step}
   \I^{\tau-1}_{\mathcal{G}'}(A)=\E[\RReach^{\tau-1}_{\mathcal{G}'}(A)]=  \E\left[\RReach^{\tau}_{\mathcal{G}}(T | A)\right] - |T| = \I^{\tau}_{\mathcal{G}}(T | A) - |T|  \ .
\end{equation}
Moreover, from \eqref{samedist} it trivially follows that the variance is also the same:
\begin{equation}\label{varrel}
    \Var[\RReach^{\tau-1}_{\mathcal{G}'}(A)]= 
    \Var[\RReach^{\tau}_{\mathcal{G}}(T | A) - |T|]=\Var[\RReach^{\tau}_{\mathcal{G}}(T | A)]\ .
\end{equation}
Finally, note that for all $v\in V\setminus T$ and $t\geq 0$,
\begin{equation} \label{infrel}
\I^{t}_{\mathcal{G}'}(v) \leq \I^{t}_{\mathcal{G}}(v) . \end{equation} This because $\mathcal{G}'$ is obtained from $\mathcal{G}$ by removing nodes and edges and thus influence values may only decrease.  

\paragraph{Total Variance:}
We define the random variable $A$ to be the subset of $N(T)$ which is activated after the first step. Note that $A$ is a random subset of $N(T)$ using probabilities $p_v$ for $v\in N(T)$ as defined in Section~\ref{sec:submodular}.
By the total variance formula we get that
\begin{equation} \label{total variance formula}
\Var[\tReachonearg{\tau}{T}] = {\Var}_{A}[\E[\tReachonearg{\tau}{T | A}]] + \E_{A }[\Var[\tReachonearg{\tau}{T | A}]].
\end{equation}
We will bound the total variance by separately  bounding the two terms.

 \paragraph{Bound on the first term of the total variance:}
 We bound the first term by first considering
a function that is a restriction of the influence function $\I^{\tau-1}_{\mathcal{G}'}$ to the domain that is subsets $A\subseteq N(T)$:
\[ f(A) := \I^{\tau-1}_{\mathcal{G}'}(A) .\]
 This function represents the expected number of nodes which are not in $T$ that are activated after $\tau$ steps if we activate $T$ at step $0$ and the set $A$ at step $1$. 
 We first observe that $f$ is monotone and submodular. This because a restriction of a monotone and submodular function is also monotone and submodular. 
 We establish two helpful properties of $f$. First,
\begin{equation} \label{prop2} 
f(\emptyset) = 0\ ,
\end{equation}
which holds for any influence function. Second, using \eqref{infrel} we obtain
\begin{equation} \label{prop3} 
\max_{v \in N(T)} f(v) \underbrace{\leq}_\text{\eqref{infrel}} \max_{v \in N(T)}  \I^{\tau-1}_{\mathcal{G}}(v)  \leq \M^{\tau-1}(\bar{T})\ .
\end{equation}

We are now ready to bound the first term of the total variance \eqref{total variance formula}.
Our monotone submodular function $f$ and the random subset
$A$ using probabilities $p_v$ satisfy
the conditions of Theorem~\ref{sub-additive_variable_variance_bound}. 
\begin{equation}\label{V<EM}
\begin{split}
& {\Var}_{A}[\E[\tReachonearg{\tau}{T | A}]] = {\Var}_{A}[f(A) + |T|] = {\Var}_{A}[f(A)] \underbrace{\leq}_{\text{Theorem \;\ref{sub-additive_variable_variance_bound}}} (\max_{v \in N(T)} f(v)) \E_{A}\left[f(A) - f(\emptyset)\right] \\ 
& \underbrace{\leq}_\text{\eqref{prop2}, \eqref{prop3}} \M^{\tau-1}(\bar{T}) \E_{A}\left[ \E[\tReachonearg{\tau}{T | A} - |T|]\right] = \M^{\tau-1}(\bar{T})\left(\I^\tau_{\mathcal{G}}(T) -|T|\right)
\end{split}
\end{equation}
} 

\paragraph{Bound on the second term of the total variance:}
We next bound the second term of \eqref{total variance formula} which is the expectation of the variance conditioned on $A$:

\begin{eqnarray}
    \E_{A}[\Var[\RReach_{\mathcal{G}}^\tau(T \mid A)]] &=& 
  \sum_{S \subset N(T)} \Pr[A=S]\Var[\RReach_{\mathcal{G}}^\tau(T \mid S)] \nonumber\\
  &=&  \sum_{S \subset N(T)}  \Pr[A=S] \E[\RReach_{\mathcal{G}}^\tau(T\mid S) -
      H(T)] \frac{\Var[\RReach_{\mathcal{G}}^\tau(T \mid S)]}{\E[\RReach_{\mathcal{G}}^\tau(T\mid
      S) - H(T)]}\nonumber\\
  &\leq& \max_{S \subset N(T)} \frac{\Var[\RReach_{\mathcal{G}}^{\tau}(T \mid
         S)]}{\E[\RReach_{\mathcal{G}}^{\tau}(T \mid S) - H(T)]} \sum_{S \subset
         N(T)}  \Pr[A=S] \E[\RReach_{\mathcal{G}}^{\tau}(T \mid S) - H(T)]  \nonumber\\
  &=& \E_{A}\left[ \E[\RReach_{\mathcal{G}}^{\tau}(T \mid A) - H(T)]\right] \max_{S
      \subset N(T)} \frac{\Var[\RReach_{\mathcal{G}}^{\tau}(T \mid
      S)]}{\E[\RReach_{\mathcal{G}}^{\tau}(T \mid S) - H(T)]}\nonumber\\
  &\underbrace{=}_{\eqref{aggA:eq}, \eqref{removing step},\eqref{varrel}}& \left(\I^\tau_{\mathcal{G}}(T) -H(T)\right) \max_{S \subset N(T)} \frac{\Var[\RReach^{\tau-1}_{\mathcal{G}'}(S)]}{\I[\RReach^{\tau-1}_{\mathcal{G}'}(S)]}\nonumber\\
    &=& 
  \left(\I^\tau_{\mathcal{G}}(T) -H(T)\right)
  \frac{\Var[\RReach^{\tau-1}_{\mathcal{G}'}(S')]}{\I[\RReach^{\tau-1}_{\mathcal{G}'}(S')]}\label{sbound}
\end{eqnarray}
Where we take
\[ S' = \arg\max_{S \subset N(T)}
\frac{\Var[\RReach^{\tau-1}_{\mathcal{G}'}(S)]}{\I[\RReach^{\tau-1}_{\mathcal{G}'}(S)]}\]
to be the subset that maximizes the ratio. 

Using the induction hypothesis on $(\tau-1)$-stepped influence we get
\begin{equation}\label{indh}
 \Var[\RReach^{\tau-1}_{\mathcal{G}'}(S')]\leq (\tau-1) \M^{\tau-2}_{\mathcal{G}'}(\bar{S'}) 
\I[\RReach^{\tau-1}_{\mathcal{G}'}(S')] .
\end{equation}

We now relate the maximum influence of nodes in the original and reduced models:
\begin{equation}\label{Mrel}
\M^{\tau-2}_{\mathcal{G}'}(\bar{S'})= \max_{v\in V\setminus(T\cup S')} \I^{\tau-2}_{\mathcal{G}'}(v) 
\underbrace{\leq}_\text{\eqref{infrel}}
\max_{v\in V\setminus(T\cup S')} \I^{\tau-2}_{\mathcal{G}}(v) \leq 
\M^{\tau-2}_{\mathcal{G}}(\overline{T\cup S'}) \leq \M^{\tau-1}_{\mathcal{G}}(\bar{T}) .
\end{equation}

From \eqref{sbound} using \eqref{indh} and \eqref{Mrel} we obtain
\begin{equation}\label{secondpart}
\E_{A}[\Var[\tReachonearg{\tau}{T | A}]]\underbrace{\leq}_\text{\eqref{indh}}
\left(\I^\tau_{\mathcal{G}}(T) -H(T)\right)(\tau-1)\M^{\tau-2}_{\mathcal{G}'}(\bar{S'}) \underbrace{\leq}_\text{\eqref{Mrel}} \left(\I^\tau_{\mathcal{G}}(T) -H(T)\right)(\tau-1)\M^{\tau-1}_{\mathcal{G}}(\bar{T})\  . 
\end{equation}

\paragraph*{Combining the bounds of the first and second terms}
The claim of the Theorem follows using total variance \eqref{total variance formula} and the bounds on the first term \eqref{V<EM} and second term \eqref{secondpart}.

\ignore{
****** I think we don't need the below

By dividing Equation (\ref{total variance formula}) by $\M^{\tau-1}(\bar{T}) \E_{A}\left[ \E[\tReachonearg{\tau}{T | A} - |T|]\right]$ and applying (\ref{V<EM}) we get that:

\begin{equation}\label{total_var_induction_step:eq}
\frac{\Var[\tReachonearg{\tau}{T}]}{\M^{\tau-1}(\bar{T}) \E_{A}\left[ \E[\tReachonearg{\tau}{T | A} - |T|]\right]} \leq 1 + \frac{\E_{A}[\Var[\tReachonearg{\tau}{T | A}]]}{\M^{\tau-1}(\bar{T}) \E_{A}\left[ \E[\tReachonearg{\tau}{T | A} - |T|]\right]}
\end{equation}

Now we can bound $\frac{\E_{A}[\Var[\tReachonearg{\tau}{T | A}]]}{\M^{\tau-1}(\bar{T}) \E_{A}\left[ \E[\tReachonearg{\tau}{T | A} - |T|]\right]}$: 

\begin{align*}
 \frac{\E_{A}[\Var[\tReachonearg{\tau}{T | A}]]}{\M^{\tau-1}(\bar{T}) \E_{A}\left[ \E[\tReachonearg{\tau}{T | A} - |T|]\right]} & = \frac{1}{\M^{\tau-1}(\bar{T})}\frac{\sum_{S \subset N(T)} \Pr[A=S]\Var[\tReachonearg{\tau}{T | A=S}]}{\sum_{S \subset N(T)} \Pr[A=S]\E[\tReachonearg{\tau}{T|A=S} - |T|]} \\ & \underbrace{\leq}_\text{Lemma \ref{division_lemma}} \frac{1}{\M^{\tau-1}(\bar{T})}\max_{S \subset N(T)} \frac{\Var[\tReachonearg{\tau}{T | A=S}]}{\E[\tReachonearg{\tau}{T|A=S} - |T|]} 
 \end{align*}

Let $S' = \arg\max_{S \subset N(T)}\frac{\Var[\tReachonearg{\tau}{T | A=S}]}{\E[\tReachonearg{\tau}{T|A=S} - |T|]} $, and let $\mathcal{G}'$ be a new model obtained from $\mathcal{G}$ by deleting $T$ and its incident edges. Since $A$ represents the activated nodes in the first step we can simply write:

\begin{align*}
\RReach^{\tau}_{\mathcal{G}}(T | A=S') - |T| = \RReach^{\tau-1}_{\mathcal{G}'}(S') 
\end{align*}
therefore,

\begin{equation}\label{removing step}
    \I^{\tau}_{\mathcal{G}}(T | A=S') - |T| = \I^{\tau-1}_{\mathcal{G}'}(S')
\end{equation}

 By the construction of $\mathcal{G}'$ also follows that $\M^{\tau-1}(\bar{T}) \geq \M^{\tau-2}(\bar{S'})$. Since $S'$ is the set of activated nodes in $\mathcal{G}'$ we can remove all the incoming edges to $S'$ since they don't contribute to the influence. Using that, we get:

\[\frac{\Var[\RReach^{\tau}_{\mathcal{G}}(T)]}{\M_{\mathcal{G}}^{\tau-1}(\bar{T})\I_{\mathcal{G}}^\tau(T)} \leq \frac{\Var[\RReach^{\tau}_{\mathcal{G}}(T)]}{\M_{\mathcal{G}}^{\tau-1}(\bar{T})\left(\I_{\mathcal{G}}^\tau(T) - |T|\right)} = \frac{\Var[\RReach^{\tau}_{\mathcal{G}}(T)]}{\M_{\mathcal{G}}^{\tau-1}(\bar{T})\E_{A}\left[\E\left[\RReach^{\tau}_{\mathcal{G}}(T \mid A = S')\right] - |T|\right]}\]

\[  \underbrace{\leq 1}_\text{Eq. \ref{total_var_induction_step:eq}} + \frac{\Var[\RReach^{\tau}_{\mathcal{G}}(T \mid A = S')]}{\M_{\mathcal{G}}^{\tau-1}(\bar{T})\E[\RReach^{\tau}_{\mathcal{G}}(T \mid A = S') - |T|]} \underbrace{\leq}_\text{Eq. \ref{removing step}} 1 + \frac{\Var[\RReach^{\tau-1}_{\mathcal{G}'}(S')]}{\M^{\tau - 2}_{\mathcal{G}'}(\bar{S'})\I_{\mathcal{G}'}^{\tau-1}(S')} \underbrace{\leq}_\text{induction's hypothesis} 1 + \tau -1 = \tau\]

**********

}

\ignore{
We use the following refinement of the claim of Theorem~\ref{bound variance in graph} in Section~\ref{bdependence:sec}.
Note that we need here a refined claim  
where 
\begin{corollary}
The claim of Theorem~\ref{bound variance in graph} holds when we use
\[ \M^\tau_{\mathcal{G}}(\bar{T}) = \max_{v \in V \setminus T} \I_\mathcal{G}^{\tau-d(T,u)}(v) \ ,\]
 where $d(T,v) := \min_{u\in T} d(u,v)$ is the forward directed shortest path distance from $T$ to $v$.
 \end{corollary}
 \begin{proof}
 Consider a transformed IC model $\mathcal{G}'$ where we delete all nodes that are of shortest path distance longer than $\tau$ from $T$ and incident edges.  The distribution of $\RReach^\tau_{\mathcal{G}'}(T)$ (and its expectation and variance) remain the same and the influence of all other nodes can only decrease.
 Consider a node $v$ then clearly
 $\I^{\tau-d(T,v)}_{\mathcal{G}'} = $
 such that $d(T,v)$
 
 \end{proof}
}

\ignore{
\subsection{Variance upper bound for IGT models} \label{IGT:sec}

We now establish Theorem~\ref{var_upper_bound:thm} for IGT models.
\begin{proof}
We extend the proof we provided for IC models 
Theorem \ref{bound variance in graph}
by revisiting only the components in the proof that are specific to the family of models. The proof is by induction on $\tau$.

\textbf{Base case ($\tau=1$)}:
The only properties required for the base case is that the activations of nodes in the first step are independent (or negatively dependent).  
We specify for each node $v\in V\setminus T$ the probability that it is activated in step 1 if $T$ is activated in step $0$:
\[ p_v = \Pr_{\theta_v}[ \theta_v \geq f_v(T)]\ .\]
Clearly the activations of different nodes in step 1 are independent because the random variables $\theta_v$ are independent.  Therefore, we can use the same argument as the base case in Theorem~\ref{bound variance in graph}.

\textbf{Inductive step}:
We first need to appropriately define the auxiliary model $\mathcal{G}'(V\setminus T)$  with respect to a set of nodes $T$.  We then need to establish the properties needed to complete the proof. We construct the model by deleting $T$ from $V$ and 
updating the activation functions to be
\[\phi'_v(S) := \phi_v(S\cup T) .\]  That is, to draw a set of activation function from $\mathcal{G}'$ we draw one from $\mathcal{G}$ and transform as above.

For threshold models we obtain $\mathcal{G}'(V\setminus T, \{f'_v\})$ by
\[
f'_v(S) := f_v(S\cup T) - f_v(T) .
\]

%

The functions $f'_v$ are monotone and submodular because they are a shift by a constant of a monotone submodular function.  They are also in the range $[0,1]$ because $f_v$ is an we have $f_v(S\cup T) \geq f_v(T)$ from monotonicity.

We define $\RReach^{t}_{\mathcal{G}}(T|A)$ to be the random variable that is the reachability of $T$ in a diffusion on $\mathcal{G}$ seeded with $T$ that is conditioned on the event that  exactly the nodes $A\subset V\setminus T$ (and no other nodes) are activated in step 1.
We next establish that for any $A\subset V\setminus T$ and $t\geq 1$, the random variables $\RReach^{t-1}_{\mathcal{G}'}(A)$ and 
$\RReach^{t}_{\mathcal{G}}(T|A)-|T|$ have the same distribution.

To see that the two random variables are equivalent consider the thresholds distribution of nodes $v\in V\setminus (A\cup T)$ that result in exactly the nodes $A$ being activated in step $1$.
A node $v$ not getting activated means that $\theta_v > f_v(T)$.  Therefore, the conditional distribution of $\theta_v$ provided it was not activated in the first step is uniform on $[f_v(T),1]$.   
The probability that $\theta_v > f_v(S\cup T)$ is equal to the probability that $\theta'_v > f_v(S\cup T)-f_v(T) = f'_v(S)$.

Therefore, it follows that for all $t\geq 1$ and $A\subset V\setminus T$,
\begin{eqnarray*}
\I^{t-1}_{\mathcal{G'}}(A) &=& \I^{t}_{\mathcal{G}}(T\mid A)-|T| \\ 
\Var[\RReach^{t-1}_{\mathcal{G}'}(A)]&=&\Var[\RReach^{t}_{\mathcal{G}}(T|A)] .
\end{eqnarray*}

Finally, the last property we need to establish is that for all $t\geq 0$ and $v\in V\setminus T$, $\I^t_{\mathcal{G}'}(v)\leq \I^t_{\mathcal{G}}(v)$.  This follows because activation probabilities of any node $u$ given the set of already active nodes can only be lower.  For any $u$ and $A\subset V\setminus T$
\begin{eqnarray*}
\Pr_{\theta'\sim U[0,1]}[\theta' \geq f'_v(A)]
&=& 1- f'_v(A) \\
&=& 1-(f_v(A\cup T)-f_v(T)) \\
&\leq& 1-f_v(A\cup T)\\
&=& \Pr_{\theta\sim U[0,1]}[\theta \geq f_v(A\cup T)] .
\end{eqnarray*}
\end{proof}
} 
\section{Variance lower bound construction}~\label{varLB:sec}
\begin{lemma}\label{high of a tree lemma}
Let $\mathcal{G}$ be complete binary tree where each edge has probability $\frac{1}{2}$ and let $h(u)$ be the height of the node $u$. Then, $\I^\tau(u) = h(u)$ and $\Var[\tReachonearg{\tau}{u}] = \frac{1}{2} \sum_{i=0}^{h(u)-1} i^2 = \frac{\left(h(u)-1\right)h(u)\left(2h(u)-1\right)}{12}$.
\end{lemma}

\begin{proof}

By induction on the height of the of the node.

\textbf{base step}: ($h(u)=1$):
It is clear that $\I^1(u) = 1$ and $\Var\left[\tReachonearg{1}{u}\right] = 0$ since $u$ is a leaf.

\textbf{Inductive step}:
$u$ has two neighbors and each is reached with probability $\frac{1}{2}$. Let $v_1$ and $v_2$ be the neighbors of $u$ and let $X_1, X_2$ be random variables that indicate if $(u,v_1), (u,v_2)$ were activated respectively. The variables $X_1, X_2$ are Bernoulli random variables with $p=\frac{1}{2}$, hence, $\E[X_1] = \E[X_2] = \frac{1}{2}$ and $\Var[X_1] = \Var[X_2] = \frac{1}{4}$. Since the graph is a tree, the reachabilities of $v_1$ and $v_2$ are independent random variables, so we can simply write:

\[ \tReachonearg{\tau}{u} = 1 + X_1\tReachonearg{\tau-1}{v_1} + X_2 \tReachonearg{\tau-1}{v_2} \]

The variable $\tReachonearg{\tau-1}{v_1}$ and $\tReachonearg{\tau-1}{v_2}$ are identical and $X_1, \tReachonearg{\tau-1}{v_1}$ and $X_2, \tReachonearg{\tau-1}{v_2}$ are independent random variables, Thus,

\[ \I^\tau(u) = \E[\tReachonearg{\tau}{u}] = 1 + \E[X_1]\E[\tReachonearg{\tau-1}{v_1}] + \E[X_2]\E[\tReachonearg{\tau-1}{v_2}] = 1 + 2 \frac{1}{2}\E[\tReachonearg{\tau-1}{v_1}]\] 
\[\underbrace{=}_\text{induction's hypothesis} 1 + h(u)-1 = h(u) \]

The computation of the variance is similar:

\[ \Var[\tReachonearg{\tau}{u}] = \Var[X_1\tReachonearg{\tau-1}{v_1}] + \Var[X_2\tReachonearg{\tau-1}{v_2}] = 2\Var[X_1\tReachonearg{\tau-1}{v_1}] \]

For two independent random variables $A, B$ holds that: $\Var[AB] = \Var[A]\Var[B] + \Var[A]\E^2[A] + \Var[B]\E^2[B]$, we have that:

\[ \Var[\tReachonearg{\tau}{u}] = 2 \left(\Var[X_1] \Var[\tReachonearg{\tau-1}{v_1}] + \Var[X_1] \E^2[\tReachonearg{\tau-1}{v_1}] + \Var[\tReachonearg{\tau-1}{v_1}] \E^2[X_1]\right)\]

\[ = 2\left( \frac{1}{2}\Var[\tReachonearg{\tau-1}{v_1}] + \frac{1}{4}\E^2[\tReachonearg{\tau-1}{v_1}] \right) = \Var[\tReachonearg{\tau-1}{v_1}] + \frac{1}{2}(h(u)-1)^2\]
\[\underbrace{=}_\text{induction's hypothesis} \frac{1}{2}\sum_{i=0}^{h(u)-1} i^2 = \frac{(h(u)-1)h(u)(2h(u)-1)}{12}.\]

\end{proof}

\begin{theorem}
There is a model $\mathcal{G}$ and a set of nodes $T$ such that $\frac{\Var[\tReachonearg{\tau}{T}]}{\M^\tau(\bar{T})\I^\tau(T)} \geq \frac{\tau}{12}$.  
\end{theorem}

\begin{proof}

Lemma \ref{high of a tree lemma} shows that for every node $u \in \mathcal{G}$, $\I^\tau(u) = h(u)$ and $\Var[\tReachonearg{\tau}{u}] = \frac{\left(h(u)-1\right)h(u)\left(2h(u)-1\right)}{12}$. It follows that the root $r$ has the largest influence $\I^\tau(r) = \tau$ and $\Var[\tReachonearg{\tau}{u}] = \frac{(\tau-1)t(2\tau-1)}{12}$, Furthermore $\M^\tau(\bar{r}) = \tau-1$ since the nodes of the largest influence in $V \setminus r$ are the children of $r$. We conclude that:

\[\frac{\Var[\tReachonearg{\tau}{r}]}{\M^\tau(\bar{r})\I^\tau(r)} = \frac{(\tau-1)\tau(2\tau-1)}{\tau(\tau-1)12} = \frac{2\tau-1}{12} \underbrace{\geq}_{\tau \geq 1} \frac{\tau}{12}.\]

\end{proof}

\section{Greedy optimization with approximate non-submodular oracle} \label{greedyproof:sec}
In this section we
 present the proof of Lemma~\ref{almostsubmodular:thm}.
 We show that our approximation guarantees imply that the application of greedy on $\hat{F}$ generates a sequence that is an approximate greedy sequence (in the sense of Lemma~\ref{approxgreedy:lemma}) with respect to $F$.
 
 We first state a helpful Lemma \cite{binaryinfluence:CIKM2014} that establishes that it suffices that $\hat{F}(u \mid S)$ to approximate the marginal contributions \[F(u \mid S) := F(S\cup\{u\})-F(S)\].  
  \begin{lemma} \cite{binaryinfluence:CIKM2014}\label{approxgreedy:lemma}
  Given a monotone submodular function $F$, an
  approximate greedy algorithm that for some $\epsilon\in[0,1)$ selects at each step an element 
  $u$ such that 
  $F(u \mid S) \geq (1-\epsilon)\max_v F(v \mid S)$ has approximation ratio
  $\geq (1-(1-1/s)^s)(1-\epsilon)$.
  \end{lemma}
  \begin{proof}
  It is easy to see that the approximation ratio of $\epsilon$-approximate greedy is
  $1-(1-(1-\epsilon)/s)^s$. It therefore suffices to establish that
  this expression is larger than $(1-(1-1/s)^s)(1-\epsilon)$ for $\epsilon\in [0,1].$
  Equivalently, we need to show that for all $s\geq 2$ and $x\in[0,1]$ 
  $$(1-(1-x)/s)^s - (1-x)(1-1/s)^s -x \leq 0\ .$$  This follows from equality holding for
  $x=0$ and $x=1$ and the function being concave up (second derivative is positive).  
    \end{proof}

 \begin{proof} [Proof of Lemma~\ref{almostsubmodular:thm}]
 
 Consider a monotone non-negative $\hat{F}$ that is a uniform $\epsilon_A$-approximation of a monotone non-negative $F$ with $\epsilon_A= \frac{\epsilon(1-\epsilon)}{14s}$.
 By definition of $\epsilon$-approximation
 (see Section \ref{sec:influence-oracle}), 
 $\left| \hat{F} (T) - F(T)
      \right| \geq \epsilon_A  \max\{F(T),\OPT_1(F) \}$ for all $S$ with $|S|\leq s$.
Therefore, 
    \begin{align}
 \text{if } F(S) \geq (1-\epsilon)\OPT_1(F)\; \text{then} \; &  \frac{\left|\hat{F}(S)-F(S)\right|}{F(S)} \leq \frac{\epsilon}{14s} \label{eq:relative-cond}\\
 \text{if } F(S)\leq (1-\epsilon)\OPT_1(F) \; \text{then} \; & \hat{F}(S) \leq \left(1-\frac{\epsilon}{2}\right) \OPT_1(F)\ . \label{eq:abs-error}
\end{align}
Inequality~\eqref{eq:relative-cond} follows immediately when $F(S)\geq \OPT_1(F)$ because the relative error is at most $\epsilon_A \leq \frac{\epsilon}{14s}$. For $(1-\epsilon)\OPT_1(F) \leq F(S) < \OPT_1(F)$ we have absolute error being at most $\epsilon_A \OPT_1(S)$ which is a relative error of at most $\epsilon_A/(1-\epsilon)\leq \frac{\epsilon}{14s}$. 
Inequality~\eqref{eq:abs-error} follows from the absolute error being at most $\epsilon_A \OPT_1(F)$ and $\epsilon_A \leq \epsilon/2$.

We establish that these conditions imply that greedy on $\hat{F}$ 
on the prefix of the greedy sequence where $F(S)\leq \frac{3}{4}\OPT_s(F)$ is actually
approximate greedy (as in the conditions of 
Lemma~\ref{approxgreedy:lemma})
with respect to $F$.
Note that $1-(1-1/s)^s \ge 3/4$ for $s\ge 2$ and thus the prefix restriction does not limit generality.
The claim will then follow from
 Lemma~\ref{approxgreedy:lemma}.

 For $s=1$, it follows from Equations \eqref{eq:relative-cond} and \eqref{eq:abs-error} , that the first element of a greedy sequence with respect to $\hat{F}$, $\arg\max_u \hat{F}(u)$, satisfies  $\hat{F}(u) \geq (1-\frac{\epsilon}{14}) \OPT_1(F)$.
 Therefore from
 the second iteration and on, we have a set $S$ for which the relative error bounds in Equation \eqref{eq:relative-cond}) applies.
 
 We consider the marginal contributions $F(u \mid S)$ for any node $u$.  We have
 \begin{eqnarray}
 |\hat{F}(S)-F(S)| &\leq& \frac{\epsilon}{14s}F(S) \nonumber\\
 |\hat{F}(S \cup\{u\})-F(S\cup\{u\})| &\leq& \frac{\epsilon}{14s}F(S\cup\{u\})= \frac{\epsilon}{14s}(F(S)+F(u \mid S))\ .\nonumber
 \end{eqnarray}
 We use these inequalities to bound the absolute error of (any) marginal influence estimate by
 \begin{eqnarray} 
 \left| \hat{F}(u \mid S) - F(u \mid S) \right| &=&
 \left|\hat{F}(S \cup \{u\})-\hat{F}(S)- F(S\cup \{u\})+ F(S) \right|   \label{absmarg:eq} \\
 &\leq&  \left|\hat{F}(S \cup \{u\})-F(S\cup\{u\}\right| + \left| \hat{F}(S) -F(S) \right| \nonumber\\
 &\leq& \frac{\epsilon}{7s}F(S)+ \frac{\epsilon}{14s}F(u \mid S)\ .\nonumber
 \end{eqnarray}
 
We now consider the node $v=\arg\max_{u\in V} F(u \mid S)$ with
maximum marginal contribution to $S$ with respect to $F$ and its
contribution value 
\[\Delta := F(v \mid S) \geq \frac{1}{s}(\OPT_s -F(s))\ .\]
Thus, when 
$F(S)\leq \frac{3}{4}\OPT_s$, 
\begin{equation} \label{deltabound:eq}
\Delta \geq \frac{1}{3s}F(S) .
\end{equation}

By applying (\ref{absmarg:eq})
to $v$ we get that 
\[\hat{F}(v \mid S) \geq \Delta-\frac{\epsilon}{7s}F(S)- \frac{\epsilon}{14s}\Delta\]

Therefore the node 
$v' =\arg\max_v \hat{F}(v' \mid S)$ with
maximum marginal contribution according to $\hat{F}$ satisfies 
 \[\hat{F}(v' \mid S) \geq \hat{F}(v \mid S) \geq \Delta-\frac{\epsilon}{7s}F(S)- \frac{\epsilon}{14s}\Delta. \]
 
By using (\ref{absmarg:eq}) again,  
 substituting \eqref{deltabound:eq}, and using that fact that $s\geq 2$:
\begin{eqnarray}
  F(v' \mid S) &\geq& \Delta-2\frac{\epsilon}{7s}F(S)- 2\frac{\epsilon}{14s}\Delta\\
  &\geq& \Delta-\epsilon\Delta (\frac{1}{7s}+\frac{6}{7} ) \geq \Delta \left(1- \epsilon \right)\ .
\end{eqnarray}

Therefore, the greedy sequence according to $\hat{F}$ is an approximate greedy sequence according to $F$ and satisfies the conditions of Lemma~\ref{approxgreedy:lemma}. Therefore the resulting sequence yields an approximation ratio at least
$(1-(1-1/s)^s)(1 - \epsilon)$.
  \end{proof}

  \section{Greedy for live-edge models} \label{greedymoa:sec}

  Proof of Theorem~\ref{greedyalg:thm}:
   \begin{proof}
   For the first bound, we explicitly maintain for each node $u\in V$, for each pool, the reachability set of $u$ in the simulations of the pool (and its cardinality). 
   The dominant term in the cost of computation is performing a BFS from each node in each of the $r \ell $ simulations that is truncated at distance $\tau$. 
   The total computation time is 
   \begin{equation}
   O(r\ell \overline{m}n)=O(\epsilon^{-2} s^3 c \ln 
   \left(\frac{n}{\delta}\right) \overline{m}n)\ , 
   \end{equation}
   where $$\overline{m} = \frac{1}{\ell r} \sum_{i=1}^r\sum_{j=1}^\ell |E_{ij}|$$ is the average number of edges per simulation. For an IC model, $\E[\overline{m}]=\sum_{e \in \mathcal{E}} p_e$. 
   When a node $u$ is selected into the seed set we 
   remove all nodes in its reachability set from 
   the reachability sets of all other nodes. The removal cost can be "charged" to the initial reachability computation.  
   
    The dependence of the computation time on the graph size can be improved by using combined reachability sketches~\cite{ECohen6f,binaryinfluence:CIKM2014,timedinfluence:2015,ECohenADS:TKDE2015} instead of maintaining the reachability sets explicitly (see  Section~\ref{sketchedave:sec}).  The sketch size needed in order to provide the required accuracy of $O(\epsilon/s)$ (as in Theorem~\ref{almostsubmodular:thm}) uniformly for all subsets of size at most $s$ is $k=O(\epsilon^{-2}s^3\ln{n})$.  We compute a sketch for each node in each of the $r$ pools, so in total we have $rn$ node sketches. The construction time of these sketches has a term $\sum_{ij} |E_{ij}|=r \ell \overline{m}$ linear in the total size of simulations and a term for sketch constructions which is a product of 
    the number of pools $r$ and the construction time for each pool. The per-pool construction time is as described in Section~\ref{sketchedave:sec} and is bounded by $k$ (sketch size) visits for each node, each involving reverse traversals of incoming edges of the node in some simulation.  The per-pool construction time for an IC model is  $O(k(n+\sum_e p_e))$ in expectation.  The time with arbitrary simulations for pool $i$ is $O(k(n+\sum_v \max_j d_v(E_{ij})))$.  In total over all pools, the construction time is dominated by 
    $O(r (\ell\overline{m} + k (n+ m^*))$, where $m^* = \sum_v \max_{ij} d_v(E_{ij})$ for arbitrary simulations and $m^*=\sum_e p_e$ for simulations generated by an IC model. 
    
    The sketches improve the computation time of greedy.  Each iteration of greedy uses the (precomputed) union sketch of the current seed set $S$ in each pool. It then examines the sketches of each $v\in V$ to compute the estimate of the averaging oracle  $\eA(S\cup \{v\})$ in each pool.  This operations takes $O(k n r)$ in total for the iteration. Therefore $s$ iterations of greedy maximization takes $O(knrs)$ using the sketches. Combining the construction cost of the sketches using $r=O(s\ln{\frac{n}{\delta}})$ and $\ell = O(\epsilon^{-2}s^2 c)$ and the greedy implementation over the sketches we obtain a total bound on the computation time of 
    \begin{eqnarray*}
    O(r\ell\overline{m} + k r (m^*+ns))&=& O(r(\ell\overline{m} +k(m^*+ns)))\\ &=&  O(s\ln\frac{n}{\delta}(\epsilon^{-2}s^2 c \overline{m}+ \epsilon^{-2}s^3 (m^* + ns) \ln n)\\ &=& O(\epsilon^{-2}s^3\ln\frac{n}{\delta}(c \overline{m} +s(m^*+ns)\ln n))\ . 
    \end{eqnarray*}
    \end{proof}
  
\section{Optimization with adaptive sample size} \label{adaptivemore:sec}

The pseudocode for our wrapper is provided in Algorithm~\ref{optsamples:alg}.  The inputs to the wrapper are a base algorithm ${\mathcal A}$ and two constructions of oracles from sets of
simulations.  The first construction produces an oracle,  $\hat{F_v}$, that we use for
validation.
The second construction produces oracles, $\hat{F}_x$, that are provided as input to ${\mathcal
  A}$ to perform the optimization.  The oracles provide an
approximation of our influence function $\I^\tau(S)$ with non-uniform
guarantees.  For specified $(\epsilon,\delta)$ we use the  expressions
 $r_v(\epsilon,\delta)$ or
$r_x(\epsilon,\delta)$ for the number of simulations required to obtain $(\epsilon,\delta)$ guarantees (in the sense of Section~\ref{sec:influence-oracle}).
This gives us a relation between $\epsilon$, $\delta$, and a number of simulations. 
When constructing an oracle with a given number of simulations $r$ and a specified $\epsilon$, we can determine the confidence $\delta$ we have from $\epsilon$ and $r$.  The oracles that we consider
 have the property that for a fixed $\epsilon$,  $\delta$ decreases at least linearly with the number of simulations. 
 \notinproc{(i.e., when we double the number of simulations $\delta$ decreases by at least a factor of $2$.)}

 \begin{algorithm}[htbp]\caption{Optimization wrapper \label{optsamples:alg}}
  \DontPrintSemicolon
  \KwIn{(i) Two oracle constructions from simulations: 
    $\hat{F_{v}}$ (validation) and $\hat{F_{x}}$
    (optimization) that with
    $r_v(\epsilon,\delta)$ (resp., $r_x(\epsilon,\delta)$) simulations provide
 $(\epsilon,\delta)$ guarantees.
 (ii)  An optimization
      algorithm ${\cal A}$ that applies to the optimization oracle
      $\hat{F_x}$ and returns a subset. (iii)  $M$: Bound on maximum number of
      simulations. (iv) Parameters $\epsilon>0$ and $\delta>0$.}
  {\small
 $r\gets r_x(\epsilon,\delta)$ \tcp*{\#simulations for
   $\hat{F_{x}}$ to provide $(\epsilon,\delta)$ guarantees}
 \tcp{Build validation oracle}
 $\delta_v \gets \frac{\delta}{\lceil \log_2 M/r) \rceil}$;
 $r_v \gets r_v(\epsilon, \delta_v)$ \tcp*{\#simulations for validation oracle}\;
  $\hat{F_{v}} \gets$ validation oracle from $r_v$
  i.i.d simulations that provides $(\epsilon,\delta_v)$ guarantees\;
  
  ${\mathcal R} \gets \perp$ \tcp*{Initialize set of i.i.d simulations for
  optimization}\;
  \Repeat{$|{\mathcal R}|+r_v>M$}{
    Add $r$ fresh i.i.d simulations to set ${\mathcal R}$\;
    
   $\hat{F}_{x}\gets $ optimization oracle from simulations
   ${\mathcal R}$ that provides $(\epsilon,*)$ guarantees 
   \tcp*{* determined by $|\mathcal{R}|$}\;
    $T \gets {\cal A} (\hat{F_x})$\tcp*{Optimize over the oracle}
    \eIf{$\hat{F_v}(T)\geq \frac{(1-2\epsilon)}{1+\epsilon} 
      \hat{F_x}(T)$}{\Return{$T$, $\hat{F_v}(T)$}}
{ $r \gets 2r$}
}
}
  \end{algorithm}
     
The wrapper first determines an upper bound ($\lceil \log_2 M/r_x(\epsilon,\delta) \rceil$) on the maximum number of iterations it performs (based on the initial number and the simulation budget) and constructs a validation oracle that provides guarantees
for a small number of sets (queries) which equals this maximum number of
iterations.    It then starts with a set $\mathcal{R}$ of $r_x(\epsilon,\delta)$ simulations that
suffice for the oracle $\hat{F}_x$
to provide (non-uniform)  $(\epsilon,\delta)$ approximation guarantees.  The wrapper repeats
the following:  It constructs an ``optimization'' oracle
$\hat{F}_x$ using the set of
simulations $\mathcal{R}$ and applies ${\mathcal A}$ over $\hat{F}_x$ to obtain
a set $T$. The wrapper terminates when $\hat{F}_v(T)$ is close to  $\hat{F}_x(T)$
or when our simulation budget of $M$ is exceeded. Otherwise, we double the
number of simulations in our set $\mathcal{R}$ and repeat.

The wrapped algorithm ${\mathcal A}$ can be an exact or approximate optimizer.  It is applied to the oracle function and therefore its quality guarantees are with respect to how well the oracle value $\hat{F}_x(T)$ of the output set $T$ approximates the oracle optimum $\max_{S \mid |S|\leq s}\hat{F}_x(S)$.
  The wrapper extends the approximation guarantees that ${\mathcal A}$
  provides (with respect to the oracle) to a guarantee with respect to 
  the influence function while avoiding the worst-case number of
  simulations needed for a uniform approximation.

  We first establish some basic properties.
  \begin{lemma} \label{onesided:lemma}
Let $S$ be a set with maximum influence (with $\I^\tau(S)=
\OPT^\tau_s$).    
  With probability at least
  $1-2\delta$, all the optimization oracles
  $\hat{F_x}$ constructed by the
  wrapper have 
  $ (1-\epsilon)\OPT^\tau_s \le \hat{F_x}(S) \le (1+\epsilon)
\OPT^\tau_s$.  
\end{lemma}
\begin{proof}
 The probability that 
  $ (1-\epsilon)\OPT^\tau_s \le \hat{F_x}(S) \le (1+\epsilon)
\OPT^\tau_s$ fails
  for 
the first oracle is at most $\delta$.  The number of $\hat{F}_x$ uses simulations doubles in each iteration 
and all our constructions are such that the 
confidence parameter $\delta$ decreases at least linearly with the 
number of simulations.  We therefore obtain that the sequence of
failure probabilities for $(1-\epsilon)\OPT^\tau_s \le \hat{F_x}(S) \le (1+\epsilon)
\OPT^\tau_s$
is geometric and sums up to at most $2\delta$. 
\end{proof}
As an immediate corollary we obtain:
\begin{corollary} \label{onesided:coro}
Under the conditions of Lemma~\ref{onesided:lemma}, 
the oracle optimum in all iterations satisfies
\[\max_{T \mid |T|\leq s}\hat{F_x}(T) \geq (1-\epsilon)
\OPT^\tau_s\ .\]
\end{corollary}

The following is immediate  from the
construction of the validation oracle.
\begin{lemma} \label{validation:lemma}
With probability at least $1-\delta$, the validation oracle has
relative error at most $\epsilon$ on all tests in which  the input set $T$ is such that 
$\I^\tau(T) \geq \OPT^\tau_1$ and absolute error at most
$\epsilon \OPT^\tau_1$ otherwise.
\end{lemma}  
\begin{proof}
The wrapper performs at most $\lceil \log_2 M/r \rceil$ iterations before it stops, in each iteration the validation oracle fails to provide an $\epsilon$-approximation with probability at most $\delta_v$. Therefore, by union bound, the probability that the algorithm fails to provide an $\epsilon$-approximation in at least one round is at most $\delta_v \lceil \log_2 M/r \rceil \leq \delta$.
\end{proof}

\begin{lemma}\label{approximate maximizer:lemma}
Assume that our data and our optimization oracle  with $r$ or more 
simulations, are such that with probability at least $1-\delta$,  the optimum of the oracle is an approximate
optimizer, that is:
\begin{equation}\label{eq:first_cond}
  (1+\epsilon)\OPT_s^\tau \geq  \max_{S \mid |S| \leq
  s}\hat{F_x}(S) \geq  (1-\epsilon)\OPT_s^\tau
\end{equation}

\begin{equation}\label{eq:second_cond}
\I^\tau(\arg\max_{S \mid |S|\leq s}\hat{F_x}(S)) \geq
   (1-\epsilon)\OPT_s^\tau    
\end{equation}
and assume that the algorithm ${\cal A}$ returns the oracle optimum. Then
with probability at least $1-5\delta$, the wrapper terminates after at most
$2\max\{r,r_x(\epsilon,\delta)\}+r_v$ simulations and returns $T$ such that
$\I^\tau(T) \geq (1-5\epsilon) \OPT_s^\tau$.
\end{lemma}
\begin{proof}
First we show that the wrapper returns a set $T$ with the required properties with probability at most $1 - 3\delta$ and then we show that the number of iterations the wrapper does before it stops is smaller than $M$ with probability of at most $1 - 2\delta$.

From Lemma~\ref{onesided:lemma}, with probability at least $1-2\delta$ in all iterations ${\mathcal A}$ returns $T$ for which
$\hat{F_x}(T) \geq (1-\epsilon)\OPT^\tau_s$.   
The validation succeeds only if
$\hat{F_v}(T) \geq \frac{(1-2\epsilon)}{1+\epsilon} \hat{F_x}(T)  \geq
\frac{(1-\epsilon)(1-2\epsilon)}{1+\epsilon} \OPT^\tau_s$.
From Lemma~\ref{validation:lemma} with probability at least
$1-\delta$ in all iterations  we have
\[\hat{F_v}(T) \leq \max\{(1+\epsilon) \I^\tau(T),  \I^\tau(T) +
\epsilon \OPT^\tau_1\}\ .\]


Therefore, with probability $1 - 3\delta$ the set $T$ returned by the wrapper satisfies 
\[\frac{(1-\epsilon)(1-2\epsilon)}{1+\epsilon} \OPT^\tau_s \leq \max\{(1+\epsilon) \I^\tau(T),  \I^\tau(T) + \epsilon \OPT^\tau_1 \}\ . \]

If $(1+\epsilon)\I^\tau(T) > \I^\tau(T) + \epsilon \OPT^\tau_1$ then $\I^\tau(T) \geq \frac{(1-\epsilon)(1-2\epsilon)}{(1 + \epsilon)^2} \OPT^\tau_s \geq (1-5\epsilon)\OPT^\tau_s$. Otherwise, we have that $\I^\tau(T) \geq \left( \frac{(1-\epsilon)(1-2\epsilon)}{1+\epsilon} - \epsilon \right) \OPT^\tau_s \geq (1-5\epsilon)\OPT^\tau_s .$

We have to show that with probability at least $1-2\delta$ within $2\max\{r,r_x(\epsilon,\delta)\}+r_v$ simulations the wrapper returns such a set $T$ to finish the proof.
Consider the first iteration where $|{\mathcal R}|\geq r$. By Equations (\ref{eq:first_cond}) and (\ref{eq:second_cond}) with
  probability at least $1-\delta$ we have that $\I^\tau(T)\geq
  (1-\epsilon)  \OPT_s^\tau$ and
$(1+\epsilon) \OPT_s^\tau \geq \hat{F_x}(T) \geq (1-\epsilon)\OPT_s^\tau$ .By Lemma \ref{validation:lemma} we have that with probability at least $1-\delta$, the validation oracle satisfies that $\hat{F_v}(T) \geq (1-\epsilon) \I^\tau(T)$ or $\hat{F_v}(T) \geq \I^\tau(T) - \epsilon \OPT_s^1 \geq \I^\tau(T) - \epsilon \OPT_s^\tau$. By the last two statements we have that with probability of at least $1-2\delta$:

\[\hat{F_v}(T) \geq (1-\epsilon) \I^\tau(T) \geq (1-\epsilon)^2
\OPT_s^\tau \geq (1-2\epsilon) \OPT_s^\tau \geq \frac{(1-2\epsilon)}{1+\epsilon} \hat{F}_x(T) \]

or 

\[\hat{F_v}(T) \geq \I^\tau(T) - \epsilon \OPT_s^\tau  \geq (1-2\epsilon) \OPT_s^\tau \geq \frac{(1-2\epsilon)}{1+\epsilon} \hat{F}_x(T) . \]


\end{proof}  

Theorem~\ref{optAadaptive:thm}, which we restate below to provide reading fluency,  now follows as a corollary. 
\begin{theorem} [Theorem~\ref{optAadaptive:thm}]
Suppose that on our data the averaging (respectively, median-of-averages) oracle
$\hat{F}$ has the
property that with $r$ simulations, with probability at least $1-\delta$,  the oracle optimum
$T := \arg\max_{S \mid |S|\leq s}\hat{F}(S)$
satisfies
\begin{eqnarray*}
\I^\tau(T) &\geq& (1-\epsilon)\OPT_s^\tau .\\
\end{eqnarray*}
Then with probability at least
$1-5\delta$, when using
$2\max\{r,r(\epsilon,\delta)\} + O\left(\epsilon^{-2}c\left(\ln{\frac{1}{\delta}} +   \ln \left(\ln\ln \frac{n}{\delta}+ \ln s\right)\right)\right)$ simulations with the
median-of-averages oracle and
$2\max\{r,r(\epsilon,\delta)\} + O\left(\epsilon^{-2}c\left(\ln{\frac{1}{\delta}} + \ln \left( \ln\ln \frac{n}{\delta}+ \ln n \right) \right)\right)$ simulations with the averaging
oracle, the wrapper outputs a
set $T$ such that $\I^\tau(T) \geq (1-5\epsilon)\OPT_s^\tau$.
\end{theorem}

\begin{proof} [Proof of Theorem~\ref{optAadaptive:thm}]
We analyze here the number of simulations required using the averaging oracles and the median-of-averages oracles, in both cases we use median-of-averages oracles for validation. In both cases $r_v = r(\epsilon, \delta_v) = O\left(\epsilon^{-2}c \log{\frac{1}{\delta_v}} \right)$, where $\delta_v = \frac{\delta}{\lceil\log_2{ \frac{M}{r_x} }\rceil}$. By Lemma \ref{approximate maximizer:lemma} the number of simulations is at most $2\max\{r,r_x(\epsilon,\delta)\}+r_v$. $M$ and $r_x$ get different values for each oracle.

\textbf{median-of-averages oracles analysis}

We have that $r_x = O(\epsilon^{-2}c \ln{\delta^{-1})}$ by Lemma \ref{MEoracle:lemma} and we set $M = O(\epsilon^{-2}c s \ln{\frac{n}{\delta}})$ by Theorem~\ref{simupper:thm}. Simple calculation shows that:

\[ \frac{1}{\delta_v} = \frac{\lceil \ln{\frac{s\ln{\frac{n}{\delta}}}{\ln{\frac{1}{\delta}}}} \rceil}{\delta} \leq \frac{1}{\delta} \left( \ln{s} +  \ln{(\ln{\frac{n}{\delta}})}\right) .\]

Therefore, 
\[r_v = O\left(\epsilon^{-2}c\left(\ln{\frac{1}{\delta}} +   \ln \left(\ln\ln \frac{n}{\delta}+ \ln s\right)\right)\right) .\]

\textbf{averaging oracles analysis}

We have $r_x = O(\epsilon^{-2}c\delta^{-1})$ and we set $M = O \left( \epsilon^{-2} s n \ln{\frac{n}{\delta}} \right)$ according to the respective worst-case guarantees on the number of simulations specified in \eqref{naive:eq}. A simple calculations shows:

\[ \frac{1}{\delta_v} = \frac{\lceil \ln{\frac{\delta s n \ln{\frac{n}{\delta}}}{c}}\rceil }{\delta} \leq \frac{1}{\delta}\left( \ln\ln{\frac{n}{\delta}} + 2\ln{n} \right) \] 

Therefore,
\[r_v = O\left(\epsilon^{-2}c\left(\ln{\frac{1}{\delta}} + \ln \left( \ln\ln \frac{n}{\delta}+ \ln n \right) \right)\right) .\]

\end{proof}

  We next consider cases where the algorithm ${\mathcal A}$ is approximate (may not return the oracle optimizer). We assume in these cases that the optimization oracles $\hat{F}_x$ when constructed with a given number of simulations provide, with high probability,  uniform $\epsilon$-approximation for all $\binom{n}{s}$ subsets of cardinality at most $s$:
  \begin{equation*} \forall T \text{ such that } |T|<s,\ 
\left| \hat{F}(T) - \I^\tau(T) \right| \leq
                      \epsilon \max\{\I^\tau(T),\OPT^\tau_1\} \ .
  \end{equation*}

  We first show that a very weak assumption on $\mathcal{A}$ suffices to guarantee termination with good probability.
 \begin{lemma} \label{termination:lemma}
If the optimization oracle $\hat{F_x}$ when constructed with $r$ or more
simulations provides uniform $\epsilon$-approximation with probability at 
least $1-\delta$, and the algorithm
   ${\cal A}$ returns $T$
such that $\hat{F_x}(T) \geq
(1 - \epsilon)\OPT_1^\tau$. 
Then with probability at least $1-2\delta$ the 
wrapper will terminate after using at most $2\max\{r,r_x(\epsilon,\delta)\}+r_v$ simulations.
\end{lemma}
\begin{proof}
Consider the first iteration where $\hat{F_x}$ is constructed
using at least $r$ simulations. Let $T$ be the set that ${\mathcal A}$ returns at this iteration.
Since $\hat{F}_x$ provides uniform $\epsilon$-approximation we have that $\hat{F_x}(T)\leq
(1+\epsilon) \I^\tau(T)$ with probability at least $1 - \delta$.  Combining this with our assumption we get that $\I^{\tau}(T) \geq \OPT^{\tau}_1$ and by Lemma $\ref{validation:lemma}$ we have that with probability at least $1 - \delta$ if $\I^{\tau}(T) \geq \OPT_1^{\tau}$ then $\hat{F}(T) \geq (1 - \epsilon)\I^{\tau}(T)$ and if $\frac{1-\epsilon}{1+\epsilon}\OPT^{\tau}_1 \leq \I^\tau(T) \leq \OPT^{\tau}_1$ then $\hat{F}_v(T) \geq \I^\tau(T) - \epsilon \OPT^{\tau}_1 \geq \I^{\tau}(T) - \frac{\epsilon(1 + \epsilon)}{1 - \epsilon}\I^{\tau}(T)$. Combining we obtain that $\hat{F}_v(T) \geq \frac{1-2\epsilon}{1+\epsilon}\hat{F}_x(T)$, and thus the validation condition holds.
\end{proof}  

We next consider algorithms ${\cal A}$ that guarantees some approximation ratio $\rho$.
  \begin{theorem}\label{weak alg:theorem}
  Suppose that our optimization oracle when constructed with $r$ or more simulations provides uniform $\epsilon$-approximation with probability at least $1-\delta$. Assume now that the algorithm ${\cal A}$ returns a set $T$
such that 
\[\hat{F_x}(T) \geq \rho \max_{S \mid |S|\leq s}
\hat{F_x}(S)\ .\]
Then,
the set $T$ returned by our wrapper satisfies
$\I^\tau(T) \geq \rho(1-5\epsilon)\OPT^\tau_s$ with probability of at least $(1-3\delta)$. 
\end{theorem}
\begin{proof}
Consider an optimal set $S$ (with $\I^\tau(S)= \OPT^\tau_s$). By Lemma \ref{onesided:lemma} with probability at least $1-2\delta$ all
our oracles have $(1 - \epsilon)\OPT^{\tau}_s \leq \hat{F_x}(S) \leq (1 + \epsilon)\OPT^{\tau}_s $ are within $(1\pm
\epsilon)\OPT^\tau_s$. By the assumption, the sets $T$ returned by ${\cal A}$ in all iterations have
$\hat{F_x}(T) \geq \rho \max_{S \mid |S|\leq s}
\hat{F_x}(S) \geq \rho (1-\epsilon) \OPT^\tau_s$. When the wrapper stops we have that $\hat{F_x}(T) \leq \frac{1 + \epsilon}{1-2\epsilon} \hat{F_v}(T)$ and by Lemma \ref{validation:lemma} we have with probability at least $1-\delta$ that $\hat{F_v}(T) \leq \max\{ (1+\epsilon)\I^\tau(T), \I^\tau(T) + \epsilon \OPT^\tau_1 \}$.

Combining, we have that with probability at least $1 - 3\delta$,

\[ \rho(1-\epsilon)\OPT_s^\tau \leq \rho \hat{F_x}(S) \leq \hat{F_x}(T) \leq \frac{1+\epsilon}{1 - 2\epsilon} \hat{F_v}(T) \leq \frac{1+\epsilon}{1 - 2\epsilon}\max\{ (1 - \epsilon)\I^\tau(T), \I^\tau(T) + \epsilon \OPT^\tau_s \} .\]

Now, a simple calculation shows that $\I^\tau(T) \geq \rho(1 - 5\epsilon)\OPT^\tau_s$.

\end{proof}

We can prove now Theorem~\ref{greedyadaptive:thm} (restated for reading fluency):
\begin{theorem} [Theorem~\ref{greedyadaptive:thm}]
If the averaging oracle $\eA$  has the
property that with $\geq r$ simulations, with probability at least $1-\delta$,  it provides a uniform  $\epsilon$-approximation for all subsets of size at most $s$,  then with
$2\max\{r,r(\epsilon,\delta)\} + O\left(\epsilon^{-2}c\left(\ln{\frac{1}{\delta}} + \ln \left( \ln\ln \frac{n}{\delta}+ \ln n \right) \right)\right)$ simulations we can find in
polynomial time a
$(1-(1-1/s)^s)(1 - 5\epsilon)$ approximate solution with confidence $1-5\delta$.
\end{theorem}
\begin{proof} 
The averaging oracle is monotone and submodular~\cite{KKT:KDD2003}
and therefore
greedy can efficiently recover a set $T$ such that
$\hat{F_x}(T) \geq (1-(1-1/s)^s) \max_{S \mid |S|\leq S}
\hat{F_x}(S)$.

By Lemma \ref{termination:lemma}, the wrapper terminates using at most $2\max\{r,r_x(\epsilon,\delta)\}+r_v$ with probably at least $1 - 2\delta$. Applying Theorem \ref{weak alg:theorem} with $\rho = (1-(1-1/s)^s)$, we get that $\I^\tau(T) \geq (1-(1-1/s)^s)(1-5\epsilon)\OPT^\tau_s$ with probability at least $1 - 3\delta$. Hence, 
with probability at least $1 - 5\delta$ the wrapper applied with greedy finds $(1-(1-1/s)^s)(1-5\epsilon)$-approximate solution
using $2\max\{r,r_x(\epsilon,\delta)\}+r_v$ simulations.
\end{proof}

\section{Variance bounds for dependent models} \label{bdependence:sec}
In this section we provide a proof for Corollary \ref{extendIC:coro}.
We consider a 
natural extensions of IC models, $b$-dependence, that allow for
some dependencies between edges and mixtures of IC and IGT models.   For these extensions, we establish upper bounds
of the form \eqref{factorc:eq}
on the variance of the reachability of a set of nodes.

We bound the variance by constructing for each dependent model a
corresponding IC model and then apply the variance upper bound
established in Section~\ref{varUBproof:sec} for IC models.

For mixture models we provide a generic derivation that bounds the
variance of the mixture by variance of components.

\subsection{$b$-dependence models}
The first family we consider are
{\em $b$-dependence} models, which we define
as follows. We assume that all edges 
with the same tail node are partitioned into disjoint groups
where each  group is of size 
 at most $b$.
  The edges of each group $B$ are either all active together with probability $p_B$ or none is active with probability $1-p_B$.
The special case where all groups are of size $1$
 corresponds to an IC model (where all edges are independent). 
 

\begin{theorem}
Let $\mathcal{G}$ be a $b$-dependence model for some $b \geq 1$.
For every set $T$ we have that:
\[ \Var[\RReach^{\tau}_{\mathcal{G}}(T)] \leq 2b \tau \I^{\tau}_{\mathcal{G}}(T) \max_{v \in V \setminus T} {\I^{\tau}_{\mathcal{G}}(v)} \]
\end{theorem}\label{variance in dependence mode}
\begin{proof}
We construct an IC model $\mathcal{G}'$ from the given $b$-dependence model
$\mathcal{G}$.
The model $\mathcal{G}'$
is defined over the set of nodes $V$ of $\mathcal{G}$ together with an additional set $D$ of dummy nodes.
The construction has the properties that
$2\tau$-step influence in $\mathcal{G}'$ from a set of nodes $T\subseteq \ V$ is equal to $\tau$-stepped influence of $T$ in $\mathcal{G}$.
Furthermore, the variances
 of  the sizes of the 
 $2\tau$-step reachbility of $T$ in $\mathcal{G}'$
is the same as the variance of the $\tau$ step reachability of $T$ in $\mathcal{G}$.
The influence of each 
dummy node in $\mathcal{G'}$ is at most $b \max_{v \in V} \I^\tau(v)$.  The claim follows from these properties and Theorem \ref{var_upper_bound:thm}.

Here is a formal description of our reduction.

We start by putting in 
 $\mathcal{G}'$ 
 the set $V$ of the nodes of $\mathcal{G}$.  
Then 
for every group  $B = \{(u, v_1), (u, v_2), ..., (u, v_\ell)\}$  in $\mathcal{G}$ we do the following:
\begin{enumerate}
    \item Add a new dummy node $v_{B}$ to $\mathcal{G}'$, and add to $\mathcal{G}'$ the edge $(u, v_B)$ 
    and give it the probability $p_B$.
We assign weight $0$ to $v_B$ so that it does not contribute to the reachability of any set of nodes.

    \item we create edges $(v_B, v_i)$ for every $1\le i \le \ell$, each such edge has probability 1. 
\end{enumerate}

Let $T\subset V$ be a set of  nodes in $\mathcal{G}$.
It follows from our construction that 
for any set of nodes $B\subset V$
the probability that 
$\RReach^{2\tau}_{\mathcal{G}'}(T) = B$
is the same as the probability that
$\RReach^{\tau}_{\mathcal{G}}(T) = B$.
This implies that 
for any $T\subseteq V$
\[ \I^{2\tau}_{\mathcal{G}'}(T) =
\I^{\tau}_{\mathcal{G}}(T)\ , 
\]
and 
\[\Var[\RReach^{2\tau}_{\mathcal{G}'}(T)] = 
\Var[\RReach^{\tau}_{\mathcal{G}}(T)] \ .
\]

Each  dummy node is connected to at most $k$ original nodes, hence, 
 $ \I^{2\tau}_{\mathcal{G}'}(v)$ is bounded by 
$b \max_{v \in V} \I^{2\tau}_{\mathcal{G}'}(v)$. By Theorem~\ref{bound variance in graph} it follows that for every set of nodes $T$ in $\mathcal{G'}$:
\[ \Var[\RReach^{2\tau}_{\mathcal{G}'}(T)] \leq 2\tau \I^{2\tau}_{\mathcal{G}'}(T)  \max_{v \in V \setminus T} {\I^{2\tau}_{\mathcal{G}'}(v) . } \]

Combining all these observations together, we get that
\[ \Var[\RReach^{\tau}_{\mathcal{G}}(T)] \leq 2b \tau \I^{\tau}_{\mathcal{G}}(T) \max_{v \in V \setminus T} {\I^{\tau}_{\mathcal{G}}}(v) \]
\end{proof}

This Theorem can be generalized to  more complex  dependencies.
For example it holds for 
any distribution on subsets of the outgoing edges from each node that we can  realize by a distribution on disjoint subsets where we draw each subset  with certain probability, and take the union of the subset which we draw.

\subsection{Mixture of IC and IGT models}
The second family of dependent models we consider is a mixture of IC
and IGT models.

Consider a set of models $\mathcal{G}_i(V)$ for $i\in [r]$ and respective probabilities $p_i$ such that $\sum_{i=1}^r p_i =1$. 
We define a mixture model $\mathcal{G}(V)$ as follows. 
To draw $\bphi \sim \mathcal{G}$,  we first draw $i\in [r]$ according to
probabilities $p_i$ and then return $\bphi\sim \mathcal{G}_i$.

We provide two proofs for the variance bound of the mixture.  
The first is direct  and applies to any mixture of models that satisfies the variance
bound of Theorem~\ref{var_upper_bound:thm}), and in particular to
mixtures of strongly submodular SDMs.  The second proof is specific
to live-edge models and based on a reduction to an IC model.
\begin{theorem}
Consider a model $\mathcal{G}$ that is a mixture of $r$ models
$\mathcal{G}_i$ with probabilities $p_i$ that satisfy the variance
bound of Theorem~\ref{var_upper_bound:thm}.  Then for all 
$T\subset V$, 
\[ \Var[\RReach^{\tau}_{\mathcal{G}}(T)] \leq 
\frac{\tau + 1}{\min_i p_i} \I^\tau_{\mathcal{G}}(T) \max_{v \in V} {\I^{\tau}_{\mathcal{G}}(v)}\]

\end{theorem}
\begin{proof}

We first relate the influence of $T$ in the mixture model to the influence of $T$ in the components.
\begin{equation} \label{mixsum:eq}
\I^\tau_{\mathcal{G}}(T) = \E[\RReach^{\tau}_{\mathcal{G}}(T)]=  \sum_{i=1}^r p_i \E[\RReach^{\tau}_{\mathcal{G}_i}(T)]=  \sum_{i=1}^r p_i \I^{\tau}_{\mathcal{G}_i}(T)\ .
\end{equation}
This holds to any set $T$ and any $\tau$.  Therefore we also obtain the inequality
\begin{equation} \label{Mbound:eq}
\M^\tau_{\mathcal{G}}(\bar{T}) = \max_{v\in V\setminus T} I^\tau_{\mathcal{G}}(v) = \max_{v\in V\setminus T} \sum_{i=1}^r p_i \I^{\tau}_{\mathcal{G}_i}(v) \leq
\sum_{i=1}^r p_i \max_{v\in V\setminus T} \I^{\tau}_{\mathcal{G}_i}(v)=
\sum_{i=1}^r p_i \M^{\tau}_{\mathcal{G}_i}(\bar{T})\ .
\end{equation}
It also follows that  we can bound the influence values on the component by the respective ones in the mixture:
$\I^{\tau}_{\mathcal{G}_i}(T) \leq \frac{1}{p_i} \I^\tau_{\mathcal{G}}(T)$ and thus
\begin{eqnarray} 
    \max_i \I^{\tau}_{\mathcal{G}_i}(T) &\leq& \frac{1}{\min_i p_i} \I^\tau_{\mathcal{G}}(T) \label{componentIbound:eq} \\
    \max_i \M^{\tau}_{\mathcal{G}_i}(\overline{T}) &\leq& \frac{1}{\min_i p_i} \M^\tau_{\mathcal{G}}(\overline{T})\ . \label{componentMbound:eq}
\end{eqnarray}

The random variable $\RReach^{\tau}_{\mathcal{G}}(T)$  can be expressed as a sum of 
of $r$ products of random variables:
$$\RReach^{\tau}_{\mathcal{G}}(T) = \sum_{i=1}^r X_i \RReach^{\tau}_{\mathcal{G}_i}(T)\ ,$$ where $X_i$ are Bernoulli with probabilities $p_i$.  The random variables  $\{\RReach^{\tau}_{\mathcal{G}_i}(T)\}$
are independent of each other and also are independent from (the joint distribution of) $\{X_i\}$.  The variables $\{X_i\}$ have negative dependence as $\sum_i X_i =1$ and thus
the products 
$X_i \RReach^{\tau}_{\mathcal{G}_i}(T)$
are also negatively dependent and thus
\[\Var[\RReach^{\tau}_{\mathcal{G}}(T)] \leq \sum_i \Var[X_i \RReach^{\tau}_{\mathcal{G}_i}(T)]\ .\]

We will instead bound the variance of a surrogate random variable 
\[ Y = \sum_i X_i \RReach^{\tau}_{\mathcal{G}_i}(T) \] that has the same sum of products but with the variables $\{X_i\}$ being independent of each other and hence the products are also independent. 
We have 
\begin{equation} \label{varinequ:eq}
\Var[Y] = \sum_i \Var[X_i \RReach^{\tau}_{\mathcal{G}_i}(T)] \geq \Var[\RReach^{\tau}_{\mathcal{G}}(T)] 
\end{equation}

We next express the variance of each product using variance properties of the product of two independent random variables. For $i\in [r]$:
\begin{eqnarray*}
\Var[X_i \RReach^{\tau}_{\mathcal{G}_i}(T)] &=& \Var[X_i]\Var[\RReach^{\tau}_{\mathcal{G}_i}(T)]+
\E[X_i]^2\Var[\RReach^{\tau}_{\mathcal{G}_i}(T)]^2+
\Var[X_i]\E[\RReach^{\tau}_{\mathcal{G}_i}(T)]^2 \\
&=& p_i(1-p_i) \Var[\RReach^{\tau}_{\mathcal{G}_i}(T)] + p_i^2  \Var[\RReach^{\tau}_{\mathcal{G}_i}(T)] + p_i(1-p_i)\I^\tau_{\mathcal{G}_i}(T)^2 \\
&=& p_i \Var[\RReach^{\tau}_{\mathcal{G}_i}(T)] + p_i(1-p_i) 
\I^\tau_{\mathcal{G}_i}(T)^2\ .
\end{eqnarray*}

Therefore,
invoking Theorem~\ref{var_upper_bound:thm} to bound the variance 
for each IC model $\mathcal{G}_i$ and then using \eqref{componentIbound:eq} and  \eqref{componentMbound:eq} and finally using \eqref{mixsum:eq} and \eqref{Mbound:eq} we get
\begin{eqnarray*}
\Var[Y]&=& \sum_{i=1}^r p_i \left( \Var[\RReach^{\tau}_{\mathcal{G}_i}(T)] + p_i(1-p_i) 
\I^\tau_{\mathcal{G}_i}(T)^2\right)\\
&\leq& \sum_{i=1}^r p_i \Var[\RReach^{\tau}_{\mathcal{G}_i}(T)] +  \sum_{i=1}^r p_i \I^\tau_{\mathcal{G}_i}(T)^2\\
&\underbrace{\leq}_\text{Theorem~\ref{var_upper_bound:thm}}& \sum_{i=1}^r p_i \tau \M^{\tau-1}_{\mathcal{G}_i}(\bar{T}) \I^\tau_{\mathcal{G}_i}(T)+  \sum_{i=1}^r p_i \I^\tau_{\mathcal{G}_i}(T)^2\\
&\underbrace{\leq}_\text{\eqref{componentIbound:eq},  \eqref{componentMbound:eq}}& \frac{\tau}{\min_i p_i}\I^\tau_{\mathcal{G}}(T) \sum_{i=1}^r p_i  \M^{\tau-1}_{\mathcal{G}_i}(\bar{T}) +  \frac{1}{\min_i p_i}\I^\tau_{\mathcal{G}}(T)\sum_{i=1}^r p_i \I^\tau_{\mathcal{G}_i}(T) \\
&\underbrace{\leq}_\text{\eqref{mixsum:eq} , \eqref{Mbound:eq}}& \frac{1}{\min_i p_i}\I^\tau_{\mathcal{G}}(T) \left( \tau \M^{\tau-1}_{\mathcal{G}}(\bar{T}) + \I^\tau_{\mathcal{G}}(T)\right) \leq \frac{\tau + 1}{\min_i p_i} \I^\tau_{\mathcal{G}}(T) \max_{v \in V} {\I^{\tau}_{\mathcal{G}}(v)}
\end{eqnarray*}

\end{proof}

We next give a different proof (of a slightly different bound) for
live-edge models using a reduction to an IC model. 
Consider a set of 
$\tau$-steps models $\mathcal{G}_i(V,\mathcal{E}_i)$ for $i\in [r]$ and respective probabilities $p_i$ such that $\sum_{i=1}^r p_i =1$. 
We define a mixture model $\mathcal{G}(V,\bigcup_i \mathcal{E}_i)$ as follows. 
To draw $E\sim \mathcal{G}$,  we first draw $i\in [r]$ according to probabilities $p_i$ and then return $E\sim \mathcal{G}_i$. 
\begin{theorem}
Consider a model $\mathcal{G}$ that is a mixture of $r$ IC models $\mathcal{G}_i$ with probabilities $p_i$. Then for all 
$T\subset V$, 
\[ \Var[\RReach^{\tau}_{\mathcal{G}}(T)] \leq  \frac{\tau+1}{\min_i p_i} \I^{\tau}_{\mathcal{G}}(T) \max\{ \I^\tau_{\mathcal{G}}(T),\max_{v \in V} {\I^{\tau}_{\mathcal{G}}(v)} \}\]

\end{theorem}
\begin{proof}
We first argue that we can assume without loss of generality that $T$ is a single node and $\bigcup_i \mathcal{E}_i$ does not contain edges that are incoming to $T$.
We can transform a general case $\mathcal{G}$ and $T$  to this form by contracting all nodes in $T$ into a single node and deleting all edges that are incoming to $T$. We then retain the same conditional distribution on the remaining edges.  Note that this transformation preserves the distribution of 
$\RReach^{\tau}_{\mathcal{G}}(T)$ and hence also its expectation and variance.
The influence values 
$I^{\tau}_{\mathcal{G}}(v)$ of nodes $v\in V\setminus T$ can only decrease.  Finally, the transformed model is also a mixture of correspondingly transformed IC models, where in each such model 
the distribution of 
$\RReach^{\tau}_{\mathcal{G}_i}(T)$ remains the same and influence values
 $I^{\tau}_{\mathcal{G}_i}(v)$ can only decrease.  It follows that the claimed variance bound for the transformed model implies the same bound for the original model.

 We construct a new IC model $\mathcal{G'}$ with respect to (a single node) $T$ as follows.
 The new model has nodes
 $V' = \{v\} \cup \bigcup_i V_i$, where each $V_i$ is a map of $V$.
 We create an instantiation of each of our IC models $\mathcal{G}_i$ with set of nodes $V_i$ and edges $\mathcal{E}_i$ with the probabilities as in the model $\mathcal{G}_i$. 
 The new IC model $\mathcal{G'}$ has a root node $v$ with weight $0$ and for each $i\in [r]$, there is an edge $(v,T_i)$ with probability $p_i$, where $T_i$ is the image of $T$ in the copy of $\mathcal{G}_i$.  We can see that 
 \begin{equation} \label{mixsum:eq}
 I^{\tau+1}_{\mathcal{G'}}(v) = \I^\tau_{\mathcal{G}}(T) = \sum_{i=1}^r p_i \I^{\tau}_{\mathcal{G}_i}(T)\ ,
\end{equation}
that is, the $\tau+1$ steps influence of $v$ in the constructed IC model $\mathcal{G'}$ is equal to the $\tau$ steps influence of $T$ in the mixture model $\mathcal{G}$. 

We next consider the variance of the random variables  $\RReach^{\tau}_{\mathcal{G}}(T)$ and 
$\RReach^{\tau+1}_{\mathcal{G'}}(v)$.  Both these random variables are a sum of $r$ products of random variables:
$$\sum_{i=1}^r X_i \RReach^{\tau}_{\mathcal{G}_i}(T)\ ,$$ where $X_i$ are Bernoulli with probabilities $p_i$.  In both cases the random variables  $\{\RReach^{\tau}_{\mathcal{G}_i}(T)\}$
are independent of each other and also are independent from (the joint distribution of) $\{X_i\}$.  
But in the case of $\RReach^{\tau+1}_{\mathcal{G'}}(v)$ the random variables $X_i$ are independent and hence also the products are independent and
in the case of
$\RReach^{\tau}_{\mathcal{G}}(T)$,  the variables $\{X_i\}$ have negative dependence as $\sum_i X_i =1$ and thus
the products 
$X_i \RReach^{\tau}_{\mathcal{G}_i}(T)$
are also negatively dependent. Therefore, 
\begin{equation} \label{varinequ:eq}
\Var[\RReach^{\tau}_{\mathcal{G}}(T)] \leq \Var[\RReach^{\tau+1}_{\mathcal{G'}}(v)] = \sum_i \Var[X_i \RReach^{\tau}_{\mathcal{G}_i}(T)]\ .
\end{equation}
Finally, we bound
$\M^{\tau}_{\mathcal{G}'}(\bar{v})$ by considering the maximum influence of a node other than $v$ in the constructed model $\mathcal{G}'$.
For $T_i$ we have  
\begin{equation} \label{subrootbound:eq} \I^{\tau}_{\mathcal{G}'}(T_i)=
\I^{\tau}_{\mathcal{G}_i}(T)
\leq \frac{1}{p_i} 
\I^\tau_{\mathcal{G}}(T) \ ,
\end{equation}
where the last inequality follows from \eqref{mixsum:eq}.
We next consider nodes $z_i \in V_i$ that is a map of a node $z\in V$.  
\begin{equation} \label{mapnodebound:eq}
\I^{\tau}_{\mathcal{G}'}(z_i)=
\I^\tau_{\mathcal{G}_i}(z)
\leq \frac{1}{p_i} \I^{\tau}_{\mathcal{G}}(z)\ .
\end{equation}
The last inequality follows because for
any node $z\in v$ we have
$\I^\tau_{\mathcal{G}}(z) = \sum_{i=1}^r p_i I^\tau_{\mathcal{G}_i}(z)$. 
Combining \eqref{subrootbound:eq} and \eqref{mapnodebound:eq} we get
\begin{equation} \label{Mbound:eq}
 \M^{\tau}_{\mathcal{G}'}(\bar{v}) = \max_{u\in V'\setminus \{v\}} \I^{\tau}_{\mathcal{G}'}(u) \leq \max_{u\in V}\max_i  \I^\tau_{\mathcal{G}_i}(u) \leq  \max_{u\in V} \frac{1}{\min_i p_i} \I^\tau_{\mathcal{G}}(u) = 
\frac{1}{\min_i p_i} \max\{I^\tau_{\mathcal{G}}(T),\max_{z\in V} \I^{\tau}_{\mathcal{G}}(z)\}\ .
\end{equation}
To conclude, we invoke Theorem~\ref{var_upper_bound:thm} for the IC model $\mathcal{G}'$:
\[\Var[\RReach^{\tau}_{\mathcal{G}}(T)]   \underbrace{\leq}_\text{\eqref{varinequ:eq}} \Var[\RReach^{\tau+1}_{\mathcal{G'}}(v)] \underbrace{\leq}_\text{Theorem~\ref{var_upper_bound:thm}} (\tau+1) I^{\tau+1}_{\mathcal{G}'}(v) \M^{\tau}_{\mathcal{G}'}(\bar{v})\ .
\]
We then apply inequalities \eqref{Mbound:eq} and the equality \eqref{mixsum:eq} to obtain the claim.
 \end{proof}











 \end{document}